%% file: main.tex
\documentclass{article}

\usepackage{microtype}
\usepackage{graphicx}
\usepackage{subcaption}
\usepackage{booktabs} 

\usepackage{hyperref}

\usepackage[accepted]{icml2026}

\usepackage{amsmath}
\usepackage{amssymb}
\usepackage{mathtools}
\usepackage{amsthm}

\usepackage[capitalize,noabbrev]{cleveref}

\theoremstyle{plain}
\newtheorem{theorem}{Theorem}
\newtheorem{proposition}[theorem]{Proposition}
\newtheorem{lemma}[theorem]{Lemma}

\theoremstyle{definition}

\theoremstyle{remark}

\usepackage[textsize=tiny]{todonotes}

\icmltitlerunning{\ours: Robust Representation for Intensive Experience Reuse via Redundancy Reduction in Self-Predictive Learning}

\usepackage{xspace}
\usepackage{enumitem}
\usepackage{xurl}
\usepackage{multirow}
\usepackage{rotating}
\usepackage{array}
\usepackage{xcolor}
\usepackage{soul}
\usepackage{tcolorbox}

\newcommand{\ours}{R2R2\xspace}
\renewcommand{\algorithmiccomment}[1]{\textcolor{green!50!black}{$\triangleright$ #1}}

\begin{document}
\twocolumn[
  \icmltitle{\ours: Robust Representation for Intensive Experience Reuse\\via Redundancy Reduction in Self-Predictive Learning}



  \icmlsetsymbol{equal}{*}

  \begin{icmlauthorlist}
    \icmlauthor{Sanghyeob Song}{ipai,sem}
    \icmlauthor{Donghyeok Lee}{ipai}
    \icmlauthor{Jinsik Kim}{ece}
    \icmlauthor{Sungroh Yoon}{ipai,ece}
  \end{icmlauthorlist}

  \icmlaffiliation{ipai}{Interdisciplinary Program in Artificial Intelligence, Seoul National University}
  \icmlaffiliation{ece}{Department of Electrical and Computer Engineering, Seoul National University}
  \icmlaffiliation{sem}{Samsung Electro-Mechanics, Republic of Korea}

  \icmlcorrespondingauthor{Sungroh Yoon}{sryoon@snu.ac.kr}

  \icmlkeywords{Machine Learning, ICML}

  \vskip 0.3in
]



\begin{NoHyper}
\printAffiliationsAndNotice{}  
\end{NoHyper}

\begin{abstract}
    \input{sections/abstract}
\end{abstract}

\section{Introduction}
    \input{sections/introduction}

\section{Related Works}
    \input{sections/related_works}

\section{Preliminaries}
    \input{sections/preliminaries}

\section{Method}
    \input{sections/method}

\section{Experiments}
    \input{sections/experiments}

\section{Conclusion}
    \input{sections/conclusion}

\section{Limitations and Future Work}
    \input{sections/limitations_and_future_work}

\clearpage

\section*{Acknowledgements}
    This work was supported by Samsung Electro-Mechanics.
        
    This work was supported by Institute of Information \& communications Technology Planning \& Evaluation (IITP) grant funded by the Korea government(MSIT) [NO.RS-2021-II211343, Artificial Intelligence Graduate School Program (Seoul National University)]
        
    This work was supported by the National Research Foundation of Korea (NRF) grant funded by the Korea government (MSIT) (No. 2022R1A3B1077720, 2022R1A5A7083908)
        
    This work was supported by the BK21 FOUR program of the Education and Research Program for Future ICT Pioneers, Seoul National University in 2026.

\section*{Impact Statement}
This paper presents work whose goal is to advance the field of Machine
Learning. There are many potential societal consequences of our work, none
which we feel must be specifically highlighted here.

\bibliography{reference}
\bibliographystyle{icml2026}

\newpage
\appendix
\onecolumn
\section{Proofs}
    \input{appendix/proofs}

\clearpage
\section{Hyperparameters}
    \input{appendix/hyperparams}

\section{Environments}
    \input{appendix/environments}

\section{Evaluation Metric Details}
    \input{appendix/metrics}

\section{Detailed Experimental Results}
    \input{appendix/full_results}


\end{document}

%% file: sections/abstract.tex
\label{abstract}
For reinforcement learning in data-scarce domains like real-world robotics, intensive data reuse enhances efficiency but induces overfitting. While prior works focus on critic bias, representation-level instability in Self-Predictive Learning (SPL) under high Update-to-Data (UTD) regimes remains underexplored. To bridge this gap, we propose Robust Representation via Redundancy Reduction (\ours), a regularization method within SPL. We theoretically identify that standard zero-centering conflicts with SPL's spectral properties and design a non-centered objective accordingly. We verify \ours on SPL-native algorithms like TD7. Furthermore, to demonstrate its orthogonality to prior advancements, we extend the state-of-the-art SimbaV2, which originally lacks SPL, by integrating a tailored SPL module, termed SimbaV2-SPL. Experiments across 11 continuous control tasks confirm that \ours effectively mitigates overfitting; specifically, at a UTD ratio of 20, it improves TD7 by $\sim$22\% and provides additional gains on top of SimbaV2-SPL, which itself establishes a new state-of-the-art. The code can be found at:~\url{https://github.com/songsang7/R2R2}

%% file: sections/introduction.tex
\label{sec:intro}

\begin{figure}[t]
    \centering
    \includegraphics[width=1.0\linewidth]{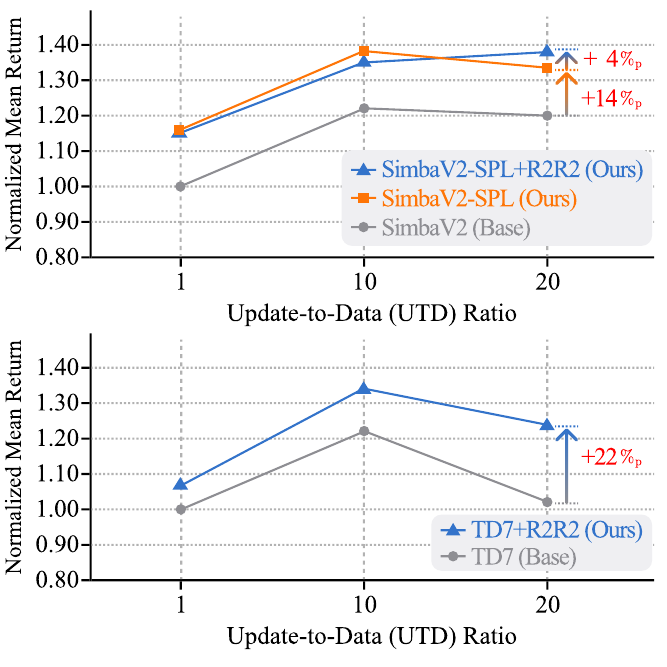}
    \caption{Performance comparison across UTD ratios 1, 10, and 20. Our method demonstrates robustness with minimal loss or gains in high UTD regimes. Notably, our proposed SimbaV2-SPL outperforms the current state-of-the-art, SimbaV2~\cite{SimbaV2}, and \ours achieves further performance gains on top of this enhanced baseline.}
    \label{fig:teaser}
\end{figure}

Historically, reinforcement learning (RL) has evolved to improve sample efficiency. Off-policy methods~\cite{DQN2015, DDPG, TD3, SAC, SACv2} have achieved this by reusing past transitions via an experience replay buffer, while model-based RL methods~\cite{DynaQ, MBPO, WorldModel} have done so by generating synthetic experiences from a learned model. Subsequently, Self-Predictive Learning (SPL)~\cite{SPR, CPC} has emerged as a promising framework, employing an auxiliary task to predict the latent representation of the next state conditioned on the current state and action. SPL maximizes data efficiency by providing additional information about environmental dynamics for actor-critic training. Parallel to these advancements, another approach for improving sample efficiency is to intensively reuse collected experiences by increasing the Update-to-Data (UTD) ratio, defined as the number of gradient updates per environment interaction. To enable such intensive updates while mitigating overfitting, most prior works operate within the model-free paradigm, primarily focusing on addressing value estimation bias. Prominent methods such as REDQ~\cite{REDQ} and CrossQ~\cite{CrossQ} have successfully stabilized training by using ensembles or normalization.

However, these value-centric advancements primarily target the critic's output and do not explicitly address the representation-level overfitting. Consequently, the intersection of high UTD training and SPL remains underexplored. We argue that addressing representation-level instability is an orthogonal challenge to mitigating value bias; while high UTD regimes induce overfitting across all components, existing value-centric strategies are insufficient to prevent the representational degradation within the SPL encoder and latent dynamics model.

To address this, we propose Robust Representation via Redundancy Reduction (\ours), a regularization approach incorporating redundancy reduction principles~\cite{BarlowRR, BarlowTwins, VICReg} to stabilize SPL performance across varying update frequencies, ranging from standard to high UTD settings ($\text{UTD}\approx20$). Fundamentally, we diverge from standard redundancy reduction methods~\cite{BarlowTwins, VICReg} in computer vision by avoiding zero-centering. Building on the theoretical insight that SPL performs spectral decomposition of transition dynamics~\cite{bi-SPL}, we identify that centering mathematically eliminates the dominant eigenvector corresponding to the global dynamics information. Consequently, \ours employs a non-centered regularization scheme, preserving this information while ensuring robust feature learning even with intensive experience reuse. Crucially, since our approach targets the regularization of the encoder rather than the value function, our representation-level intervention is orthogonal to, and thus naturally compatible with, existing value-centric high UTD strategies.

To verify the efficacy and universality of \ours, we first applied it to SPL-native algorithms such as TD7~\cite{TD7}. On standard benchmarks, our method effectively mitigates performance degradation at high-UTD ($\text{UTD}=20$), and improves the aggregate normalized score of TD7 by approximately 22\%. This advantage is even more pronounced in DMC-Hard~\cite{SimBa}, a challenging subset of the standard DMC suite~\cite{DMC, DMC2}, where \ours significantly boosts the baseline performance (1.02 → 1.32). Furthermore, to demonstrate its orthogonality to architectural advancements, we targeted the state-of-the-art SimbaV2~\cite{SimbaV2}. Since our approach necessitates an SPL framework, we first integrated a tailored SPL module into this architecture, termed SimbaV2-SPL. This integration alone establishes a new state-of-the-art performance on continuous control benchmarks. Crucially, applying \ours to this enhanced baseline yields further gains, reaching a normalized score of 1.38, thereby confirming that our \ours provides complementary benefits even to the strongest architectures.

In summary, the contributions of this paper are:
\begin{itemize}[topsep=0pt, leftmargin=*]
    \setlength{\itemsep}{0pt}
    \item \textbf{(Method)} We propose \ours, a regularization approach grounded in our theoretical analysis. By leveraging redundancy reduction principles, our method explicitly mitigates representation-level instability caused by intensive experience reuse in high UTD regimes.
    
    \item \textbf{(Theory)} We provide a theoretical analysis identifying a fundamental conflict between the spectral decomposition properties of SPL and feature centralization (zero-centering). We demonstrate that zero-centering eliminates the dominant eigenmode representing global dynamics.

    \item \textbf{(Architecture)} We construct SimbaV2-SPL by integrating a tailored SPL framework into the state-of-the-art, SimbaV2. Distinct from our proposed regularization, this architectural extension bridges the gap between latent dynamics modeling and modern model-free backbones.

    \item \textbf{(Performance)} We demonstrate that \ours generally improves various backbone algorithms, particularly under high UTD regimes. Furthermore, our proposed architecture, SimbaV2-SPL, alone establishes a new state-of-the-art performance on continuous control benchmarks and on top of this enhanced baseline, SimbaV2-SPL +\ours achieves additional performance gains.
\end{itemize}

%% file: sections/related_works.tex
\label{sec:related_works}
\subsection{Update-to-Data Ratio}
\label{sec:related_works:UTD}
    Standard off-policy algorithms~\cite{DQN2015, DDPG, TD3, SAC, SACv2} typically limit the Update-to-Data (UTD) ratio to 1 to avoid overfitting caused by intensive data reuse. However, recent trends favor high UTD ratios (e.g., $\text{UTD}\approx20$). To mitigate this overfitting, Dyna-style approaches augment training with synthetic transitions~\cite{DynaQ, MBPO, MAD-TD}, while model-free methods utilize randomized ensembles~\cite{REDQ} or successfully incorporate Batch Normalization~\cite{CrossQ, BatchNorm} to mitigate bias. More recently, attention has shifted toward architectural innovations; SimBa~\cite{SimBa} and BRO~\cite{BRO} demonstrate that regularization techniques like Layer Normalization~\cite{LayerNorm} and Dropout~\cite{Dropout} can rival model-based efficiency. Building on this, SimbaV2~\cite{SimbaV2} further scaled network capacity and refined normalization, establishing a new state-of-the-art.
    
    Despite these advancements, prior works primarily target instability through value function ensembles or architectural constraints. In contrast, the specific issue of representation-level degradation presents an orthogonal challenge that has received relatively limited attention.
    
\subsection{Self-Supervised Learning}
\label{sec:related_works:SSL}
Self-Supervised Learning (SSL) has evolved from contrastive methods like SimCLR~\cite{SimCLR} to non-contrastive asymmetric approaches such as BYOL~\cite{BYOL} and SimSiam~\cite{SimSiam}. Distinct from these architectural solutions, Redundancy Reduction~\cite{BarlowRR}-based methods, including Barlow Twins~\cite{BarlowTwins} and VICReg~\cite{VICReg}, prevent collapse by explicitly decorrelating feature dimensions. We leverage these principles to learn robust state representations in overfitting-prone RL scenarios.

\subsection{Self-Predictive Learning}
\label{sec:related_works:spl}
To enhance sample efficiency, prior works utilized auxiliary objectives ranging from reconstruction~\cite{SAC-AE, UNREAL} to contrastive learning~\cite{CPC, CURL}. Moving beyond these, Self-Predictive Learning (SPL) focuses on capturing latent temporal dynamics rather than static features. Prominent implementations include SPR~\cite{SPR}, which adapts BYOL~\cite{BYOL}, and TD7~\cite{TD7}, which employs a SimSiam~\cite{SimSiam}-style framework. Recently, theoretical studies~\cite{bi-SPL, min_phi} have further generalized the SPL framework, offering insights into its spectral properties.

%% file: sections/preliminaries.tex
\label{sec:preliminaries}
\subsection{Reinforcement Learning Problem}
We address the reinforcement learning problem within the framework of a Markov Decision Process (MDP), formalized as a tuple $(\mathcal{S}, \mathcal{A}, P, R, \gamma)$~\cite{RL_Textbook}. The goal of an agent is to learn a policy $\pi$ that maximizes the expected discounted cumulative return.
In the context of SPL, we focus on learning an informative latent representation $\phi: \mathcal{S} \rightarrow \mathbb{R}^k$ that encapsulates essential information about the transition dynamics $P$, which serves as a basis for sample-efficient learning.

\subsection{Spectral Perspective on Self-Predictive Learning}
\label{sec:pre:spectral_spl}
    SPL typically involves training an encoder $\phi$ alongside a latent transition predictor $\mathcal{T}$. The objective is to minimize the discrepancy between the predicted and actual future representations. Formally, for a transition tuple $(s_t, a_t, s_{t+1})$, the SPL loss ($\mathcal{L}_{\text{SPL}}$) is given by:
    \begin{equation}
        \mathcal{L}_{\text{SPL}} = \mathbb{E}_{(s_t, a_t, s_{t+1}) \sim \mathcal{D}} \left[ \| \mathcal{T}(\phi(s_t), a_t) - \text{sg}(\phi(s_{t+1})) \|_2^2 \right],
        \label{eq:loss_spl}
    \end{equation}
    
    where $\text{sg}(\cdot)$ indicates the stop-gradient operation, a critical component for training stability.
    
    \textbf{Connection to Spectral Decomposition.} 
    Recent theoretical advancements have established a link between the learning dynamics of SPL and the spectral decomposition of the state transition matrix. Consider the representation matrix $\Phi \in \mathbb{R}^{|\mathcal{S}| \times k}$, where each row corresponds to the embedding $\phi(s)$ of a state $s$. Specifically,~\cite{bi-SPL} demonstrate that minimizing the SPL objective implicitly maximizes a trace functional involving the transition matrix $P^\pi$:
    \begin{equation}
        \max_{\Phi} \text{Tr} \left( (\Phi^\top P^\pi \Phi)^\top (\Phi^\top P^\pi \Phi) \right) \enspace \text{s.t.} \enspace \Phi^\top \Phi = I_k.
        \label{eq:trace_obj}
    \end{equation}
    The optimal representation $\Phi^*$ derived from this objective spans the subspace defined by the top-$k$ \textit{right} eigenvectors of $P^\pi$. A fundamental property of a Markov chain with a row-stochastic transition matrix \(P\) is that the eigenvalue \(1\) always exists. In particular, the constant vector \(\mathbf{1}\) is a right eigenvector satisfying \(P\mathbf{1}=\mathbf{1}\), reflecting conservation of probability mass. This constant eigenvector aligns with the global bias. This observation implies that preserving the constant mode is needed for capturing the global dynamics.

\subsection{Redundancy Reduction in Self-Supervised Learning}
    To learn robust representations without collapse, redundancy reduction methods impose explicit constraints on the statistical properties of the embeddings. Among these approaches, VICReg~\cite{VICReg} is particularly effective as it decouples the learning objective into three independent regularization terms: Variance, Invariance, and Covariance.
    
    For a formal definition, let $Z \in \mathbb{R}^{N \times d}$ denote a batch of feature vectors, where $N$ is the batch size and $d$ is the feature dimension. We denote $z_{b, \cdot}$ as the $b$-th sample vector and $z_{\cdot, j}$ as the vector of the $j$-th dimension across the batch. The components are defined as follows:

    \begin{itemize}[topsep=0pt, leftmargin=*]
        \setlength{\itemsep}{0pt}
        \item \textbf{Variance ($\mathcal{L}_{\text{Var}}$):} Prevents representation collapse by maintaining the variance of each feature dimension above a threshold $v_{th}$, computed along the batch dimension:
        \vspace{-0.5ex}
        \begin{equation}
            \mathcal{L}_{\text{Var}}(Z) = \frac{1}{d} \sum_{j=1}^{d} \max(0, v_{th} - \sqrt{\text{Var}(z_{\cdot, j})}),
            \label{eq:loss_var}
            \vspace{-0.5ex}
        \end{equation}
        where $\text{Var}(z_{\cdot, j})$ is the variance of the $j$-th dimension.
    
        \item \textbf{Invariance ($\mathcal{L}_{\text{Inv}}$):} Minimizes the mean squared error between the representations of two views ($Z^A, Z^B$) to learn invariant features against augmentations:
        \vspace{-0.5ex}
        \begin{equation}
            \mathcal{L}_{\text{Inv}}(Z^A, Z^B) = \frac{1}{N} \sum_{b=1}^{N} \| z^A_{b, \cdot} - z^B_{b, \cdot} \|_2^2.
            \label{eq:loss_invar}
            \vspace{-0.5ex}
        \end{equation}
        
        \item \textbf{Covariance ($\mathcal{L}_{\text{Cov}}$):} Decorrelates the feature dimensions by penalizing the off-diagonal coefficients of the covariance matrix. Crucially, the covariance matrix $\text{Cov}(Z)$ is typically centered:   
        \vspace{-0.2cm}
        \begin{align}
            &\hspace{-0.8em} \mathcal{L}_{\text{Cov}}(Z) = \frac{1}{d} \sum_{i \neq j} \left( \frac{1}{N{-}1} \sum_{b=1}^N (z_{b,i} {-} \mu_i)(z_{b,j} {-} \mu_j) \right)^2 \\
            &\hspace{-0.8em}\text{where } \mu_k = \mathbb{E}_{b}[z_{b,k}]. \nonumber
            \label{eq:loss_covar}
        \end{align}
        \vspace{-0.6cm}
    \end{itemize}
    
    The total objective is a weighted sum: $\mathcal{L}_{\text{VIC}} = \lambda_{\text{Inv}} \mathcal{L}_{\text{Inv}} + \lambda_{\text{Var}} \mathcal{L}_{\text{Var}} + \lambda_{\text{Cov}} \mathcal{L}_{\text{Cov}}$, where each $\lambda$ denotes a balancing coefficient. Notably, the standard $\mathcal{L}_{\text{Cov}}$ relies on zero-centering, thereby treating the feature mean as a bias to be eliminated.

%% file: sections/method.tex
\label{sec:method}
In this section, we introduce our approach for robust representation learning in high UTD regimes. We start by analyzing the spectral conflict arises from applying conventional zero-centering to latent dynamics modeling (Sec.~\ref{sec:method:analysis}). This analysis reveals that the standard redundancy reduction, which enforces feature centering, is unsuitable for capturing environmental dynamics. Guided by this insight, we propose an objective refinements—such as omitting the projector and removing the centering constraint (Sec.~\ref{sec:method:ours}). Furthermore, to extend our method to the state-of-the-art SimbaV2~\cite{SimbaV2}, which originally lacks an SPL component, we introduce an augmented architecture termed SimbaV2-SPL (Sec.~\ref{sec:method:simbav2_spl}).

\subsection{Conflict between Zero-Centering and SPL}
\label{sec:method:analysis}
    To formally demonstrate the conflict between SPL and zero-centering, we first define the batch-wise mean subtraction operation in matrix form and identify its property.
    
    \begin{lemma}[Centering Matrix]
    \label{lemma:centering}
    For a batch of size $N$, the zero-centering operation can be represented as a linear transformation by the centering matrix $H = I_N - \frac{1}{N}\mathbf{1}\mathbf{1}^\top$, where $I_N$ is the identity matrix and $\mathbf{1}$ is a column vector of ones. For any constant vector $\mathbf{c} = c\mathbf{1}$ (where $c \in \mathbb{R}$), the orthogonality condition holds:
    \begin{equation}
        H\mathbf{c} = \mathbf{0}.
    \end{equation}
    \end{lemma}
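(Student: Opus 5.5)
The lemma has two parts, and I would prove them in order: first that mean subtraction over a batch is multiplication by $H = I_N - \frac{1}{N}\mathbf{1}\mathbf{1}^\top$, and then that $H$ annihilates constant vectors.

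\textbf{Centering is multiplication by $H$.} Fix a feature dimension $j$ and the batch column $z_{\cdot,j}\in\mathbb{R}^N$. The batch mean is $\mu_j = \frac{1}{N}\sum_{b} z_{b,j} = \frac{1}{N}\mathbf{1}^\top z_{\cdot,j}$. Subtracting it from every entry gives
\begin{equation*}
z_{\cdot,j} - \mu_j \mathbf{1} = z_{\cdot,j} - \tfrac{1}{N}\mathbf{1}\mathbf{1}^\top z_{\cdot,j} = H z_{\cdot,j}.
\end{equation*}
Stacking the columns gives $Z - \mathbf{1}\mu^\top = HZ$. The centered covariance used in $\mathcal{L}_{\text{Cov}}$ can therefore be written as $\frac{1}{N-1}(HZ)^\top(HZ)$.

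\textbf{Constant vectors are annihilated.} Take $\mathbf{c} = c\mathbf{1}$. Using $\mathbf{1}^\top\mathbf{1} = N$,
\begin{equation*}
H\mathbf{c} = c\mathbf{1} - \tfrac{c}{N}\mathbf{1}(\mathbf{1}^\top\mathbf{1}) = c\mathbf{1} - c\mathbf{1} = \mathbf{0}.
\end{equation*}

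As an optional remark for later use, I would note that $H$ is symmetric and idempotent, since $\mathbf{1}\mathbf{1}^\top\mathbf{1}\mathbf{1}^\top = N\,\mathbf{1}\mathbf{1}^\top$ gives $H^2 = H$. So $H$ is the orthogonal projector onto $\mathbf{1}^\perp$, and its kernel is exactly $\mathrm{span}(\mathbf{1})$. This is the sense in which the lemma speaks of an "orthogonality condition": centering removes precisely the constant direction, which is the right eigenvector $P\mathbf{1}=\mathbf{1}$ discussed in Sec.~\ref{sec:pre:spectral_spl}.

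No step is a real obstacle; the argument is a direct calculation. The only point that needs care is stating the convention that centering acts column-wise over the batch index, so that $H$ multiplies $Z$ from the left.
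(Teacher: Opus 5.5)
Your proof is correct and matches the paper's: the core step is the same direct computation $H\mathbf{c} = c\mathbf{1} - \frac{c}{N}\mathbf{1}(\mathbf{1}^\top\mathbf{1}) = c\mathbf{1} - c\mathbf{1} = \mathbf{0}$, using $\mathbf{1}^\top\mathbf{1}=N$. You also derive that column-wise mean subtraction equals left-multiplication by $H$, and you observe that $H$ is the orthogonal projector with kernel exactly $\mathrm{span}(\mathbf{1})$; both go slightly beyond the paper, which simply takes the matrix form of $H$ as given.
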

    \vspace{-1ex}
    The detailed proof is provided in Appendix~\ref{proof:lemma:centering}.
    
    This lemma implies that the centering operation removes any signal comprised of a constant component. Building on this, we show that centering explicitly eliminates the dominant eigenmode from the representation matrix $\Phi$, which is critical information captured by SPL.
    
    \begin{proposition}[Elimination of Constant Eigenmode]
    \label{prop:elimination}
        Let $\Phi^*$ be the optimal representation spanning the principal subspace of the row-stochastic transition matrix $P^\pi$. The zero-centering operation $H$ eliminates the projection of $\Phi^*$ onto the constant vector $u_1$ (corresponding to the dominant mode), i.e.,
        \upshape
        \begin{equation}
            \| H \Phi^*_{\text{proj}, u_1} \|_2 = 0.
        \end{equation}
    \end{proposition}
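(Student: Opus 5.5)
We first make precise what $\Phi^*_{\text{proj},u_1}$ means, and then reduce the claim to Lemma~\ref{lemma:centering}.

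\textbf{Setup.} Since $P^\pi$ is row-stochastic, $P^\pi\mathbf{1}=\mathbf{1}$. So $u_1=\mathbf{1}/\sqrt{N}$ (with $N=|\mathcal{S}|$, or the batch size when rows are batch samples) is a unit-norm right eigenvector with eigenvalue $1$. For the matrices $P^\pi$ considered, this is the dominant mode: by Perron--Frobenius, every eigenvalue of a stochastic matrix satisfies $|\lambda|\le 1$.

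By the discussion after \eqref{eq:trace_obj}, $\Phi^*$ spans the top-$k$ right eigenspace, so $u_1\in\mathrm{span}(\Phi^*)$. We do not need this membership for the vanishing claim itself; it only explains why the mode matters. We define the projection of $\Phi^*$ onto $u_1$ as the rank-one matrix
\[
\Phi^*_{\text{proj},u_1}=u_1u_1^\top\Phi^* = \tfrac{1}{N}\mathbf{1}\mathbf{1}^\top\Phi^*=\mathbf{1}\,\bar\phi^\top,
\]
where $\bar\phi^\top=\frac1N\mathbf{1}^\top\Phi^*$ is the row-mean of $\Phi^*$.

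\textbf{Key step.} Each column of $\Phi^*_{\text{proj},u_1}$ equals $\bar\phi_j\mathbf{1}$, which is a constant vector $c\mathbf{1}$ with $c=\bar\phi_j$. Lemma~\ref{lemma:centering} then gives $H(\bar\phi_j\mathbf{1})=\mathbf{0}$ for every column $j$. Equivalently, in one line: $H\mathbf{1}=\mathbf{1}-\frac1N\mathbf{1}(\mathbf{1}^\top\mathbf{1})=\mathbf{0}$, so
\[
H\Phi^*_{\text{proj},u_1}=(H\mathbf{1})\bar\phi^\top=\mathbf{0}.
\]
Since the zero matrix has zero operator norm (and zero Frobenius norm), we get $\|H\Phi^*_{\text{proj},u_1}\|_2=0$.

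\textbf{Interpretation.} To connect this to the ``elimination'' language, we decompose
\[
\Phi^*=u_1u_1^\top\Phi^*+(I-u_1u_1^\top)\Phi^*.
\]
Because $I-u_1u_1^\top=H$ and $H$ is idempotent, centering gives $H\Phi^*=(I-u_1u_1^\top)\Phi^*$. In words, centering keeps exactly the component orthogonal to $u_1$ and annihilates the dominant-mode component.

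\textbf{Main obstacle.} The computation is trivial. The only real care is definitional: the statement needs a precise meaning for $\Phi^*_{\text{proj},u_1}$ and for the norm, and the identification of $u_1$ with the dominant mode must hold. That identification needs a uniqueness or aperiodicity assumption if one wants eigenvalue $1$ to be strictly dominant. We would state that assumption explicitly and note that the vanishing result does not depend on it.
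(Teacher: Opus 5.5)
Your proof is correct and follows essentially the same route as the paper's. Both identify $u_1$ as a multiple of $\mathbf{1}$ using row-stochasticity, take the component as the orthogonal projector onto $\mathrm{span}(u_1)$ applied to $\Phi^*$, and use $H\mathbf{1}=\mathbf{0}$ (Lemma~\ref{lemma:centering}) to get $H\Pi_{u_1}=0$. Your added observation that $H=I-u_1u_1^\top$ for $u_1=\mathbf{1}/\sqrt{N}$ is a clean touch the paper leaves implicit, and your caveat that strict dominance of the eigenvalue $1$ needs irreducibility and aperiodicity is fair; the paper avoids that issue by simply choosing the eigenvector for $\lambda_1=1$ to be constant.
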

    \vspace{-0ex}
    The detailed proof is provided in Appendix~\ref{proof:prop:elimination}.
    
    Proposition~\ref{prop:elimination} suggests that even if the representation $\Phi$ successfully aligns with $u_1$ to capture the global information of transition dynamics, the application of zero-centering mathematically erases this information.
    
    While the neural network's bias parameters could theoretically recover this eliminated component, relying on such implicit adaptation imposes an optimization disadvantage. To avoid this structural inefficiency, we adopt a non-centered regularization scheme that explicitly preserves the spectral information of SPL.

\subsection{Dominant-Mode Preserving Regularization}
\label{sec:method:ours}
    \begin{figure}[t]
        \centering
        \includegraphics[width=1.0\linewidth]{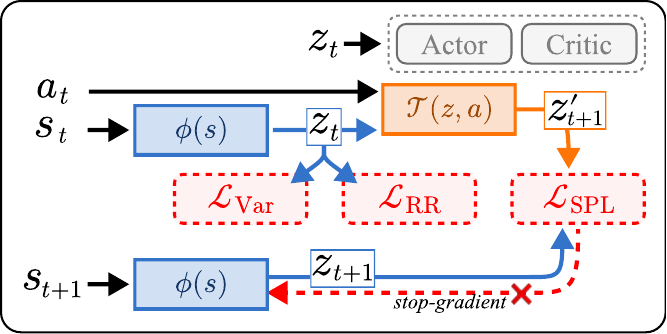}
        \caption{\textbf{Overview of \ours.} The figure illustrates our proposed framework where direct regularization is applied to the latent representation $z_t$. This explicitly stabilizes feature learning.}
        \label{fig:overview}
        \vspace{0ex}
    \end{figure}
    Guided by the analysis in Sec.~\ref{sec:method:analysis}, we propose a modified redundancy reduction scheme that explicitly excludes mean subtraction to preserve the spectral information of SPL. Regarding the SPL loss, Eq.~\eqref{eq:loss_spl}, we posit that an explicit redundancy reduction scheme based on VICReg~\cite{VICReg} is better suited to preserve the superior properties of the original SPL than Barlow Twins~\cite{BarlowTwins}. Consequently, we introduce the following modifications to the original VICReg formulation:
    
    \textbf{Non-centered Redundancy Reduction.} Instead of the standard covariance loss, we employ a non-centered inner product form to explicitly avoid eliminating the global information. We formally denote this objective as the Redundancy Reduction loss ($\mathcal{L}_{\text{RR}}$). It is computed using the non-centered correlation matrix in Eq.~\eqref{eq:loss_rr} and normalized by the total number of off-diagonal elements, $d(d-1)$, to ensure that the magnitude remains invariant to the feature dimension size.
    \begin{align}
        & \mathcal{L}_{\text{RR}} = \frac{1}{d(d-1)} \sum_{i \neq j} \left( [C(Z)]_{ij} \right)^2, \label{eq:loss_rr} \\
        \text{where} \enspace & [C(Z)]_{ij} = \frac{1}{N-1} \sum_{b=1}^{N} z_{b, i} z_{b, j} \enspace. \nonumber
    \end{align}
    \textbf{Direct Regularization.} We apply regularization directly to the output of the encoder without an additional projector module. This design choice is grounded in our theoretical analysis, which provides a precise understanding of SPL dynamics, indicating that the auxiliary projector is redundant.
    
    Based on these modifications, our final objective, $\mathcal{L}_{\text{\ours}}$, explicitly enforces redundancy reduction, SPL-dynamics, and feature uniformity. (Eq.~\eqref{eq:final_loss}):
    \begin{equation}
        \mathcal{L}_{\text{\ours}}(Z) = \mathcal{L}_{\text{SPL}}(Z) + \lambda_{\text{RR}}\mathcal{L}_{\text{RR}}(Z) + \lambda_{\text{Var}}\mathcal{L}_{\text{Var}}(Z) .
        \label{eq:final_loss}
    \end{equation}

    \begingroup
    \setlength{\textfloatsep}{5pt}
    \input{algorithms/ours}
    \endgroup

\subsection{Architecture: SimbaV2-SPL}
\label{sec:method:simbav2_spl}
    \begin{figure}[t]
        \centering
        \includegraphics[width=1.0\linewidth]{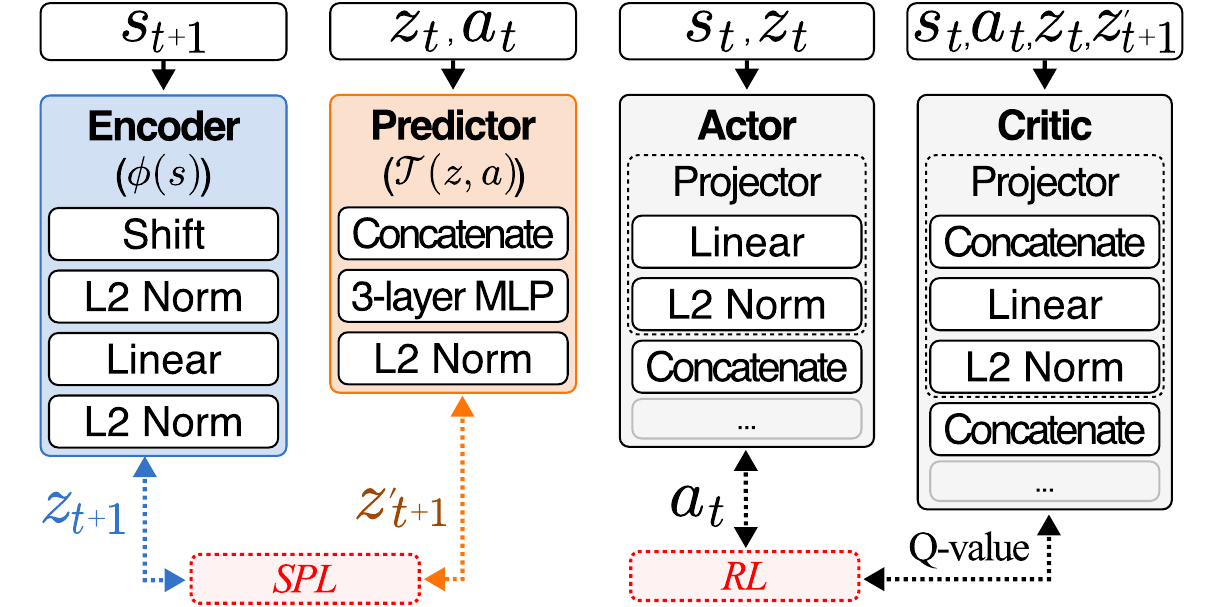}
        \caption{\textbf{SimbaV2-SPL.} We augment the backbone with a tailored SPL module (encoder $\phi$, predictor $\mathcal{T}$). The Actor and Critic networks are adapted to align with the SimbaV2 architecture, ensuring seamless integration of latent representations.}
        \label{fig:SimbaV2-SPL}
        \vspace{-1ex}
    \end{figure}
    Our proposed regularization scheme is designed to be algorithm-agnostic, assuming only the existence of SPL, thereby offering universal applicability. To demonstrate this, we introduce SimbaV2-SPL, which integrates the tailored SPL framework into SimbaV2~\cite{SimbaV2}, the current state-of-the-art model in continuous control benchmarks. Since SimbaV2 is originally designed as a purely model-free architecture, it lacks an explicit mechanism to learn environmental dynamics. We address this by augmenting the architecture with an additional encoder and a transition predictor, trained specifically under the SPL framework. Specifically, rather than employing a generic encoder architecture typical of prior methods, such as TD7~\cite{TD7}, we align our design with the distinct architectural philosophy of SimbaV2. Crucially, to ensure the preservation of raw information—particularly high-frequency details—that might be lost during encoding, we maintain the original state as a parallel input to the actor-critic networks, rather than entirely replacing it with the latent representation $z$. To integrate this input while adhering to SimbaV2's specific constraints, the state (or state-action pair) is first linearly projected and subsequently normalized using an $L_2$ norm. This processed input is then concatenated with the latent representation $z$. This integrated architecture, termed SimbaV2-SPL, serves as our enhanced baseline and is illustrated in Fig.~\ref{fig:SimbaV2-SPL}.

%% file: algorithms/ours.tex
\begin{algorithm}[t]
Algorithm    \caption{Training Procedure of \ours}
    \label{alg:r2spl}
    \begin{algorithmic}[1]
        \STATE{\bfseries Input:} UTD ratio $G$, Batch size $N$, Regularization coefficients $\lambda_{\text{RR}}, \lambda_{\text{Var}}$
        \STATE{\bfseries Initialize:} Encoder $\phi$, Predictor $\mathcal{T}$, Actor $\pi$, Critic $Q$, Replay Buffer $\mathcal{D}$
        
        \FOR{each environment step $t$}
            \STATE Observe state $s_t$, select action $a_t \sim \pi(\phi(s_t))$
            \STATE Execute $a_t$, observe reward $r_t$, next state $s_{t+1}$
            \STATE Store transition $(s_t, a_t, r_t, s_{t+1})$ in $\mathcal{D}$
            
            \STATE\COMMENT{High UTD Update Loop}
            \FOR{$u = 1$ {\bfseries to} $G$}
                \STATE Sample batch $B \sim \mathcal{D}$
                \STATE\COMMENT {1. Self-Predictive Learning Block}
                \STATE Encode states: $Z \leftarrow \phi(s), \ Z' \leftarrow \text{sg}(\phi(s'))$
                \STATE $\mathcal{L}_{\text{SPL}} \leftarrow$ SPL loss using $\mathcal{T}(Z, a)$ and $Z'$\hfill Eq.~\eqref{eq:loss_spl}
                \STATE \textcolor{red!70!black}{$\mathcal{L}_{\text{RR}} \leftarrow$ RR loss on $Z$}\hfill Eq.~\eqref{eq:loss_rr}
                \STATE \textcolor{red!70!black}{$\mathcal{L}_{\text{Var}} \leftarrow$ Variance loss on $Z$}\hfill Eq.~\eqref{eq:loss_var}
                \STATE \textcolor{red!70!black}{$\mathcal{L}_{\text{\ours}} \leftarrow \mathcal{L}_{\text{SPL}} + \lambda_{\text{RR}}\mathcal{L}_{\text{RR}} + \lambda_{\text{Var}}\mathcal{L}_{\text{Var}}$}\hfill Eq.~\eqref{eq:final_loss}
                \STATE Update Encoder $\phi$ and Predictor $\mathcal{T}$ using $\nabla \mathcal{L}_{\text{\ours}}$
                
                \STATE\COMMENT {2. Reinforcement Learning Block (Base Algo.)}
                \STATE Update Actor $\pi$ and Critic $Q$, using latent state $Z$ (following base algorithm)
            \ENDFOR
        \ENDFOR
    \end{algorithmic}
\end{algorithm}

%% file: sections/experiments.tex
\label{sec:experiment} In this section, we validate the effectiveness and universality of our proposed method. We design our experiments to verify six key aspects: (1) robustness in high UTD regimes, (2) independence from algorithmic specifics, (3) distinctness from architectural normalization (Layer Normalization;~\citealp{LayerNorm}), (4) complementarity with state-of-the-art architectures, (5) the validity of our design choices via ablation studies, and (6) the spectral analysis of the learned representations through singular value spectrum.

\textbf{Environments.} We utilize a total of 11 environments. From OpenAI Gym MuJoCo~\cite{Gym, MuJoCo}, we select 4 environments: \texttt{Ant-v5}, \texttt{Walker2d-v5}, \texttt{Hopper-v5}, and \texttt{Humanoid-v5}. From the DeepMind Control Suite (DMC)~\cite{DMC, DMC2}, we use the subset known as DMC-Hard~\cite{SimBa}, which comprises 7 environments: \texttt{Humanoid-\{Walk, Run, Stand\}} and \texttt{Dog-\{Trot, Walk, Stand, Run\}}.

\textbf{Metric.} To capture sample efficiency, we measure the average return over the final 20\% of training steps, following~\citet{RevisitALE}. To allow for aggregation, we compute normalized scores relative to the corresponding baseline trained at $\text{UTD}=1$ (see Appendix~\ref{app:metric_details} for mathematical details). We report aggregated results here, while individual task performance is provided in Appendix~\ref{app:full_results}.

\textbf{Baselines.} To rigorously evaluate the efficacy of our proposed regularization across varying levels of algorithmic complexity, we utilize the following baselines:
\begin{itemize}[leftmargin=*, topsep=0pt]
    \setlength{\itemsep}{0pt}
    \item \textbf{TD7~\cite{TD7}:} A representative SPL-native algorithm, selected to demonstrate robustness within a standard latent-dynamics framework.
    
    \item \textbf{Minimalist $\phi$~\cite{min_phi}:} A simplified SPL variant, chosen to verify that our performance gains are fundamental to the SPL objective rather than artifacts of complex auxiliary mechanisms.
    
    \item \textbf{TD7+Layer Normalization (TD7+LN)~\cite{LayerNorm}:} This variant incorporates Layer Normalization within the encoder to demonstrate that the performance improvements of \ours are distinct from and independent of standard architectural normalization.
    
    \item \textbf{SimbaV2~\cite{SimbaV2}:} The current state-of-the-art (SOTA) architecture in continuous control. Although it inherently lacks an SPL framework, we select it as the strongest available backbone. By integrating a tailored SPL module (termed SimbaV2-SPL), we aim to demonstrate that our method is orthogonal to architectural advancements and provides complementary gains.
\end{itemize}

We omit purely value-centric high-UTD algorithms, such as CrossQ~\cite{CrossQ}, from direct comparison for two reasons. First, as discussed in Sec.~\ref{sec:related_works}, these methods address Q-function bias, which presents an orthogonal challenge to our representation-focused contribution. Second, since SimbaV2 has already demonstrated superior performance over these approaches, we select it as the representative state-of-the-art baseline.
    
\textbf{Training.} Training is conducted for a fixed budget of 500k decision steps. Notably, we fix the regularization coefficients $\lambda_{\text{Var}}$ and $\lambda_{\text{RR}}$ to $0.01$, and the variance threshold $v_{th}$ to $1$ across all experiments without per-task tuning. All other hyperparameters follow the default settings of the base algorithms. For more details, see Appendix~\ref{app:hyperparams}.

\input{tables/main_exp}

\begin{figure*}[!ht]
    \centering
    
    \foreach \i in {1,...,8} {
        \begin{subfigure}[b]{0.24\textwidth} 
            \centering
            \includegraphics[width=\linewidth]{figures/main_exp/\i.pdf}
        \end{subfigure}%
        \hfill 
        \ifnum\numexpr\i-4*(\i/4)\relax=0 
            \par\bigskip 
        \fi
    }
    
    \caption{\textbf{Aggregated score curves of TD7 and \ours.} Solid lines and shaded regions represent the mean and 95\% confidence intervals, respectively. Our approach significantly outperforms the baseline at UTD=20 while maintaining comparable performance at UTD=1.}
    \label{fig:curve_main_result}
    \vspace{-2ex}
\end{figure*}

\subsection{Robustness in High UTD Regimes}
    Our primary hypothesis is that adding our regularization term to SPL-based algorithms enhances their robustness, mitigating potential degradation while unlocking further performance gains under high UTD settings. As shown in the TD7~\cite{TD7} section of Table~\ref{tab:main_results} and Fig.~\ref{fig:curve_main_result}-(TD7), \ours demonstrates remarkable robustness in the high UTD regime. While the TD7 baseline maintains a normalized score of 1.02 at $\text{UTD}=20$, our method significantly boosts this to 1.24, achieving a 22\% relative improvement. This gain is particularly pronounced in the complex DMC-Hard benchmark (1.02 $\to$ 1.32), indicating that the benefits of our regularization are magnified in challenging environments.

\subsection{Independence from Algorithmic Specifics}
    We aim to demonstrate that the gains observed in TD7~\cite{TD7} are not artifacts of its complex auxiliary techniques but are fundamental to the SPL framework. As presented in the Minimalist $\phi$~\cite{min_phi} section of Table~\ref{tab:main_results} and Fig.~\ref{fig:curve_main_result}-(Minimalist $\phi$), results confirm that the benefits of \ours are fundamental. Even in the Minimalist $\phi$ setting, which lacks the sophisticated auxiliary mechanics of TD7, our method improves the Total aggregated mean from 5.28 to 6.20 at $\text{UTD}=20$. Notably, in the Gym MuJoCo~\cite{Gym, MuJoCo} benchmark where the baseline suffers severe collapse (dropping to 0.41), \ours effectively acts as a defense against such drastic drops, securing a score of 0.57. This suggests that our regularization enhances the robustness of the latent dynamics learning process, regardless of the algorithmic architecture. Additionally, at $\text{UTD}=1$, combining the minimalist baseline with \ours still yields a clear improvement in performance (1.00 $\rightarrow$ 2.74), indicating that our approach enriches the learned features even without intensive updates.
    
    We note, however, that because the minimalist baseline attains very low absolute returns due to its minimal structure, the normalized improvement ratio can appear inflated when the reference score is small. To avoid overinterpreting this effect, we also provide the raw scores in the Appendix~\ref{app:full_results}.

\subsection{Distinct Mechanism from Layer Normalization}
    A critical question is whether our approach operates via a mechanism distinct from standard architectural normalization. As presented in the TD7+LN~\cite{TD7, LayerNorm} section of Table~\ref{tab:main_results} and Fig.~\ref{fig:curve_main_result}-(TD7+LN), results reveal a compelling finding: architectural normalization alone is insufficient for high UTD robustness. The TD7+LN baseline, despite being a strong architectural variant, suffers from performance degradation at $\text{UTD}=20$, with the Total aggregated mean dropping to 0.88 (falling below its $\text{UTD}=1$ performance). In stark contrast, applying \ours on top of LN not only recovers from this degradation but further unlocks performance gains, reaching a score of 1.10. This significant recovery implies that \ours addresses a distinct form of representational collapse that Layer Normalization cannot resolve, thereby demonstrating the orthogonality of the two approaches.

\subsection{Complementarity with Modern Architectures}
    We evaluated the compatibility of our method, \ours, with SimbaV2~\cite{SimbaV2}, a state-of-the-art architecture featuring extensive normalization. As shown in Table~\ref{tab:main_results} and Fig.~\ref{fig:curve_main_result}-(SimbaV2), integrating the tailored SPL framework alone (+SPL) already surpasses the baseline, establishing a new SOTA score of 1.34 at $\text{UTD}=20$. Notably, the addition of our regularization (+SPL+\ours) provides further gains, achieving a peak performance of 1.38. 
    
    While the margin of improvement (+0.04) may appear relatively modest compared to other baselines, we hypothesize that this is primarily due to a performance ceiling effect. Given that the SimbaV2(+SPL) backbone is already exceptionally strong across several environments, the room for further absolute gains is fundamentally limited. Nevertheless, \ours provides a distinct, complementary benefit by preventing representational collapse. As substantiated by our subsequent Effective Rank analysis (Section~\ref{sec:exp:effective_rank}), preserving this structural integrity contributes to further performance gains even in highly saturated regimes.

\subsection{Ablation Study}
\label{sec:exp:ablation}
    We conduct two ablation studies on the \texttt{Dog-Trot} environment from DMC~\cite{DMC, DMC2} to comprehensively validate our design choices. We chose this high-dimensional environment as it represents one of the most challenging tasks.
    \begin{figure}[t]
        \centering
        \begin{subfigure}[b]{0.49\linewidth}
            \centering
            \includegraphics[width=\linewidth]{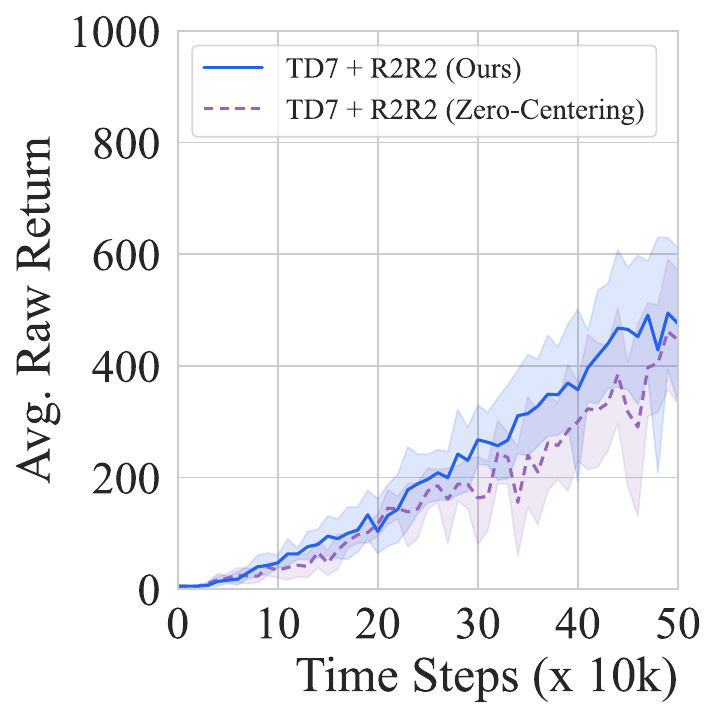}
        \end{subfigure}%
        \hfill
        \begin{subfigure}[b]{0.49\linewidth}
            \centering
            \includegraphics[width=\linewidth]{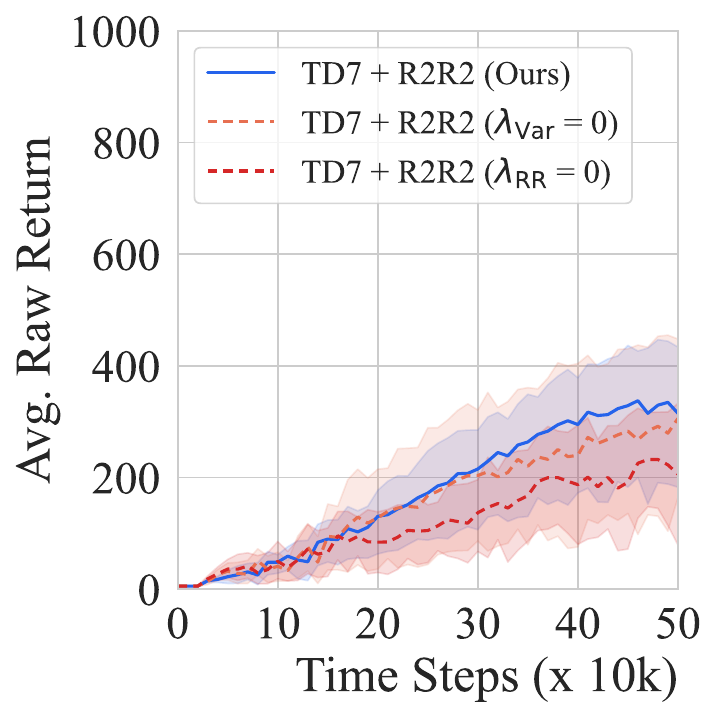}
        \end{subfigure}
        \caption{\textbf{Ablation Analysis.} \textbf{(Left)} Performance comparison regarding the zero-centering constraint. The results demonstrate that enforcing zero-centering leads to performance degradation, confirming the importance of preserving global information. \textbf{(Right)} Component-wise analysis verifying the contribution of individual loss terms to the final performance.}
        \label{fig:ablation}
        \vspace{-1ex}
    \end{figure}
    
    \textbf{Impact of Zero-Centering Constraint.} As shown in Fig.~\ref{fig:ablation}-(Left), comparing our framework with a zero-centered variant reveals that enforcing zero-centering degrades performance. This constraint strips away principal components, forcing the neural network to inefficiently reconstruct lost information at each step. In contrast, our non-centered design, $\mathcal{L}_{\text{RR}}$, yields better performance by preserving these essential features.
    
    \textbf{Contribution of Regularization Terms.} We assess the individual roles of $\mathcal{L}_{\text{Var}}$ and $\mathcal{L}_{\text{RR}}$ by ablating each term at $\text{UTD}=20$. Fig.~\ref{fig:ablation}-(Right) confirms that both terms are complementary. While both ablations lead to performance degradation, the absence of $\mathcal{L}_{\text{RR}}$ causes a more severe drop. This validates that simultaneously enforcing variance preservation and decorrelating feature dimensions are critical for achieving robust learning in high UTD settings.

\subsection{Spectral Analysis}
    To investigate \ours, we analyze the singular value spectrum using TD7~\cite{TD7} as the base algorithm, focusing on \texttt{Humanoid-Stand} as it exhibits one of the most pronounced divergence between methods, thereby facilitating clear observation. Ideal representations exhibit a power-law decay: a steep initial drop indicates efficient compression of global dynamics, while a heavy tail ensures the preservation of fine-grained local variations without subspace collapse. Fig.~\ref{fig:sv_spectrum} confirms that our regularization optimizes the latent structure towards this ideal profile.

    \begin{figure}[ht]
        \centering
        \begin{subfigure}[b]{0.49\linewidth}
            \centering
            \includegraphics[width=\linewidth]{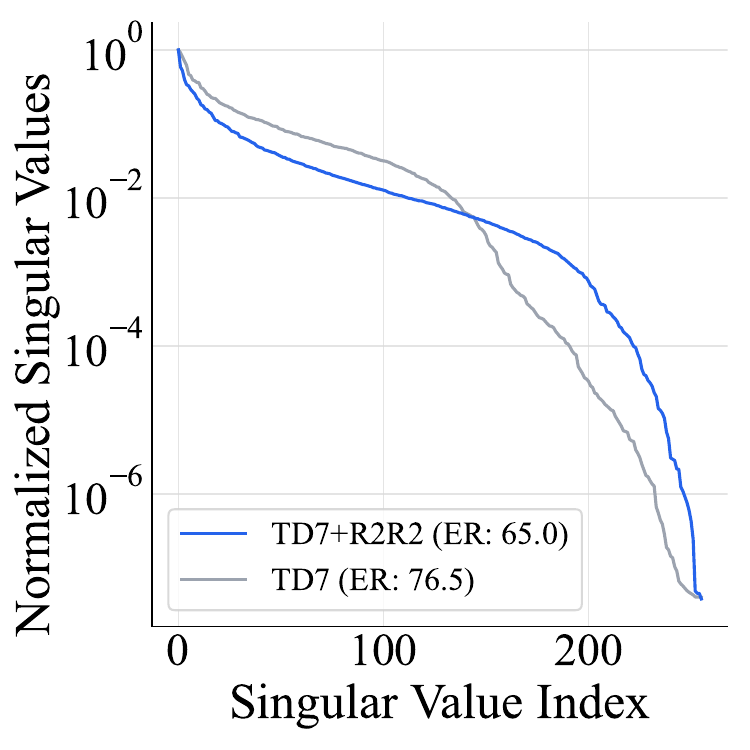}
        \end{subfigure}%
        \hfill
        \begin{subfigure}[b]{0.49\linewidth}
            \centering
            \includegraphics[width=\linewidth]{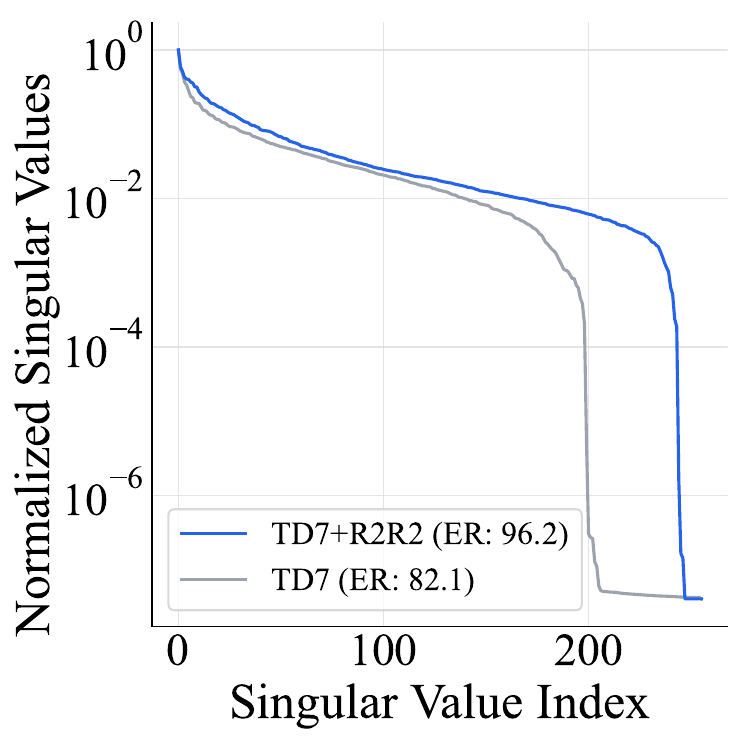}
        \end{subfigure}
        \caption{\textbf{The singular value spectrum.} \textbf{(Left)} At $\text{UTD}=1$, \ours shows a steeper initial decay. \textbf{(Right)} At $\text{UTD}=20$, baseline suffers from spectral cutoff in the tail indices.}
        \label{fig:sv_spectrum}
        \vspace{-2ex}
    \end{figure}
    
    \textbf{Spectral Concentration at {\boldmath $\text{UTD}=1$}.}
    In the standard regime, \ours shows a steeper initial decay compared to the baseline, resulting in a lower Effective Rank (ER $\approx$ 65.0 vs. 76.5). This indicates spectral concentration, where the model filters out redundant signals to compress task-relevant information into a compact set of principal components.
    
    \textbf{Structural Integrity at {\boldmath $\text{UTD}=20$}.} 
    Conversely, in the high UTD regime, the baseline suffers from a sharp spectral cutoff, with singular values in the tail indices dropping rapidly to near-zero. This suggests a partial collapse where the model loses the capacity to capture fine-grained features. In contrast, \ours maintains a heavy-tailed distribution, confirming that our method effectively prevents subspace collapse, ensuring the full utilization of the latent space to represent diverse dynamics.

\subsection{Effective Rank Monitoring}
\label{sec:exp:effective_rank}
    To further clarify the representation instability discussed in previous sections, we monitor the evolution of the Effective Rank (ER) throughout the training process. To highlight the divergence most clearly, we conduct this analysis under the extreme setting of $\text{UTD}=20$ on the \texttt{Humanoid-Run} environment, where our method demonstrated substantial performance gains.
    
    \begin{figure}[ht]
        \centering
        \begin{subfigure}[b]{0.49\linewidth}
            \centering
            \includegraphics[width=\linewidth]{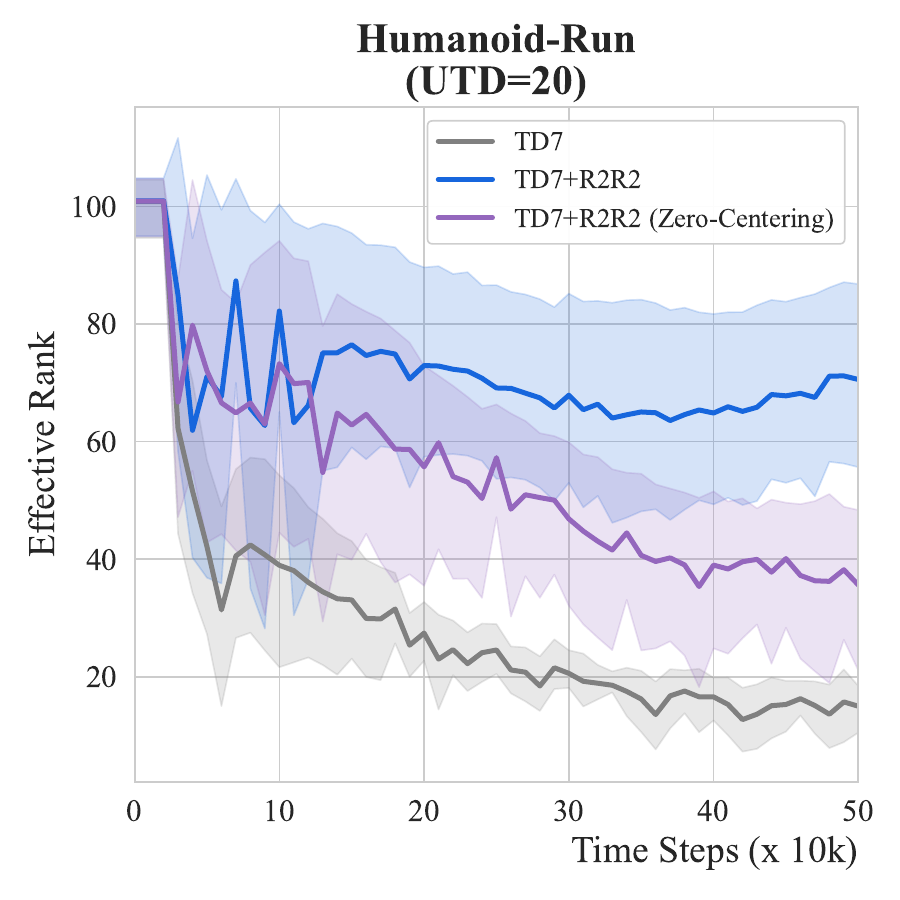}
        \end{subfigure}%
        \hfill
        \begin{subfigure}[b]{0.49\linewidth}
            \centering
            \includegraphics[width=\linewidth]{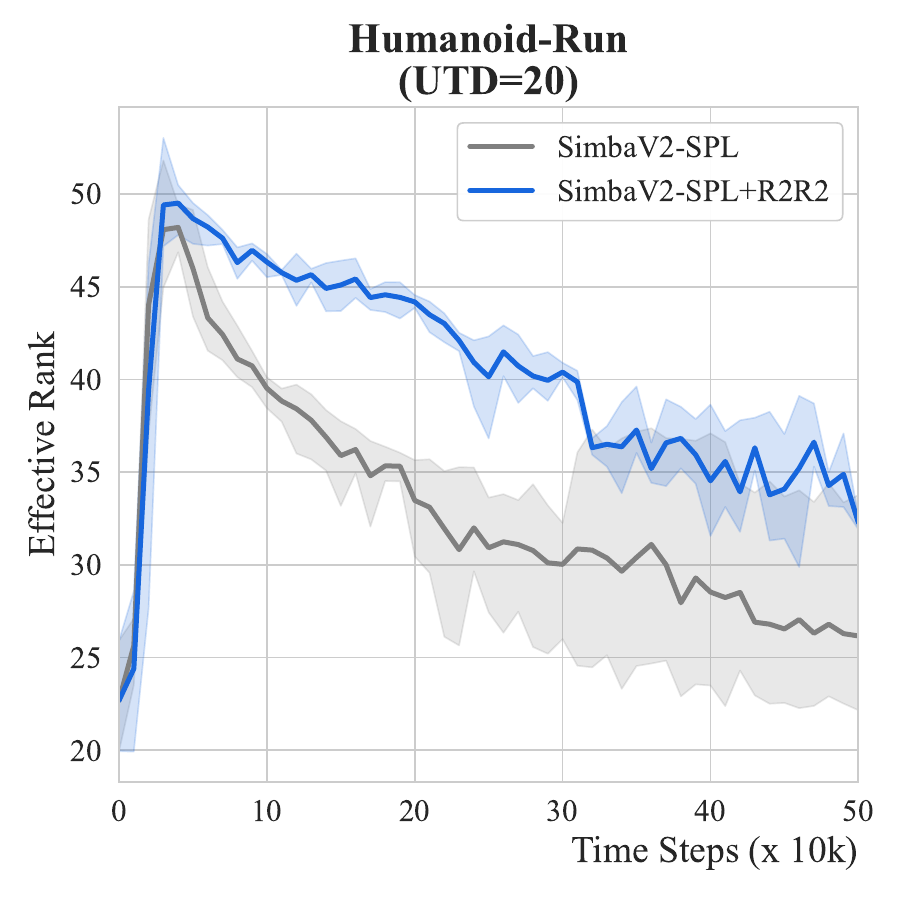}
        \end{subfigure}
        \caption{\textbf{Effective Rank (ER) over training steps.} Evaluated on \texttt{Humanoid-Run} at $\text{UTD}=20$. \textbf{(Left)} Comparison among the TD7 baseline, \ours, and \ours with the zero-centering constraint. \textbf{(Right)} Comparison on the SimbaV2+SPL backbone with and without \ours, demonstrating complementarity.}
        \label{fig:effective_rank}
        \vspace{0ex}
    \end{figure}

    Fig.~\ref{fig:effective_rank}-(Left) tracks the ER for the TD7 baseline and our variants. While the unregularized baseline suffers a progressive loss of dimensionality, \ours successfully maintains a stable and high ER. Furthermore, when the zero-centering constraint is applied alongside our regularization, the ER drops progressively, closely mirroring the collapse trajectory of the baseline. This confirms that preserving non-centered features is crucial for preventing subspace collapse.

    To substantiate the complementarity of our method with modern architectures, we also analyze the ER on the state-of-the-art SimbaV2+SPL backbone. As shown in Fig.~\ref{fig:effective_rank}-(Right), even though SimbaV2 is equipped with extensive architectural normalizations, the integration of \ours still yields a noticeably higher and more stable ER throughout the learning process. This empirical evidence clearly demonstrates that \ours and architectural improvements contribute to representation learning in fundamentally different ways; while SimbaV2 provides a strong structural capacity, \ours explicitly defends against representational collapse, acting as a highly complementary component for robust learning.

\subsection{Training Time}
    We measure the wall-clock training time (sec/step) on \texttt{Humanoid-v5} at $\text{UTD}=20$. This setting represents a worst-case scenario for computational cost: the high observation dimension maximizes the overhead of our method, while the frequent updates ensure this cost dominates the environment simulation time. We conduct all measurements on an Intel i7-9800X and an NVIDIA RTX2080Ti. Note that we implemented all SimbaV2~\cite{SimbaV2} variants on top of the official JAX~\cite{jax} implementation~\cite{SimbaV2}, whereas TD7~\cite{TD7} utilizes PyTorch~\cite{Pytorch}.
    \input{tables/train_time}

%% file: tables/main_exp.tex
\begin{table*}[t]
  \caption{\textbf{Aggregated normalized performance comparison.} We report the aggregated mean scores with 95\% confidence intervals [Lower, Upper] across Gym MuJoCo~\cite{Gym, MuJoCo} and DMC-Hard~\cite{DMC, DMC2} benchmarks. The Total column represents the aggregate score across all 11 environments. For SimbaV2~\cite{SimbaV2}, we explicitly show the performance gain from SPL and our proposed regularization method separately.}
  \label{tab:main_results}
  \begin{center}
    \begin{footnotesize}
      \begin{sc}
        \resizebox{1.0\textwidth}{!}{
        \setlength{\tabcolsep}{2.5pt} 
        \begin{tabular}{l @{\hspace{1mm}} ccc @{\hspace{2mm}} | ccc}
          \toprule
          & \multicolumn{3}{c}{\textbf{UTD = 1} (Sanity Check)} & \multicolumn{3}{c}{\textbf{UTD = 20} (Robustness)} \\
          \cmidrule(lr){2-4} \cmidrule(lr){5-7}
          Algorithm & \multicolumn{1}{c}{Gym MuJoCo} & \multicolumn{1}{c}{DMC-Hard} & \multicolumn{1}{c}{\textbf{Total(4+7)}} & \multicolumn{1}{c}{Gym MuJoCo} & \multicolumn{1}{c}{DMC-Hard} & \multicolumn{1}{c}{\textbf{Total(4+7)}} \\
          \midrule
          
          \multicolumn{7}{l}{\textbf{TD7}~\cite{TD7}} \\
          \hspace{2mm} Base & 1.00 {\tiny [0.96, 1.05]} & 1.00 {\tiny [0.83, 1.18]} & 1.00 {\tiny [0.88, 1.12]} & 1.02 {\tiny [0.93, 1.11]} & 1.02 {\tiny [0.84, 1.22]} & 1.02 {\tiny [0.89, 1.15]} \\
          \hspace{2mm} + \ours (Ours) & \textbf{1.08} {\tiny [1.03, 1.12]} & \textbf{1.05} {\tiny [0.86, 1.25]} & \textbf{1.06} {\tiny [0.93, 1.18]} & \textbf{1.09} {\tiny [1.03, 1.16]} & \textbf{1.32} {\tiny [1.14, 1.51]} & \textbf{1.24} {\tiny [1.12, 1.37]} \\
          \midrule
          
          \multicolumn{7}{l}{\textbf{Minimalist $\phi$}~\cite{min_phi}} \\
          \hspace{2mm} Base & \textbf{1.00} {\tiny [0.90, 1.09]} & 1.00 {\tiny [0.79, 1.28]} & 1.00 {\tiny [0.87, 1.18]} & 0.41 {\tiny [0.24, 0.60]} & 8.07 {\tiny [2.56, 15.1]} & 5.28 {\tiny [1.75, 9.98]} \\
          \hspace{2mm} + \ours (Ours) & \textbf{1.00} {\tiny [0.87, 1.14]} & \textbf{3.73} {\tiny [0.84, 9.38]} & \textbf{2.74} {\tiny [0.89, 6.25]} & \textbf{0.57} {\tiny [0.38, 0.74]} & \textbf{9.41} {\tiny [3.30, 16.8]} & \textbf{6.20} {\tiny [2.03, 11.0]} \\
          \midrule

          \multicolumn{7}{l}{\textbf{TD7+LN}~\cite{TD7, LayerNorm}} \\
          \hspace{2mm} Base & \textbf{1.00} {\tiny [0.93, 1.06]} & 1.00 {\tiny [0.89, 1.09]} & 1.00 {\tiny [0.93, 1.07]} & 0.90 {\tiny [0.78, 1.02]} & 0.86 {\tiny [0.78, 0.94]} & 0.88 {\tiny [0.80, 0.94]} \\
          \hspace{2mm} + \ours (Ours) & 0.98 {\tiny [0.93, 1.03]} & \textbf{1.14} {\tiny [1.03, 1.25]} & \textbf{1.08} {\tiny [1.01, 1.16]} & \textbf{1.05} {\tiny [0.97, 1.15]} & \textbf{1.13} {\tiny [1.02, 1.25]} & \textbf{1.10} {\tiny [1.03, 1.19]} \\
          \midrule
          
          \multicolumn{7}{l}{\textbf{SimbaV2}~\cite{SimbaV2}} \\
          \hspace{2mm} Base & 1.00 {\tiny [0.95, 1.05]} & 1.00 {\tiny [0.92, 1.07]} & 1.00 {\tiny [0.95, 1.04]} & 1.01 {\tiny [0.95, 1.06]} & 1.31 {\tiny [1.17, 1.47]} & 1.20 {\tiny [1.10, 1.32]} \\
          \hspace{2mm} + SPL (Ours) & \textbf{1.03} {\tiny [0.97, 1.10]} & 1.23 {\tiny [1.15, 1.31]} & \textbf{1.16} {\tiny [1.09, 1.22]} & \textbf{1.10} {\tiny [1.04, 1.17]} & 1.47 {\tiny [1.24, 1.74]} & 1.34 {\tiny [1.18, 1.52]} \\
          \hspace{2mm} + SPL + \ours (Ours) & 0.99 {\tiny [0.93, 1.06]} & \textbf{1.24} {\tiny [1.15, 1.33]} & 1.15 {\tiny [1.08, 1.22]} & 1.08 {\tiny [1.00, 1.16]} & \textbf{1.55} {\tiny [1.26, 1.88]} & \textbf{1.38} {\tiny [1.19, 1.61]} \\
          
          \bottomrule
        \end{tabular}
        }
      \end{sc}
    \end{footnotesize}
  \end{center}
  \vskip -0.15in
\end{table*}

%% file: tables/train_time.tex
\vspace{-1ex}
\begin{table}[!ht]
    \caption{\textbf{Measurements of training time.} 
        Average wall-clock training time (sec/step) measured with $\text{UTD}=20$.}
    \label{tab:runtime_cost}
    \centering
    
    \vspace{-1ex} 

    \begin{sc}
        \renewcommand{\arraystretch}{1.05} 
        \setlength{\tabcolsep}{12pt} 
        
        \setlength{\aboverulesep}{0pt}
        \setlength{\belowrulesep}{0pt}

        \begin{tabular}{ccl} 
            \toprule
            \textbf{Method} & TD7 & \multicolumn{1}{c}{SimbaV2} \\ 
            \midrule
            Base & 0.417 & 0.210 \\
            \midrule
            \multirow{2}{*}{Ours} 
                & \multirow{2}{*}{0.465} 
                & 0.235 {\scriptsize (+SPL)} \\ 
            \cmidrule{3-3} 
             & & 0.241 {\scriptsize (+SPL +\ours)} \\ 
            \bottomrule
        \end{tabular}
    \end{sc}
    \vspace{-2ex} 
\end{table}

%% file: sections/conclusion.tex
\label{sec:conclusion}
In this work, we identified a fundamental conflict where standard zero-centering undermines SPL by eliminating the principal spectral mode. To resolve this, we proposed \ours, a non-centered redundancy reduction term designed to preserve this information. Our evaluation validates \ours across four dimensions: 1) \textbf{Robustness}: It mitigates high-UTD degradation, significantly boosting the TD7~\cite{TD7} baseline. 2) \textbf{Independence}: Confirmed via Minimalist $\phi$~\cite{min_phi}, verifying efficacy independent of auxiliary algorithmic components. 3) \textbf{Distinctness}: It addresses instability issues that Layer Normalization~\cite{LayerNorm} cannot. 4) \textbf{Complementarity}: We constructed a new baseline, termed SimbaV2-SPL, by enhancing the current SOTA, SimbaV2~\cite{SimbaV2}. Notably, \ours achieves additional gains even on top of this robust architecture (SimbaV2-SPL +\ours). Furthermore, ablation studies and spectral analysis provide additional verification of our design choices and theoretical soundness.

%% file: sections/limitations_and_future_work.tex
\label{sec:limit_future}
While our method effectively mitigates degradation at high UTD, it does not fully eliminate instability in extreme regimes. Another limitation is that, although we performed partial validation on pixel-based visual RL during the review process, we have not yet carried out a full-scale evaluation in that domain. Since the current study mainly focuses on low-dimensional states, extending and thoroughly validating the method in visual RL, where feature statistics are more complex, remains an important direction for future work.

%% file: appendix/proofs.tex
\label{app:proofs}
In this section, we provide detailed proofs for the conflict between zero-centering and the spectral properties of Self-Predictive Learning (SPL), as discussed in Section~\ref{sec:method:analysis}.

\subsection{Proof of Lemma~\ref{lemma:centering}}
\label{proof:lemma:centering}
\begin{proof}
    Recall that the centering matrix is defined as $H = I_N - \frac{1}{N}\mathbf{1}\mathbf{1}^\top$, where $I_N$ is the identity matrix of size $N$, and $\mathbf{1} \in \mathbb{R}^N$ is a column vector of ones. Let $\mathbf{c} = c\mathbf{1}$ be an arbitrary constant vector with a scalar $c \in \mathbb{R}$.
    
    We compute the matrix-vector product $H\mathbf{c}$ as follows:
    \begin{align}
        H\mathbf{c} &= \left( I_N - \frac{1}{N}\mathbf{1}\mathbf{1}^\top \right) (c\mathbf{1}) \nonumber \\
        &= c\mathbf{1} - \frac{c}{N}\mathbf{1}(\mathbf{1}^\top\mathbf{1}).
    \end{align}
    Since $\mathbf{1}$ is a vector of ones, the dot product $\mathbf{1}^\top\mathbf{1}$ sums to the dimension $N$:
    \begin{equation}
        \mathbf{1}^\top\mathbf{1} = \sum_{i=1}^{N} 1 \cdot 1 = N.
    \end{equation}
    Substituting this back into the equation:
    \begin{align}
        H\mathbf{c} &= c\mathbf{1} - \frac{c}{N}\mathbf{1}(N) \nonumber \\
        &= c\mathbf{1} - c\mathbf{1} \nonumber \\
        &= \mathbf{0}.
    \end{align}
    Thus, applying the centering matrix $H$ to any constant vector results in the zero vector. This formally demonstrates that the centering operation removes the constant component (DC component) from any signal.
\end{proof}

\subsection{Proof of Proposition~\ref{prop:elimination}}
\label{proof:prop:elimination}

\begin{proof}[Proof of Proposition~\ref{prop:elimination}]
We build upon the theoretical framework of Tang et al.~\cite{bi-SPL}.
Their analysis implies that maximizing the SPL objective implicitly maximizes

\begin{equation}
\mathrm{Tr}\!\left((\Phi^\top P^\pi \Phi)^\top(\Phi^\top P^\pi \Phi)\right)
\quad\text{subject to}\quad
\Phi^\top\Phi=I_K,
\end{equation}
and therefore the optimal representation $\Phi^*$ spans a principal subspace
corresponding with the top-$K$ eigenmodes of $P^\pi$.

To make the argument precise, we will use the following lemma (Lemma~\ref{lem:stochastic_rho1}),
which characterizes the leading eigenvalue/eigenvector of a row-stochastic transition matrix.
In particular, it allows us to choose the principal eigenvalue as $\lambda_1=1$ with
corresponding eigenvector proportional to the constant vector $\mathbf{1}$.

\begin{lemma}[Spectral radius of a row-stochastic matrix]
\label{lem:stochastic_rho1}
Let $A\in\mathbb{R}^{n\times n}$ be \emph{row-stochastic}, i.e.,
$A_{ij}\ge 0$ for all $i,j$ and $\sum_{j=1}^n A_{ij}=1$ for all $i$.
Then:
\begin{enumerate}
    \item $1$ is an eigenvalue of $A$ with right eigenvector $\mathbf{1}$, i.e. $A\mathbf{1}=\mathbf{1}$.
    \item Every eigenvalue $\lambda$ of $A$ satisfies $|\lambda|\le 1$. In particular, no eigenvalue satisfies $\lambda>1$.
\end{enumerate}
\end{lemma}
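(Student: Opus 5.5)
The plan is to prove the two claims of Lemma~\ref{lem:stochastic_rho1} separately, using only the two defining properties of a row-stochastic matrix: nonnegativity of the entries and unit row sums.

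\textbf{Claim 1} is a direct computation. For each row $i$,
\[
(A\mathbf{1})_i=\sum_{j=1}^n A_{ij}\cdot 1=1 .
\]
Hence $A\mathbf{1}=\mathbf{1}$. Since $\mathbf{1}\neq\mathbf{0}$, this shows that $1$ is an eigenvalue of $A$ with right eigenvector $\mathbf{1}$. This mirrors the style of the proof of Lemma~\ref{lemma:centering}.

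\textbf{Claim 2.} Let $\lambda\in\mathbb{C}$ be any eigenvalue with eigenvector $x\in\mathbb{C}^n\setminus\{\mathbf{0}\}$, so $Ax=\lambda x$. Choose an index $i^\ast$ maximizing $|x_i|$, so that $|x_{i^\ast}|=\|x\|_\infty>0$. Reading off row $i^\ast$ and applying the triangle inequality together with $A_{i^\ast j}\ge 0$ gives
\[
|\lambda|\,|x_{i^\ast}|=\Bigl|\sum_j A_{i^\ast j}x_j\Bigr|\le \sum_j A_{i^\ast j}|x_j|\le \|x\|_\infty\sum_j A_{i^\ast j}=\|x\|_\infty .
\]
Dividing by $\|x\|_\infty$ yields $|\lambda|\le 1$.

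An equivalent route is to note that the induced $\infty$-norm of $A$ is its maximum absolute row sum, which equals $1$, and that the spectral radius is bounded by any induced norm. Either version works, and I would present the elementary one above. The consequence that no real eigenvalue exceeds $1$ is then immediate.

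\textbf{Main obstacle.} The only point requiring care is the use of complex eigenvalues and eigenvectors, since $A$ need not be symmetric. The argument must therefore be carried out with moduli over $\mathbb{C}$ rather than by ordering real entries. With that handled, combining the two claims shows that $\rho(A)=1$ and that this value is attained with eigenvector $\mathbf{1}$. This justifies taking $\lambda_1=1$ and $u_1\propto\mathbf{1}$ in the surrounding proof of Proposition~\ref{prop:elimination}, after which Lemma~\ref{lemma:centering} gives $H u_1=\mathbf{0}$.
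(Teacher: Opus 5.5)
Your proof is correct and takes essentially the same route as the paper: Claim 1 uses the identical row-sum computation, and Claim 2 uses the same $\infty$-norm contraction $|(Ax)_i|\le\|x\|_\infty$. The only difference is that you evaluate it at a maximizing index $i^\ast$, whereas the paper bounds every coordinate and then uses $\|\lambda x\|_\infty=\|Ax\|_\infty$; your explicit allowance for complex $\lambda$ and $x$ is a small gain in care that the paper leaves implicit.
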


\begin{proof}
(1) Since each row of $A$ sums to $1$, for every $i$,
\begin{equation}
(A\mathbf{1})_i=\sum_{j=1}^n A_{ij}\cdot 1 = 1,
\end{equation}
hence $A\mathbf{1}=\mathbf{1}$.

(2) Let $x\neq 0$ and define $M:=\|x\|_\infty=\max_j |x_j|$.
For each coordinate,
\begin{equation}
|(Ax)_i|
= \left|\sum_{j=1}^n A_{ij}x_j\right|
\le \sum_{j=1}^n A_{ij}|x_j|
\le \sum_{j=1}^n A_{ij}M
= M,
\end{equation}
where we used $A_{ij}\ge 0$ and $\sum_j A_{ij}=1$.
Thus $\|Ax\|_\infty\le \|x\|_\infty$.
If $Ax=\lambda x$, then
\begin{equation}
|\lambda|\,\|x\|_\infty=\|\lambda x\|_\infty=\|Ax\|_\infty\le \|x\|_\infty,
\end{equation}
so $|\lambda|\le 1$. In particular, no eigenvalue satisfies $\lambda>1$.
\end{proof}
Now we complete the proof of Proposition~\ref{prop:elimination}.
Let $\{(\lambda_i,u_i)\}$ be eigenpairs of $P^\pi$, ordered by
$|\lambda_1|\ge |\lambda_2|\ge\cdots$.
Since $P^\pi$ is row-stochastic, by Lemma~\ref{lem:stochastic_rho1} we have
$\lambda_1=1$ and we may choose the corresponding right eigenvector as the constant
vector $u_1=\alpha\mathbf{1}$ for some $\alpha\neq 0$.

Define the orthogonal projector onto $\mathrm{span}(u_1)$ by
\begin{equation}
\label{eq:Pi_u1_def}
\Pi_{u_1} := \frac{u_1u_1^\top}{u_1^\top u_1}.
\end{equation}
Using this, define the component of the learned representation $\Phi^*$ along $u_1$ as
\begin{equation}
\label{eq:Phi_proj_def}
\Phi^*_{\mathrm{proj},u_1} := \Pi_{u_1}\Phi^*.
\end{equation}

Let $H$ denote the zero-centering matrix (Lemma~\ref{lemma:centering}), so that
$H\mathbf{1}=\mathbf{0}$.
Since $u_1=\alpha\mathbf{1}$, we have $Hu_1=\alpha H\mathbf{1}=\mathbf{0}$, and hence
\begin{equation}
\label{eq:HPi_u1_zero}
H\Pi_{u_1}
= H\frac{u_1u_1^\top}{u_1^\top u_1}
= \frac{(Hu_1)u_1^\top}{u_1^\top u_1}
= 0.
\end{equation}
Therefore,
\[
H\Phi^*_{\mathrm{proj},u_1}
= H(\Pi_{u_1}\Phi^*)
= (H\Pi_{u_1})\Phi^*
= 0,
\]
and consequently,
\begin{equation}
\label{eq:eq7}
\|H\Phi^*_{\mathrm{proj},u_1}\|_2 = 0.
\end{equation}
    This proves that the zero-centering operation mathematically annihilates the component of the representation corresponding to the global dynamics (the constant eigenmode), thereby leading to a loss of spectral information.
\end{proof}

%% file: appendix/hyperparams.tex
\label{app:hyperparams}
For baseline algorithms, including TD7~\cite{TD7}, Minimalist $\phi$~\cite{min_phi}, and SimbaV2~\cite{SimbaV2}, we adopt the default configurations provided in their respective original implementations, unless explicitly overridden by the common settings listed below. This ensures a fair comparison across all methods. All parameters are fixed across all 11 environments (4 Gym MuJoCo~\cite{Gym, MuJoCo} and 7 DMC-Hard~\cite{DMC, DMC2, SimBa} tasks).

\subsection{Common Hyperparameters}
    Table~\ref{tab:common_hyperparams} lists the shared hyperparameters applied uniformly across all agents to ensure consistent evaluation conditions. Note that for Gym MuJoCo~\cite{Gym, MuJoCo}, decision steps are equivalent to environment steps. In contrast, for DMC-Hard~\cite{DMC, DMC2} tasks, which utilize an action repeat of 2, our training budget of 500,000 decision steps corresponds to 1,000,000 environment steps.

\subsection{SimbaV2-SPL Specific Hyperparameters}
    For the backbone architecture and general optimization details, we strictly adhere to the original configurations of SimbaV2~\cite{SimbaV2}. Table~\ref{tab:simba_hyperparams} lists only the additional parameters introduced for the tailored SPL framework.

    \begin{table*}[!ht]
        \centering
        \begin{minipage}[t]{0.48\textwidth}
            \input{tables/hyperparams_common} 
        \end{minipage}
        \hfill 
        \begin{minipage}[t]{0.48\textwidth}
            \input{tables/hyperparams_simba}
        \end{minipage}
        \vskip -0.1in
    \end{table*}

%% file: tables/hyperparams_common.tex
  \caption{Common Hyperparameters for Training.}
  \label{tab:common_hyperparams}
  \begin{center}
    \begin{footnotesize}
      \begin{sc}
        \setlength{\tabcolsep}{15pt} 
        \renewcommand{\arraystretch}{1.2} 
        \begin{tabular}{lc}
          \toprule
          Parameter & Value \\
          \midrule
          \textbf{General Settings} \\
          Total Decision Steps & 500,000 \\
          Action Repeat (Gym) & 1 \\
          Action Repeat (DMC) & 2 \\
          Number of Seeds & 5 \\
          \midrule
          \textbf{Regularization} \\
          $\lambda_{\text{RR}}$ & 0.01 \\
          $\lambda_{\text{Var}}$ & 0.01 \\
          $v_{th}$ & 1.0 \\
          \bottomrule
        \end{tabular}
      \end{sc}
    \end{footnotesize}
  \end{center}
  \vskip -0.1in

%% file: tables/hyperparams_simba.tex
  \caption{SimbaV2-SPL specific Hyperparameters.}
  \label{tab:simba_hyperparams}
  \begin{center}
    \begin{footnotesize}
      \begin{sc}
        \setlength{\tabcolsep}{10pt}
        \renewcommand{\arraystretch}{1.2}
        \begin{tabular}{lc}
          \toprule
          Parameter & Value \\
          \midrule
          \textbf{Network Architecture} \\
          Encoder($\phi$) Width & 128 \\
          Encoder($\phi$) Depth & 1 \\
          Predictor($\mathcal{T}$) Width & 256 \\
          Predictor($\mathcal{T}$) Depth & 3 \\
          \midrule
          \textbf{Optimization \& Init.} \\
          SPL Learning Rate (Init) & $3 \times 10^{-4}$ \\
          SPL Learning Rate (End) & $5 \times 10^{-5}$ \\
          Encoder C-Shift & 3.0 \\
          \bottomrule
        \end{tabular}
      \end{sc}
    \end{footnotesize}
  \end{center}
  \vskip -0.1in

%% file: appendix/environments.tex
\begin{table}[!ht]
  \caption{\textbf{Environment specifications.} We list the observation and action dimensions for the Gym MuJoCo~\cite{Gym, MuJoCo} and DMC-Hard~\cite{DMC, DMC2} benchmarks.}
  \label{tab:env_dims}
  \begin{center}
    \begin{footnotesize}
      \begin{sc}
        \setlength{\tabcolsep}{3.5pt} 
        \renewcommand{\arraystretch}{1.1} 
        \begin{tabular}{lcc}
          \toprule
          Environment & \shortstack{Observation\\Dimension} & \shortstack{Action\\Dimension} \\
          \midrule
          \textbf{Gym MuJoCo} \\
          Ant-v5       & 105 & 8 \\
          Walker2d-v5  & 17 & 6 \\
          Hopper-v5    & 11 & 3 \\
          Humanoid-v5  & 348 & 17 \\
          \midrule
          \textbf{DMC-Hard} \\
          Humanoid-Walk     & 67 & 21 \\
          Humanoid-Run      & 67 & 21 \\
          Humanoid-Stand    & 67 & 21 \\
          Dog-Trot     & 223 & 38 \\
          Dog-Walk     & 223 & 38 \\
          Dog-Stand    & 223 & 38 \\
          Dog-Run      & 223 & 38 \\
          \bottomrule
        \end{tabular}
      \end{sc}
    \end{footnotesize}
  \end{center}
  \vskip -0.1in
\end{table}

%% file: appendix/metrics.tex
\label{app:metric_details}

In this section, we provide the mathematical definition of the metrics used in our evaluation. Let $R(i, k)$ denote the evaluation return at the $k$-th checkpoint for the $i$-th random seed. Given a total of $K=50$ uniformly distributed checkpoints, we focus on the final 20\% of training, which corresponds to the last $M=10$ checkpoints.

\textbf{Average Return.} The temporal mean return $S(i)$ for a specific seed $i$ is defined as:
\begin{equation}
    S(i) = \frac{1}{M} \sum_{k=K-M+1}^{K} R(i, k)
    \label{eq:raw_score_app}
\end{equation}

\textbf{Normalized Score.} To aggregate results across environments with varying reward scales and to robustly \textbf{handle potentially negative returns}, we compute the normalized score $\hat{S}(i)$. We define the normalization relative to the aggregate performance of the baseline algorithm. The normalized score is calculated as:
\begin{equation}
    \hat{S}(i) = 1 + \frac{S(i) - \mathbb{E}_{j} \left[ S_{\text{base}}(j) \right]}{\left| \mathbb{E}_{j} \left[ S_{\text{base}}(j) \right] \right|}
    \label{eq:norm_score_app}
\end{equation}
where $\mathbb{E}_{j}[\cdot]$ denotes the empirical expectation (average) over the baseline seeds $j \in \{1, \dots, N\}$. Here, the subscript ``$\text{base}$'' refers to the baseline algorithm trained at $\text{UTD}=1$. Consequently, $S_{\text{base}}(j)$ represents the average return of the baseline's $j$-th seed, computed using the same protocol as in Eq.~\eqref{eq:raw_score_app}:
\begin{equation}
    S_{\text{base}}(j) = \frac{1}{M} \sum_{k=K-M+1}^{K} R_{\text{base}}(j, k)
\end{equation}
where $R_{\text{base}}(j, k)$ denotes the evaluation return of the baseline algorithm (trained at $\text{UTD}=1$) at the $k$-th checkpoint for the $j$-th seed.

%% file: appendix/full_results.tex
\label{app:full_results}
This section presents the complete set of experimental results. Before detailing the full tables and figures, we address a key statistical observation regarding the performance metrics.

\textbf{Metric.} To ensure a rigorous evaluation, we report the Interquartile Mean (IQM)~\cite{IQM} alongside the standard mean. As shown in the subsequent tables, while \ours consistently outperforms the baselines in IQM, the performance gap is notably wider in the standard mean. This discrepancy suggests that our performance gains stem primarily from improved robustness. The mean metric is sensitive to outliers, including catastrophic failures where agents suffer from representation collapse in high UTD regimes. Since IQM discards the bottom 25\% of the data distribution, it effectively masks these instability issues inherent in the baselines. \textbf{In contrast, by incorporating our proposed regularization term into the loss function, \ours effectively mitigates these failures and significantly improves the performance of ``worst-case'' seeds.} Consequently, the larger gain in Mean (which accounts for preventing failures) compared to IQM (which ignores them) confirms that our regularization acts as a crucial ``safety net,'' enhancing the overall reliability of the algorithm.

\textbf{Common Observations across Baselines.} As shown in all tables below, \ours yields scores comparable to or higher than the baseline across most tasks at $\text{UTD}=1$. However, we observe a performance drop in some high-dimensional environments like \texttt{Humanoid-\{Walk, Run\}}. This reflects a characteristic trade-off: the regularization terms ($\mathcal{L}_{\text{Var}}$ and $\mathcal{L}_{\text{RR}}$) actively enforce feature diversity, preventing premature convergence. While this may slightly delay policy specialization in standard regimes ($\text{UTD}=1$), it acts as a critical stabilizer in data-intensive settings. Indeed, at $\text{UTD}=20$, \ours surpasses the baseline even in these tasks, confirming that \textbf{the benefits of regularization outweigh the initial cost.}

\clearpage
\subsection{TD7 baseline}
    \input{tables/full_raw_td7}

\begin{figure*}[!ht]
    \begin{center}
    \textbf{TD7 Baseline}
    \end{center}
    \vspace{0.5em}
    \begin{center}
    \includegraphics[width=\textwidth]{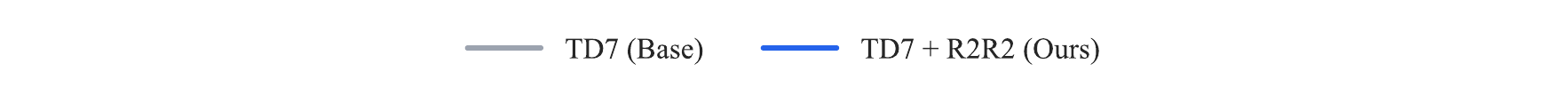}
    \end{center}
    \vspace{0.5em}
    \noindent
    \includegraphics[width=0.245\textwidth]{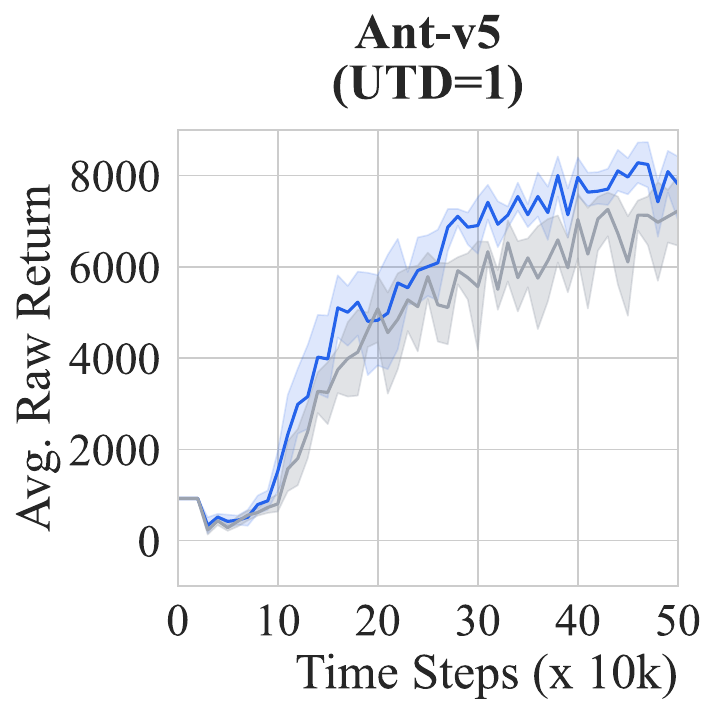}%
    \includegraphics[width=0.245\textwidth]{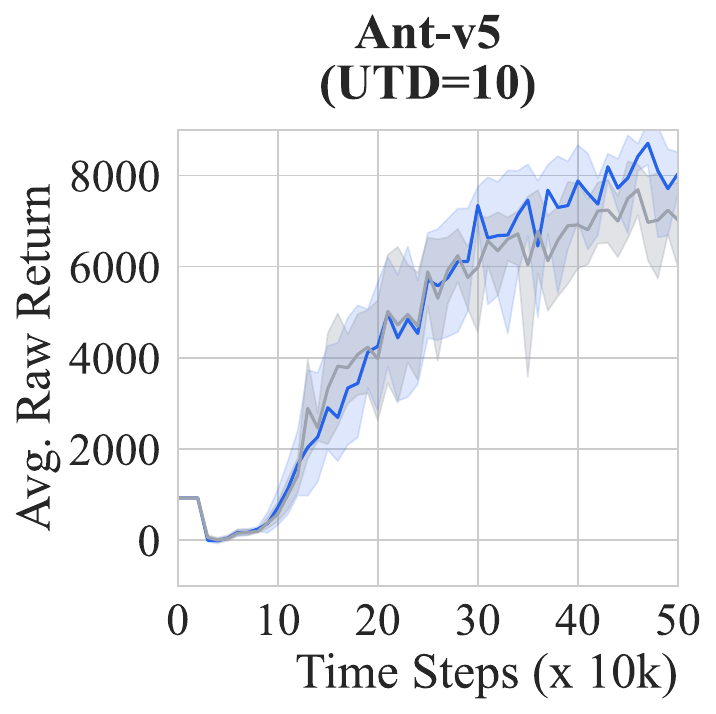}%
    \includegraphics[width=0.245\textwidth]{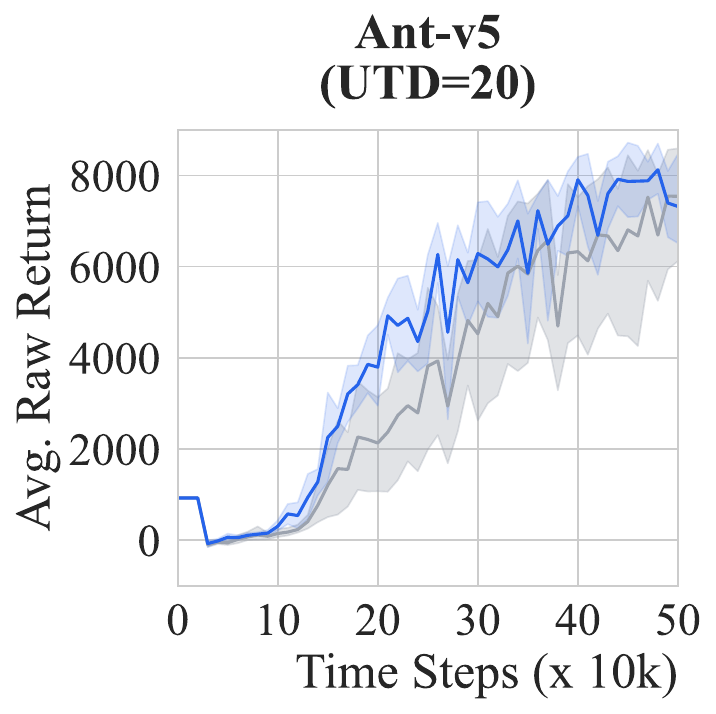}%
    \includegraphics[width=0.245\textwidth]{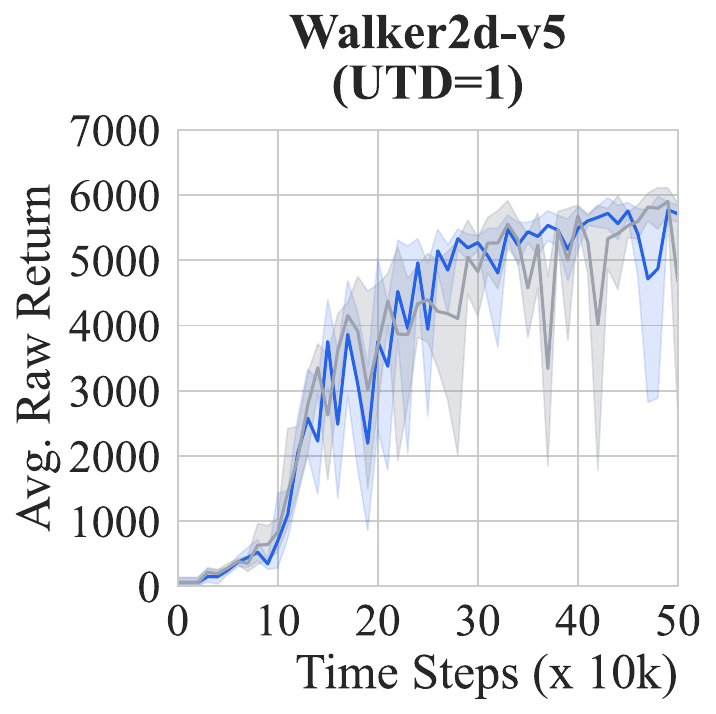}\\[0.5em]
    \includegraphics[width=0.245\textwidth]{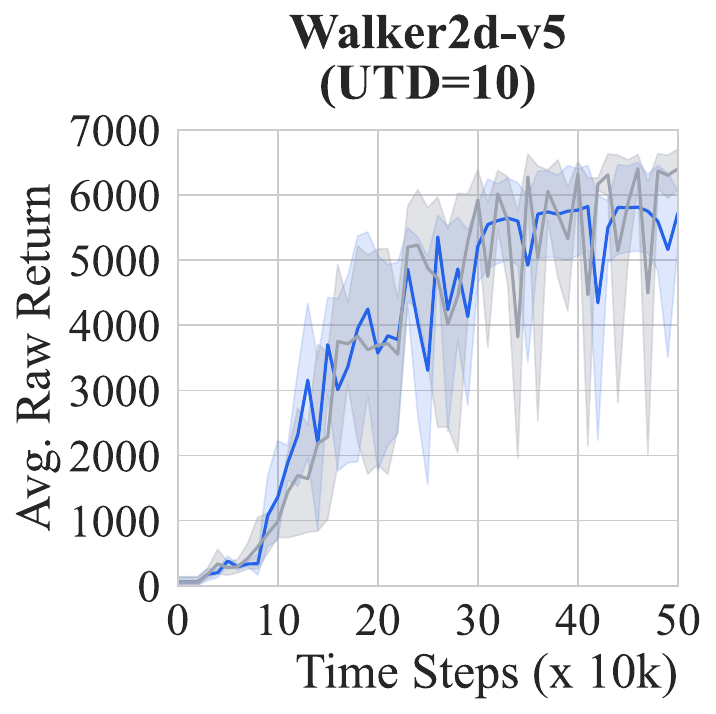}%
    \includegraphics[width=0.245\textwidth]{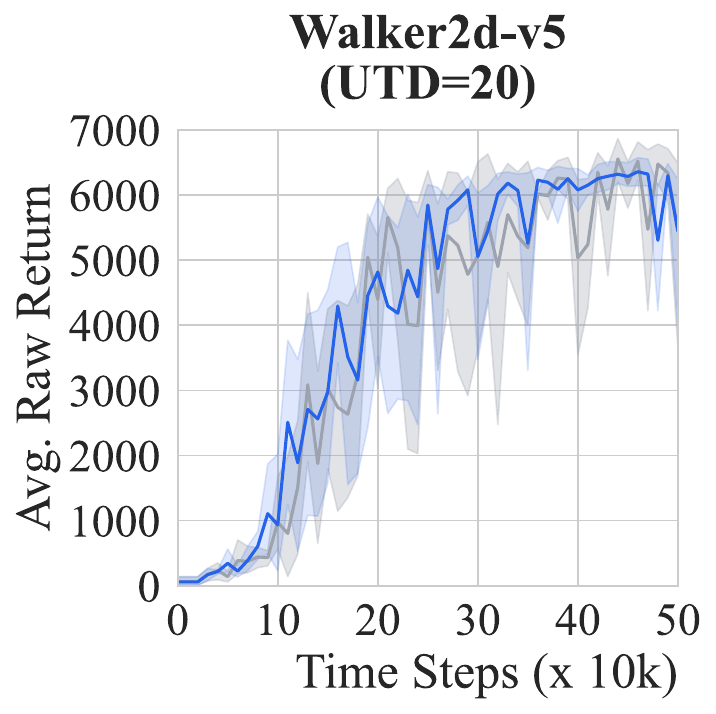}%
    \includegraphics[width=0.245\textwidth]{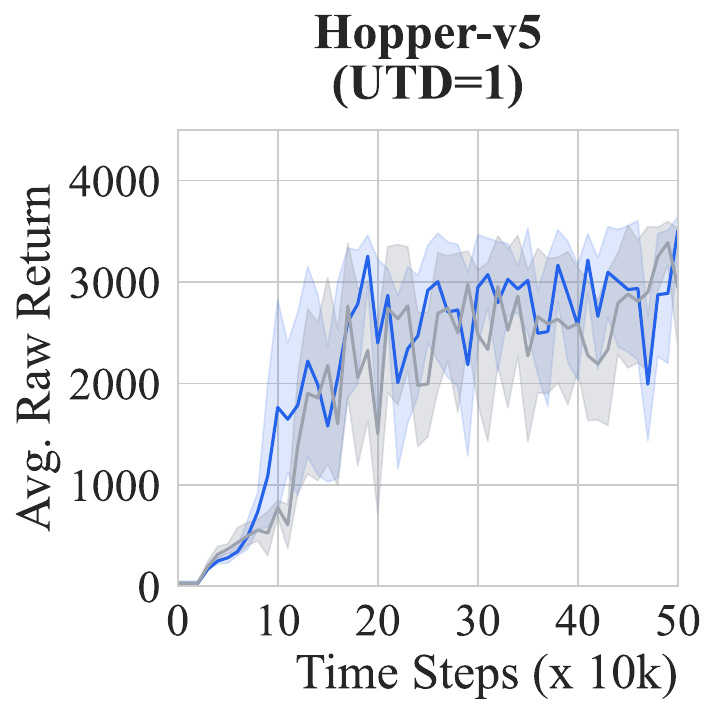}%
    \includegraphics[width=0.245\textwidth]{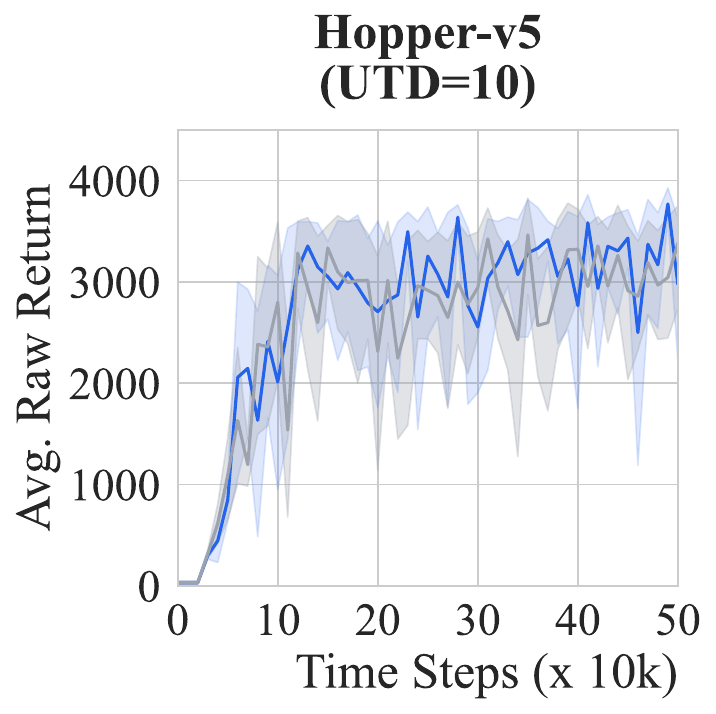}\\[0.5em]
    \includegraphics[width=0.245\textwidth]{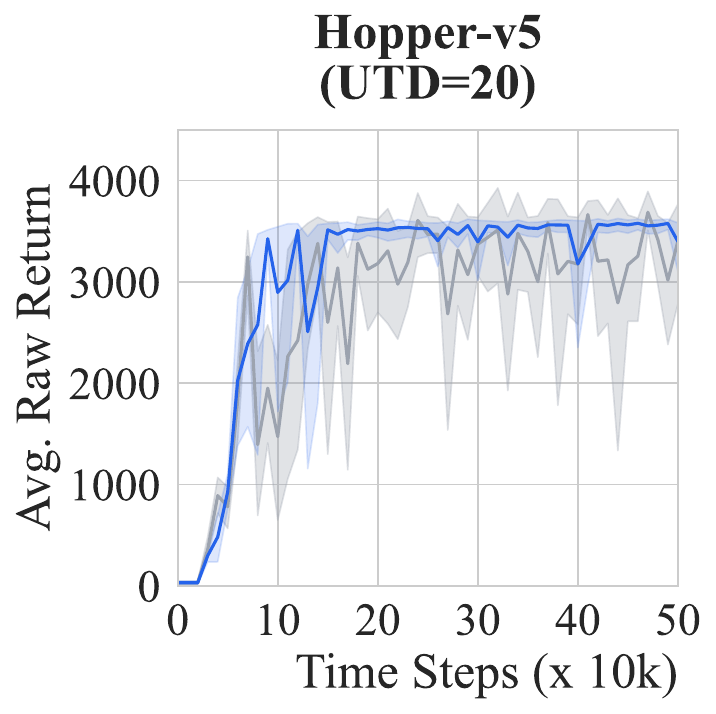}%
    \includegraphics[width=0.245\textwidth]{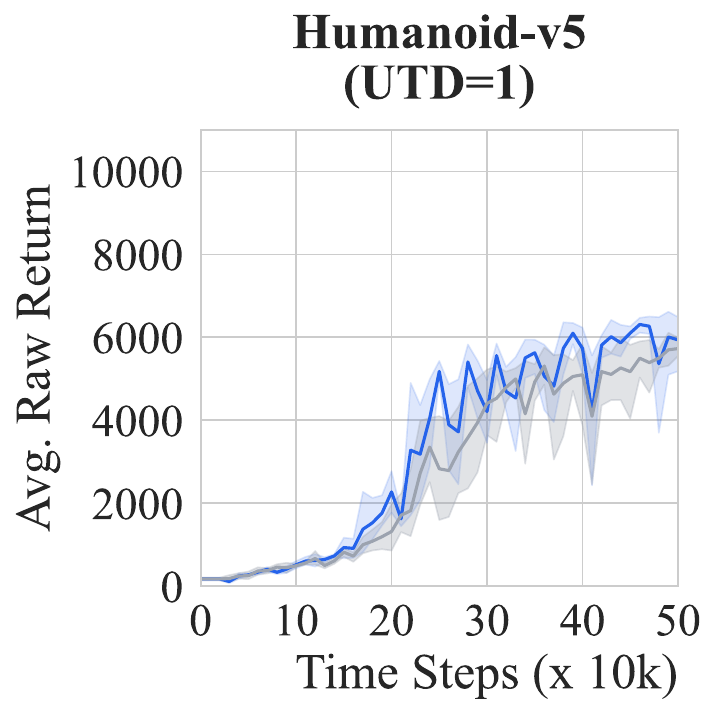}%
    \includegraphics[width=0.245\textwidth]{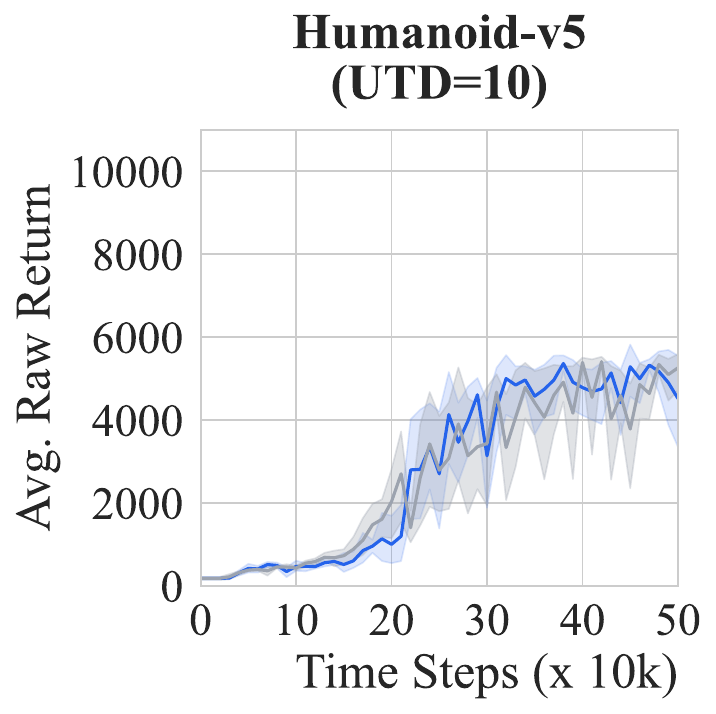}%
    \includegraphics[width=0.245\textwidth]{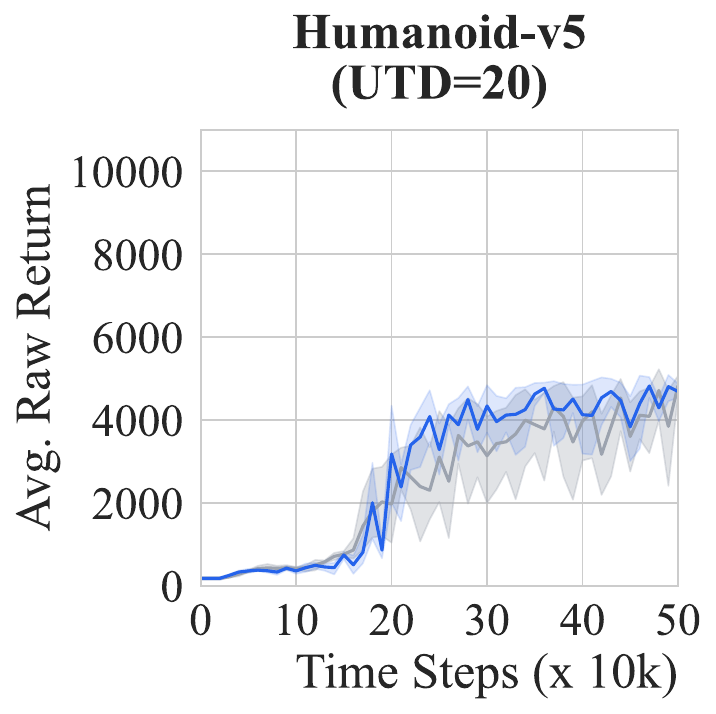}\\[0.5em]
    \includegraphics[width=0.245\textwidth]{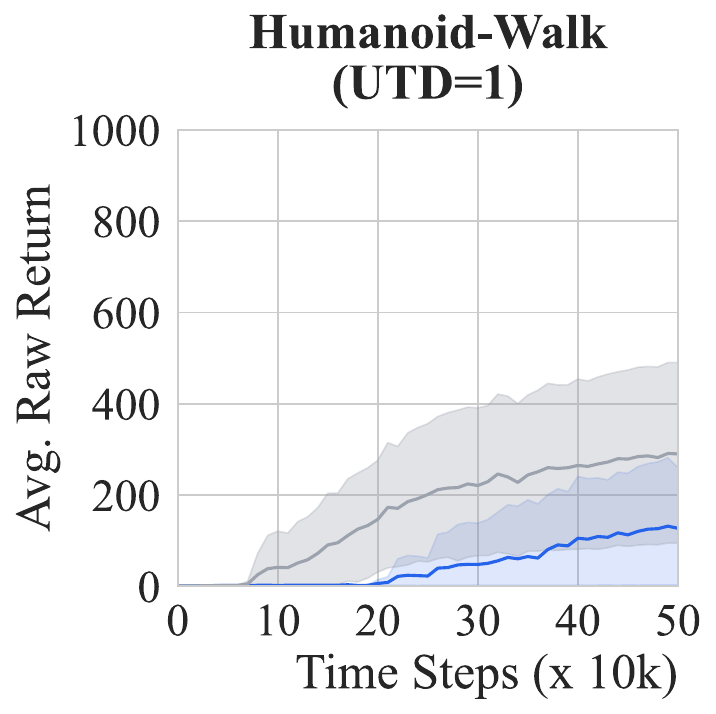}%
    \includegraphics[width=0.245\textwidth]{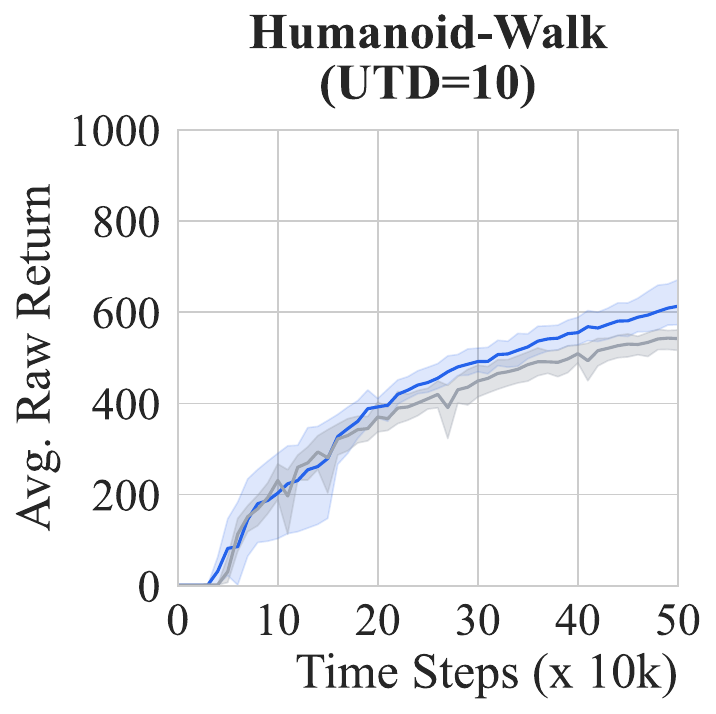}%
    \includegraphics[width=0.245\textwidth]{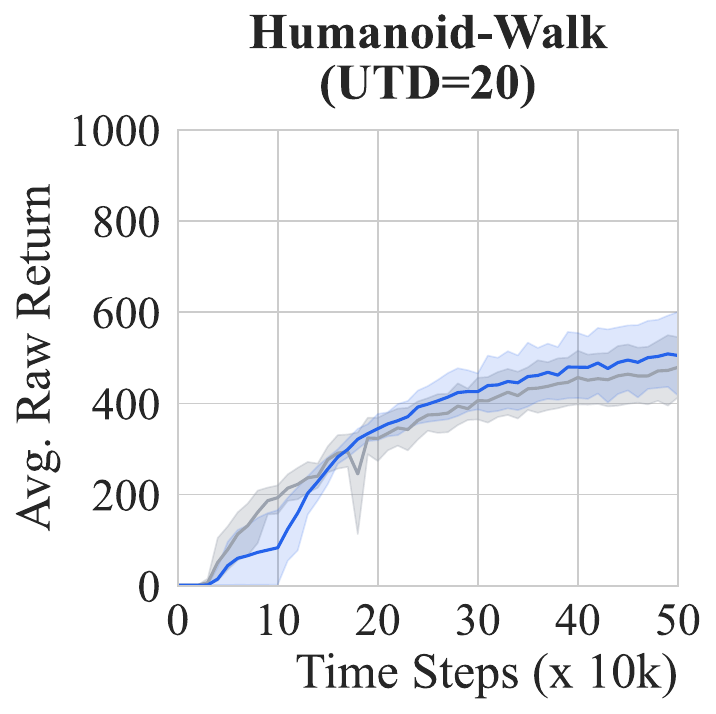}%
    \includegraphics[width=0.245\textwidth]{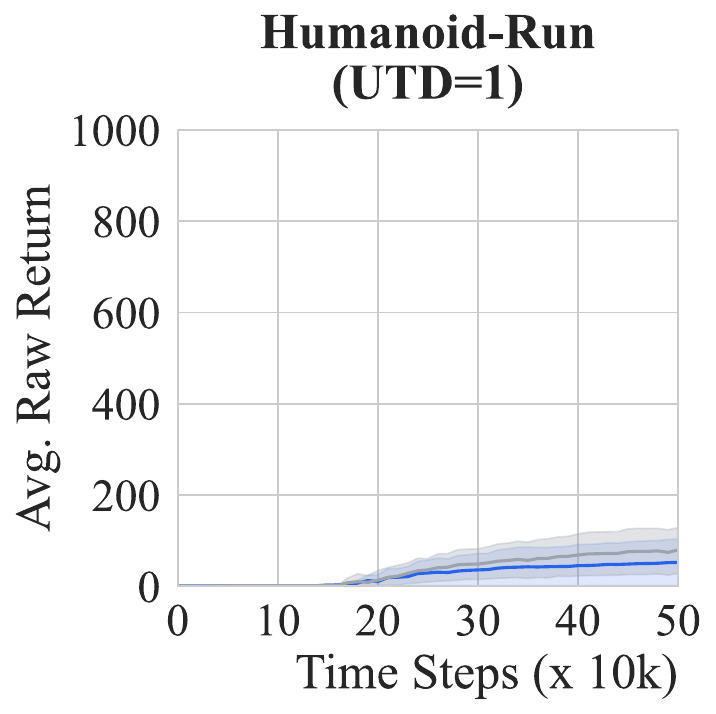}
    \label{fig:full_raw_td7_1}
\end{figure*}

\begin{figure*}[!ht]
    \noindent
    \includegraphics[width=0.245\textwidth]{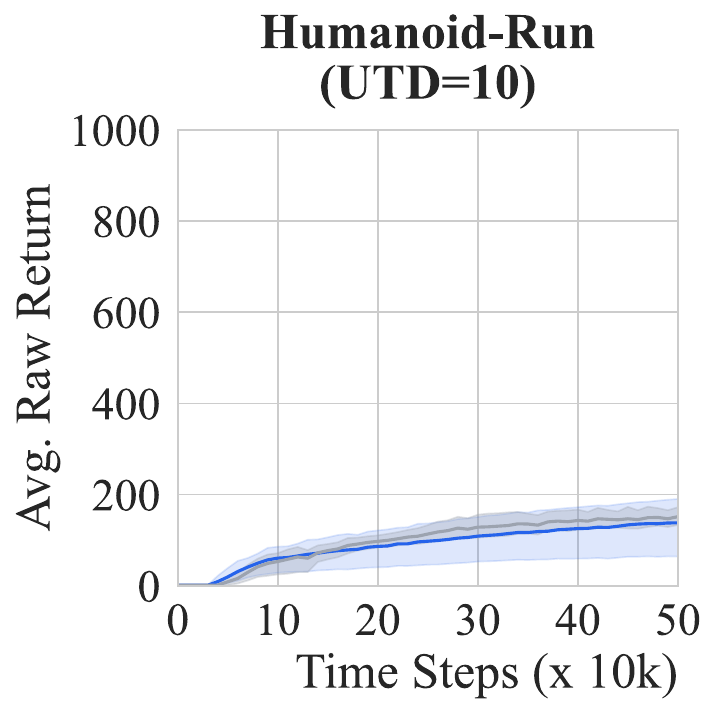}%
    \includegraphics[width=0.245\textwidth]{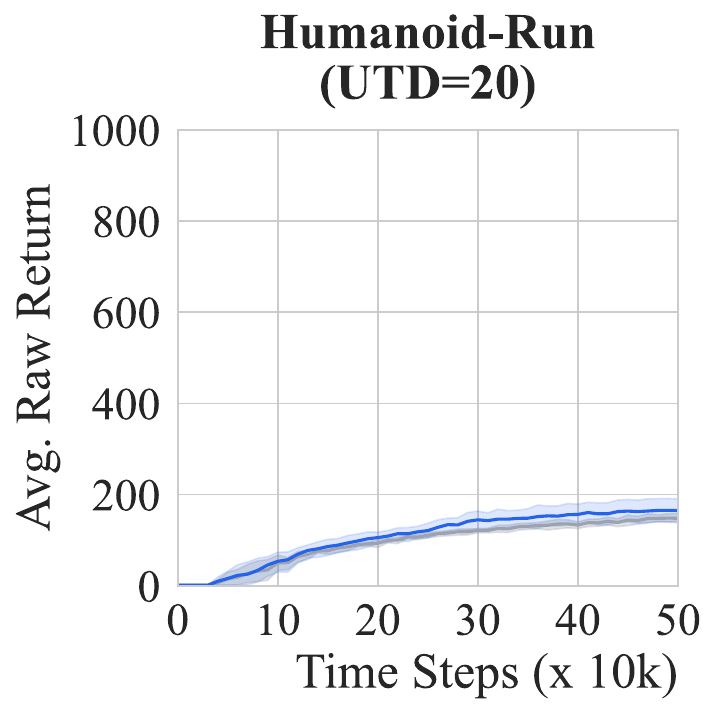}%
    \includegraphics[width=0.245\textwidth]{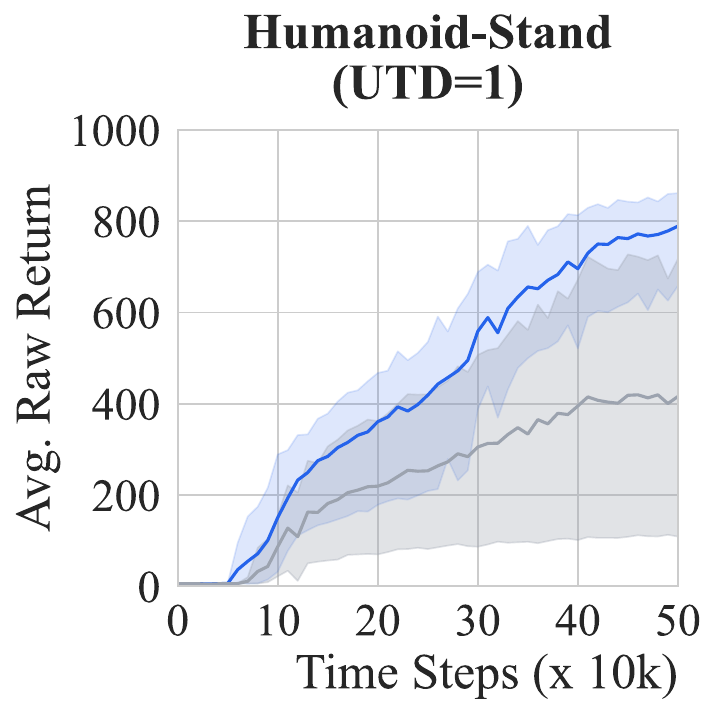}%
    \includegraphics[width=0.245\textwidth]{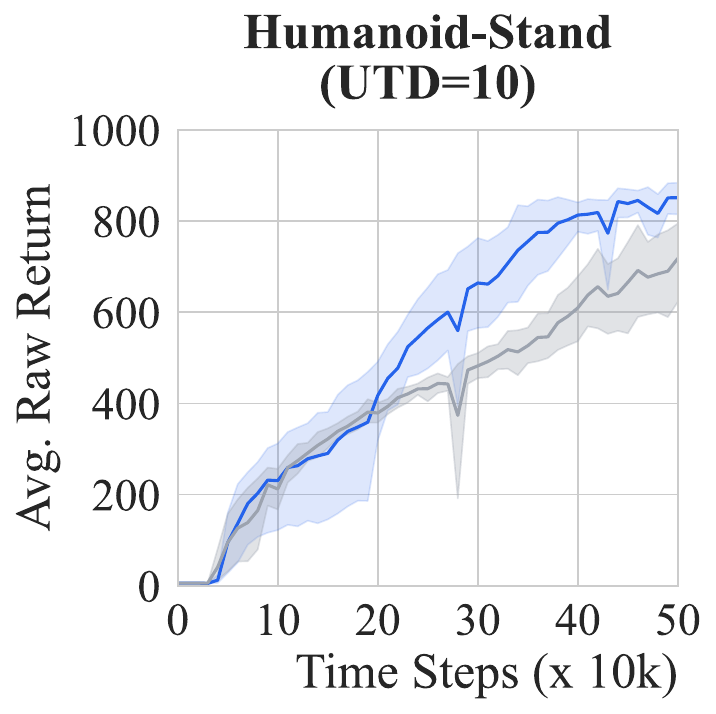}\\[0.5em]
    \includegraphics[width=0.245\textwidth]{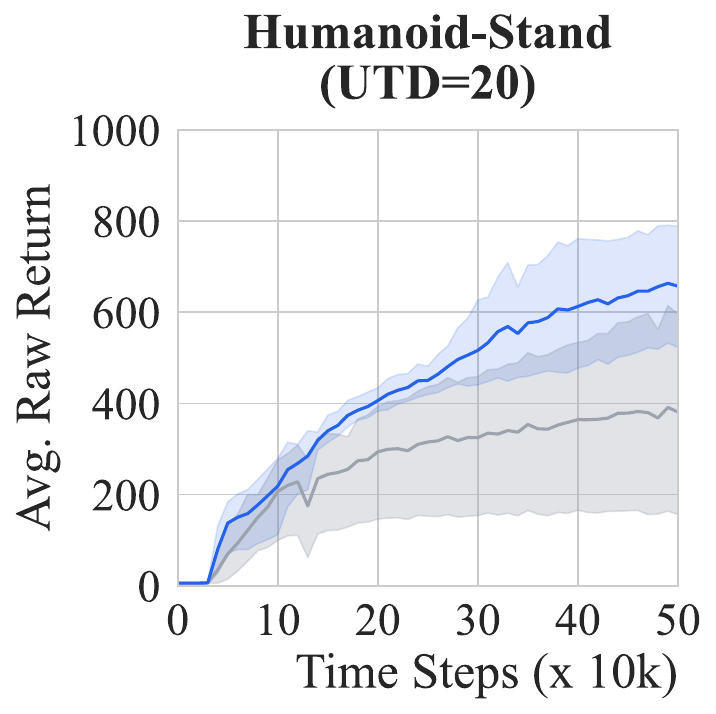}%
    \includegraphics[width=0.245\textwidth]{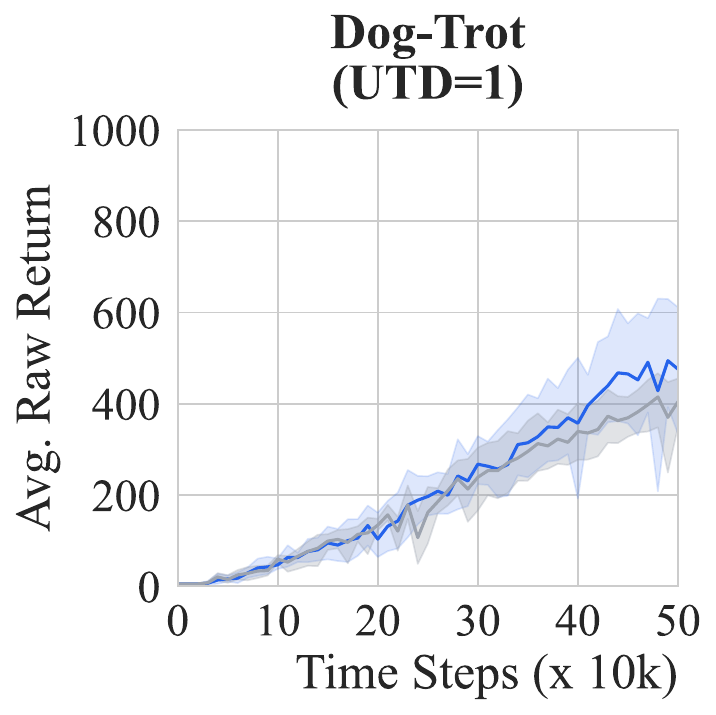}%
    \includegraphics[width=0.245\textwidth]{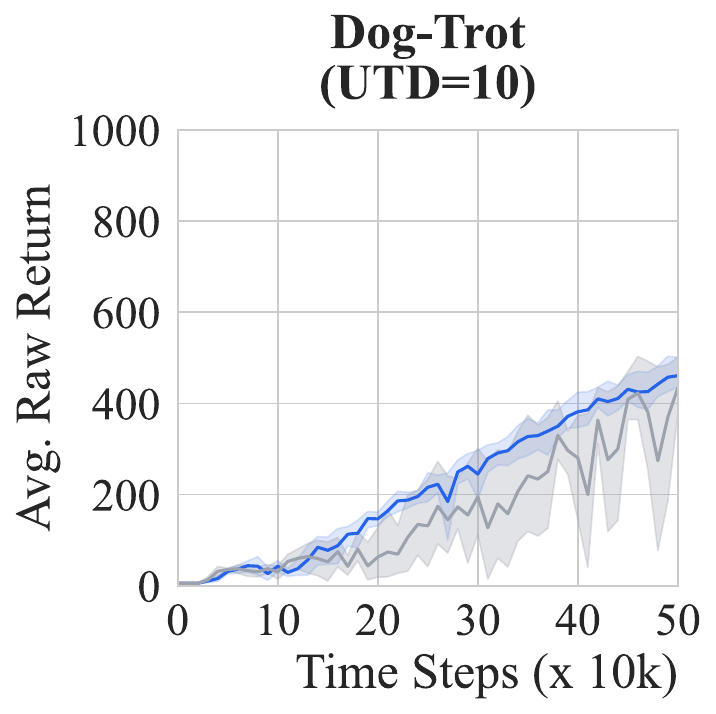}%
    \includegraphics[width=0.245\textwidth]{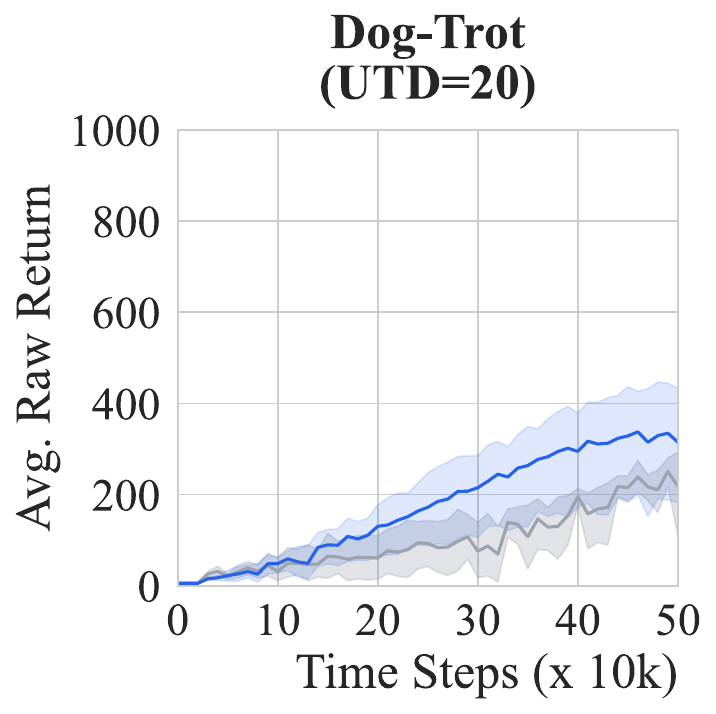}\\[0.5em]
    \includegraphics[width=0.245\textwidth]{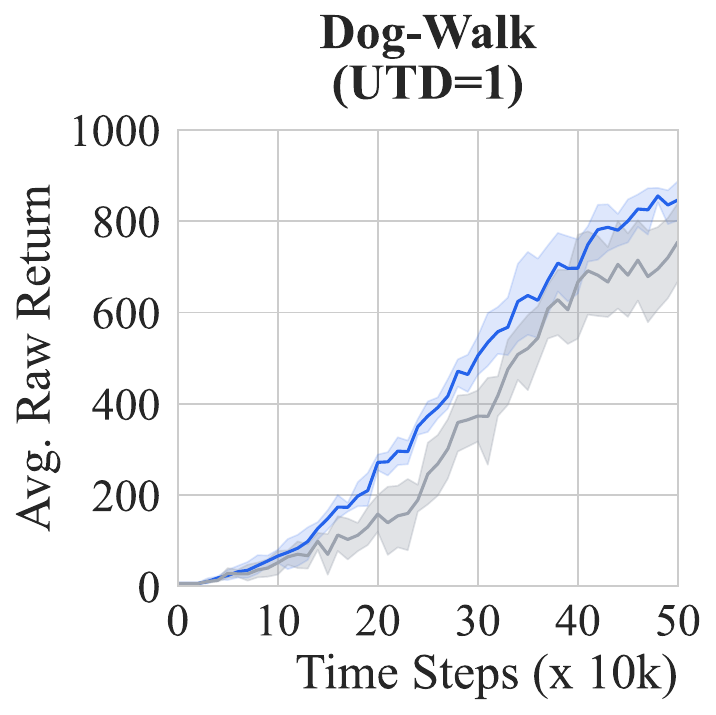}%
    \includegraphics[width=0.245\textwidth]{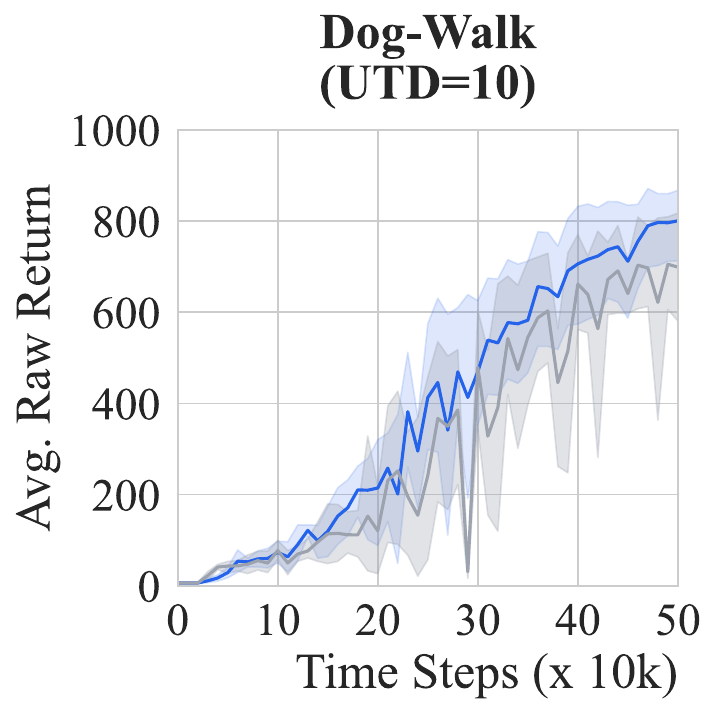}%
    \includegraphics[width=0.245\textwidth]{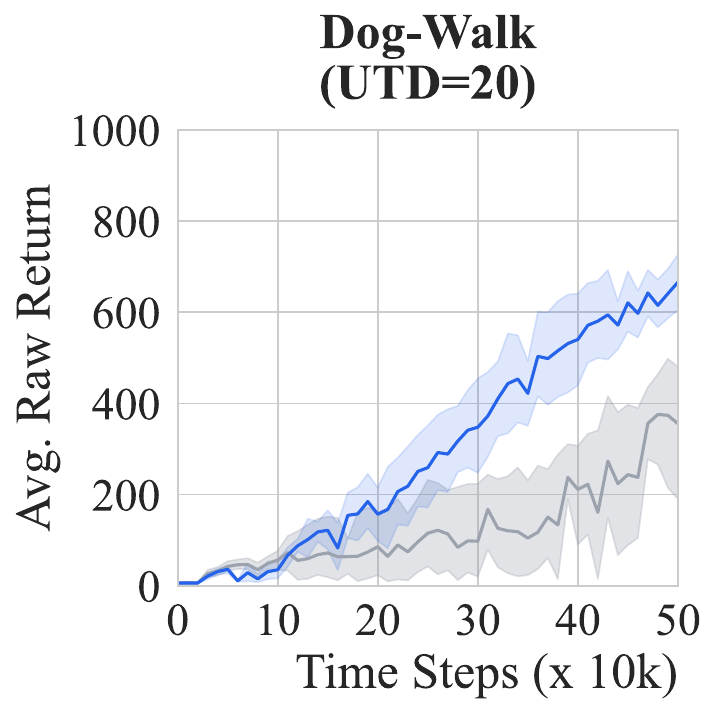}%
    \includegraphics[width=0.245\textwidth]{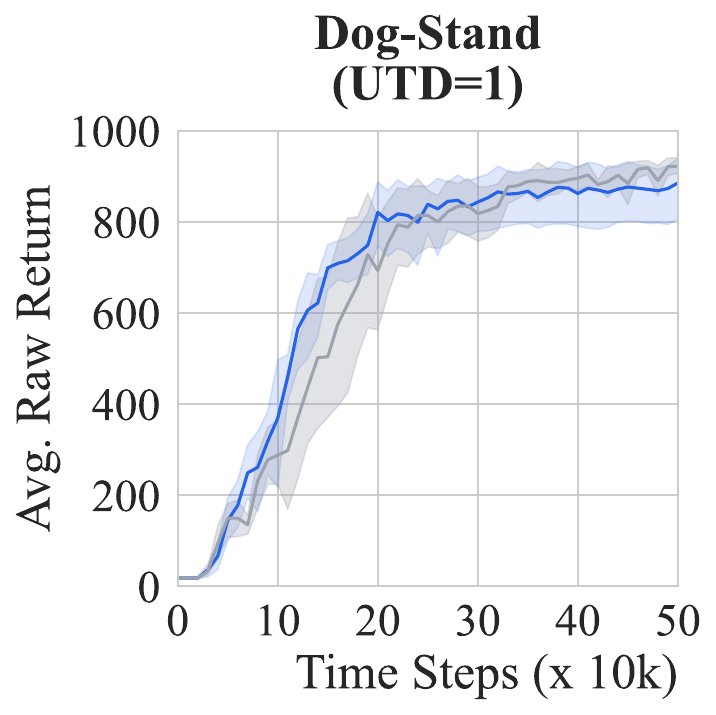}\\[0.5em]
    \includegraphics[width=0.245\textwidth]{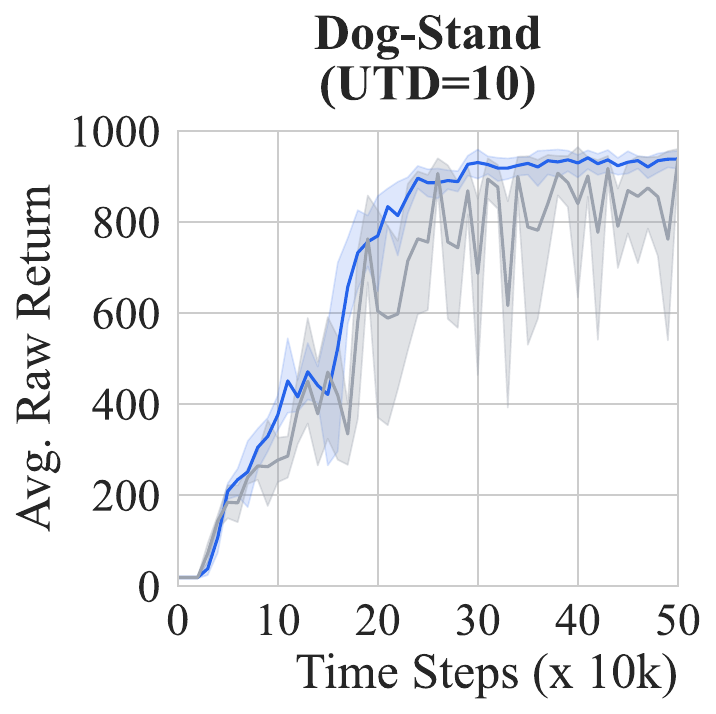}%
    \includegraphics[width=0.245\textwidth]{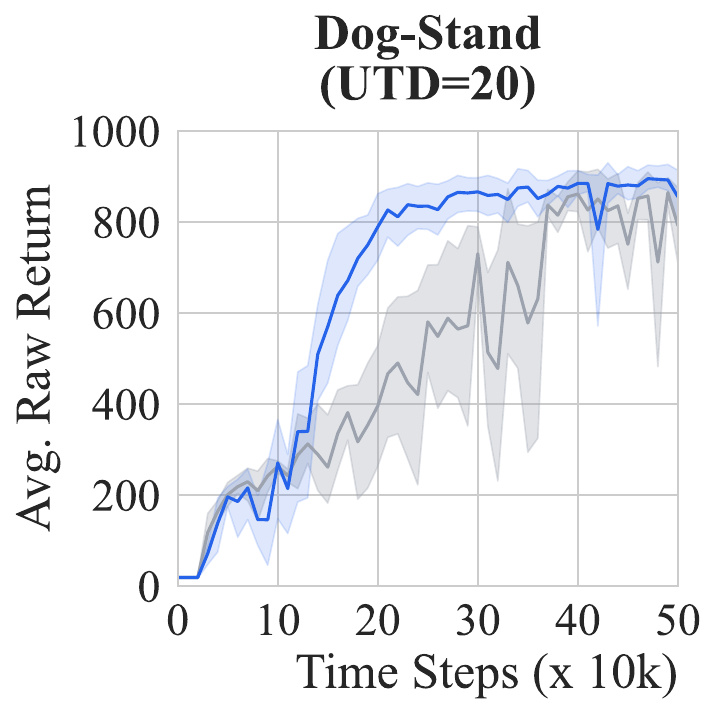}%
    \includegraphics[width=0.245\textwidth]{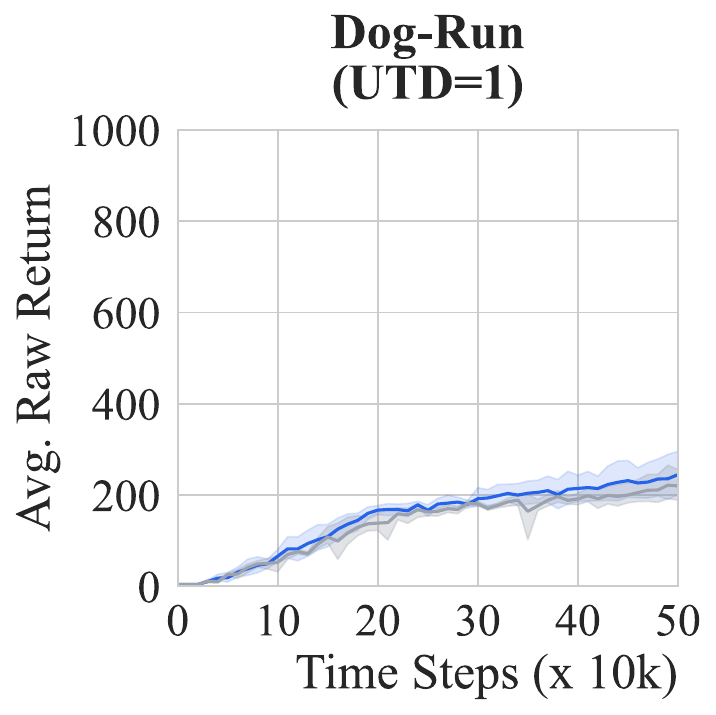}%
    \includegraphics[width=0.245\textwidth]{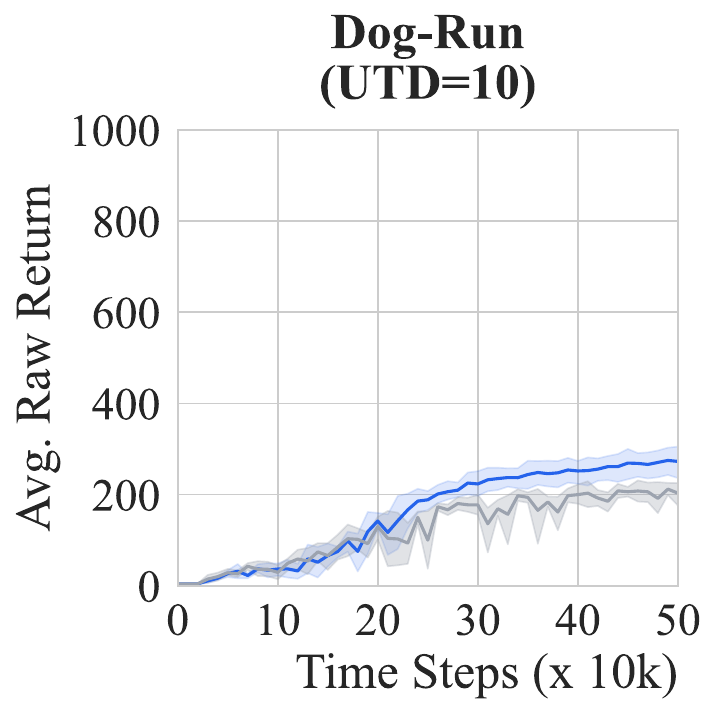}\\[0.5em]
    \includegraphics[width=0.245\textwidth]{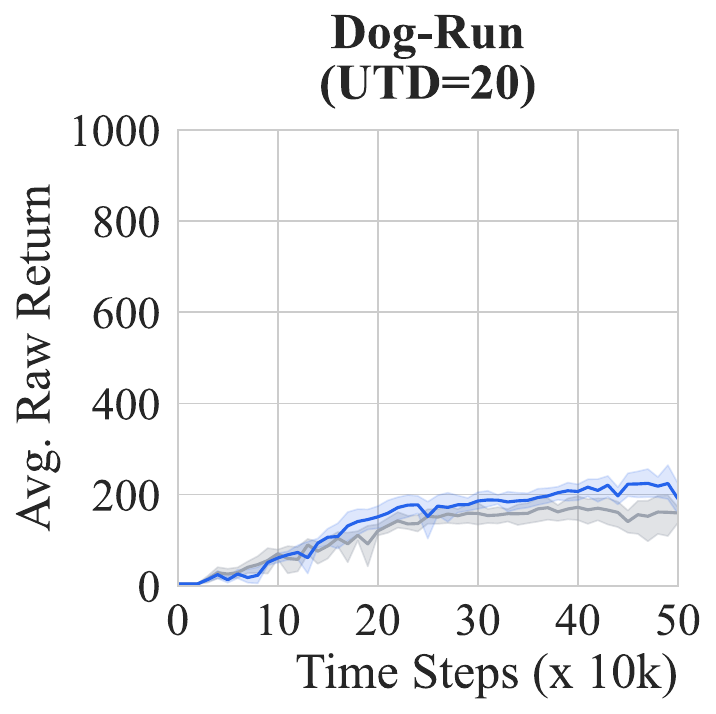}
    
    \caption{Learning curves for TD7 baseline.}
    \label{fig:full_raw_td7_2}
\end{figure*}
    \input{tables/full_norm_td7}
    
\clearpage
\subsection{Minimalist $\phi$ baseline}
    \input{tables/full_raw_min_phi}

\begin{figure*}[!ht]
    \begin{center}
    \textbf{Minimalist $\phi$ Baseline}
    \end{center}
    \vspace{0.5em}
    \begin{center}
    \includegraphics[width=\textwidth]{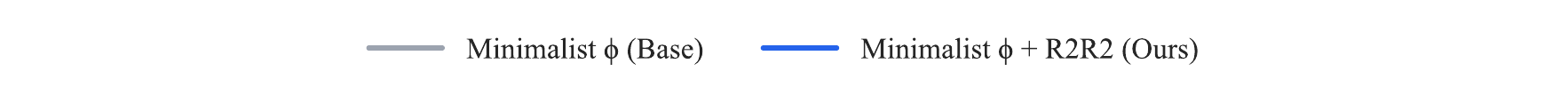}
    \end{center}
    \vspace{0.5em}
    \noindent
    \includegraphics[width=0.245\textwidth]{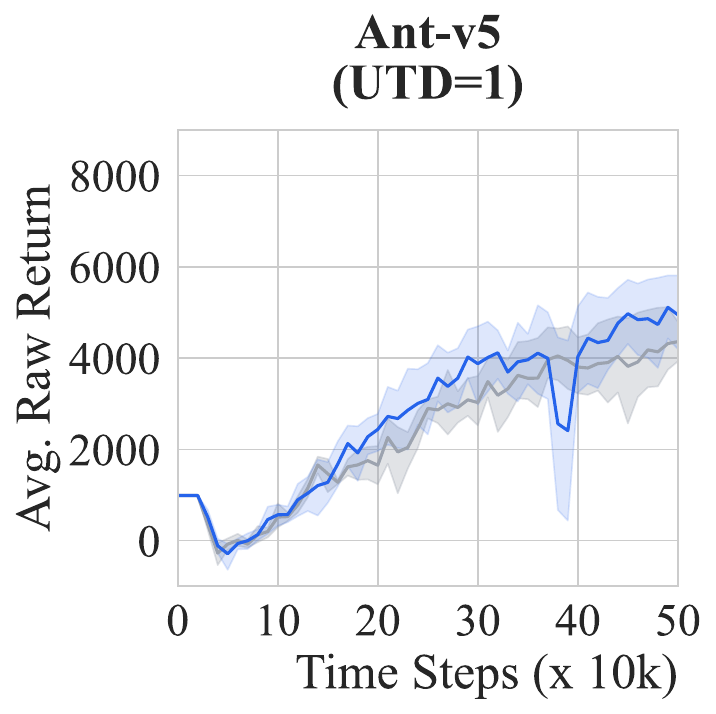}%
    \includegraphics[width=0.245\textwidth]{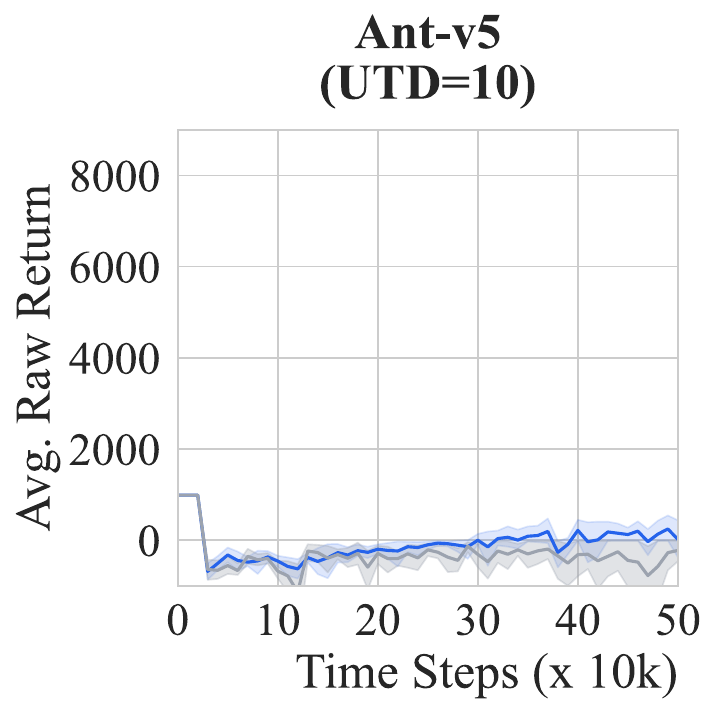}%
    \includegraphics[width=0.245\textwidth]{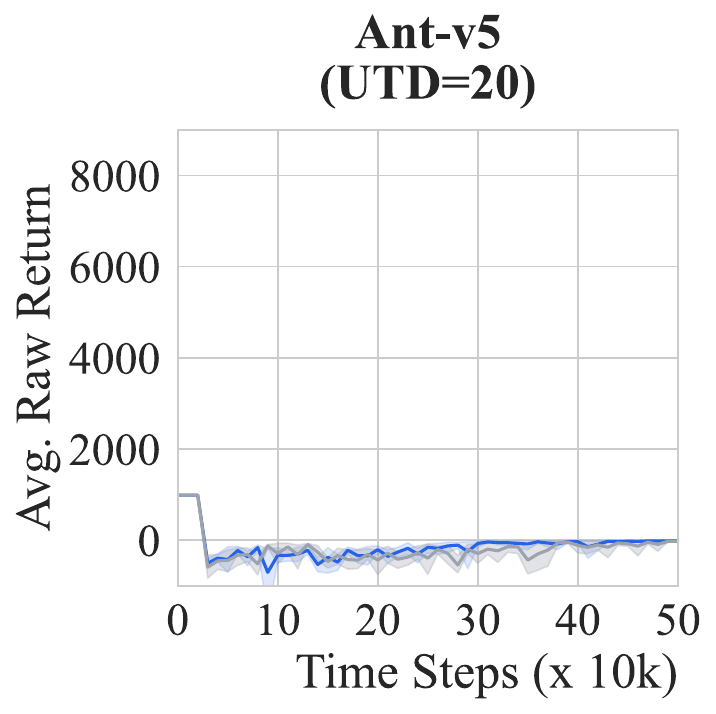}%
    \includegraphics[width=0.245\textwidth]{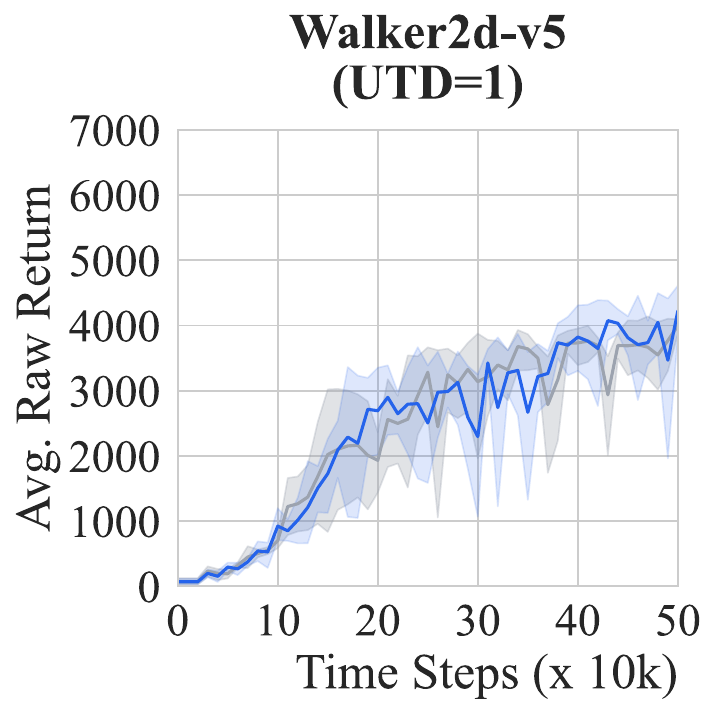}\\[0.5em]
    \includegraphics[width=0.245\textwidth]{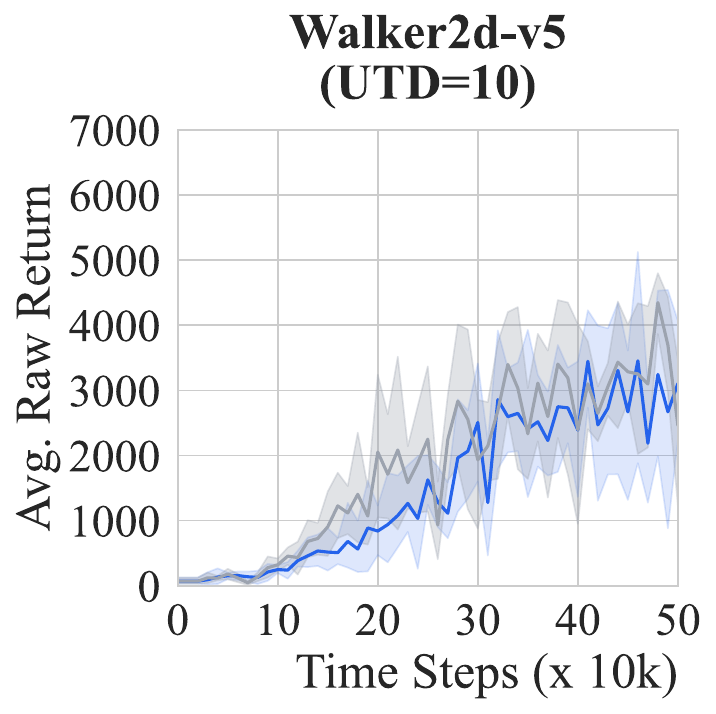}%
    \includegraphics[width=0.245\textwidth]{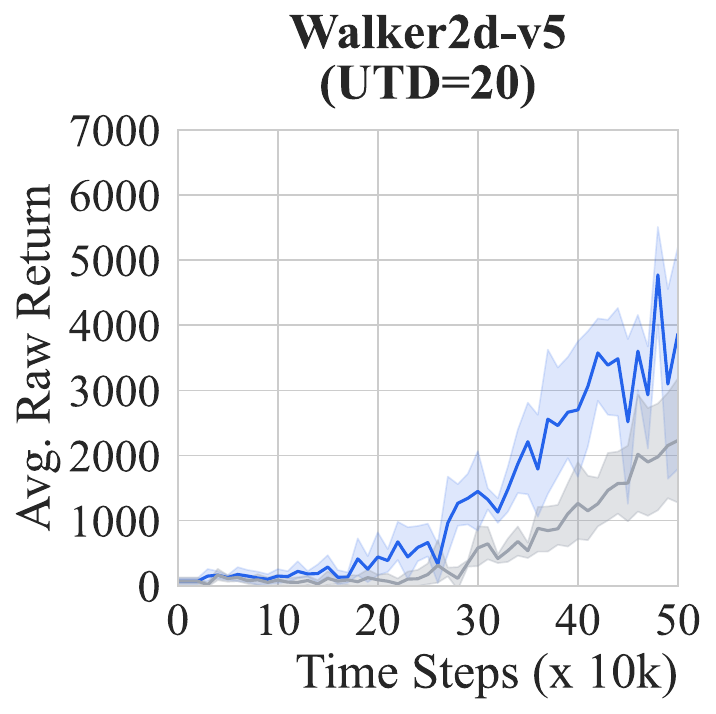}%
    \includegraphics[width=0.245\textwidth]{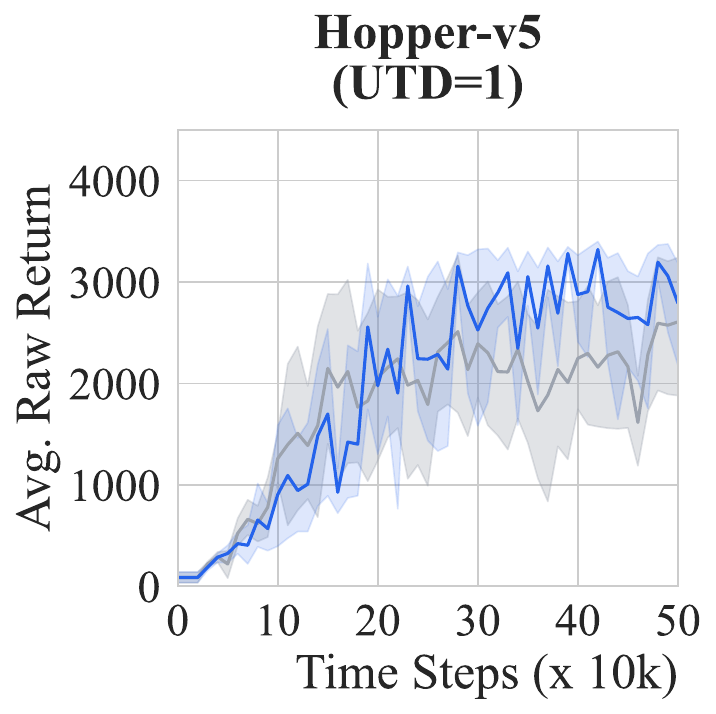}%
    \includegraphics[width=0.245\textwidth]{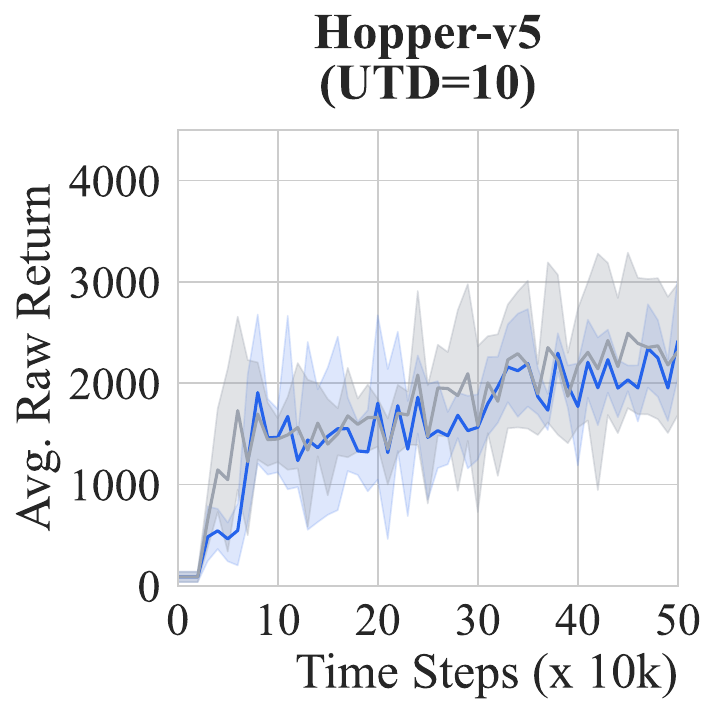}\\[0.5em]
    \includegraphics[width=0.245\textwidth]{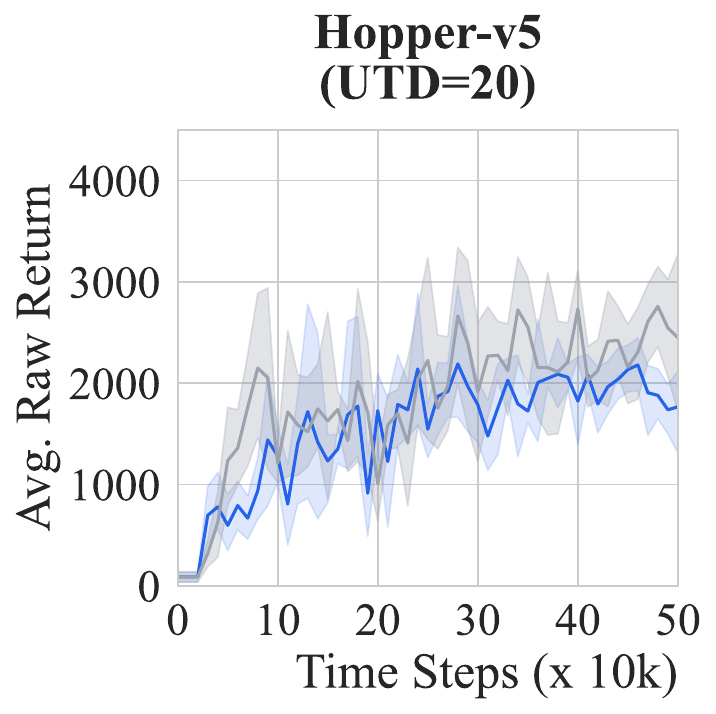}%
    \includegraphics[width=0.245\textwidth]{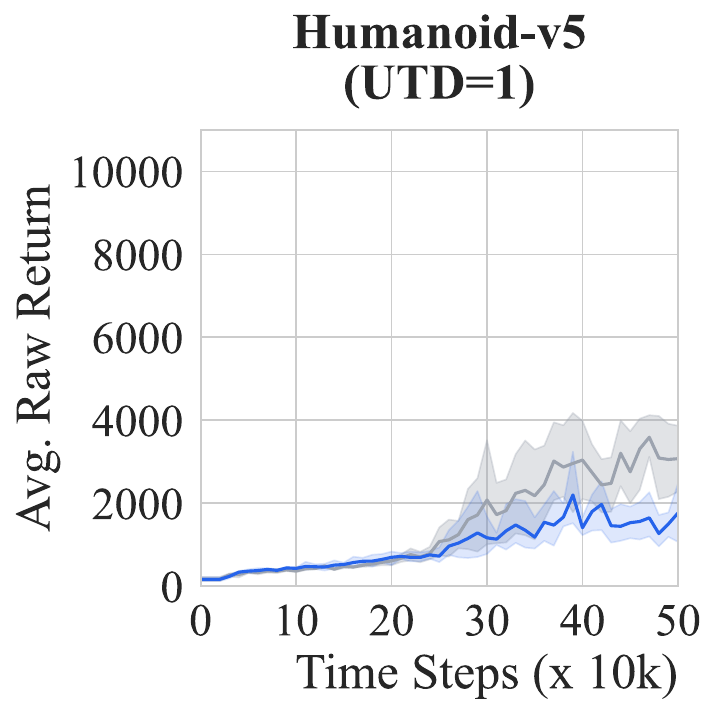}%
    \includegraphics[width=0.245\textwidth]{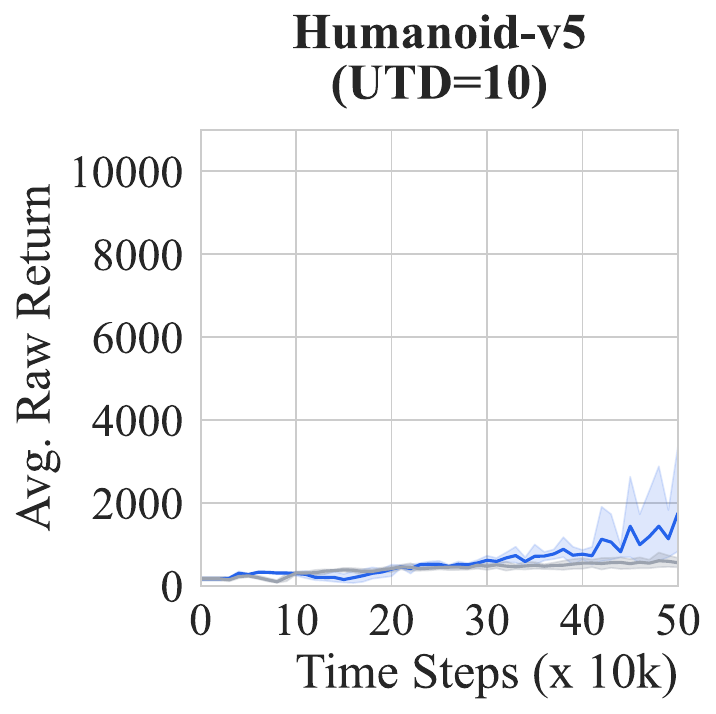}%
    \includegraphics[width=0.245\textwidth]{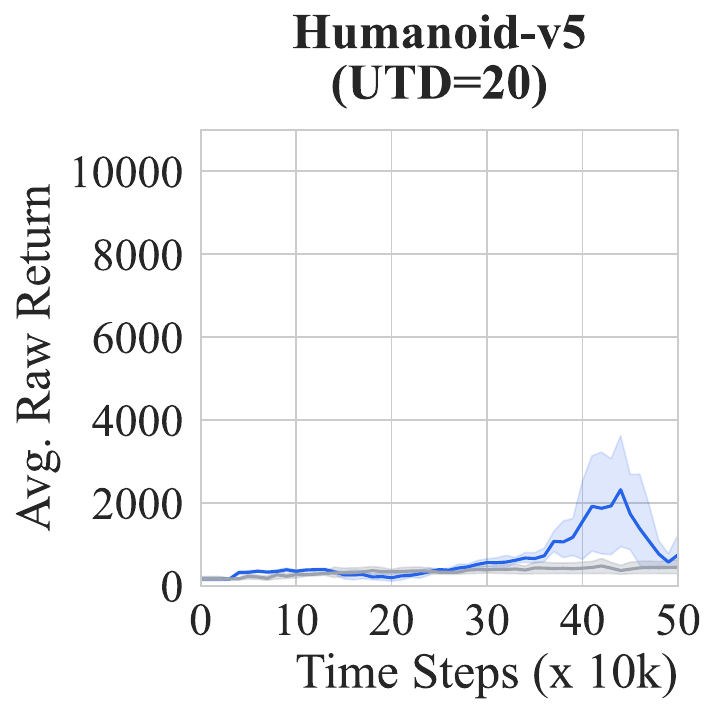}\\[0.5em]
    \includegraphics[width=0.245\textwidth]{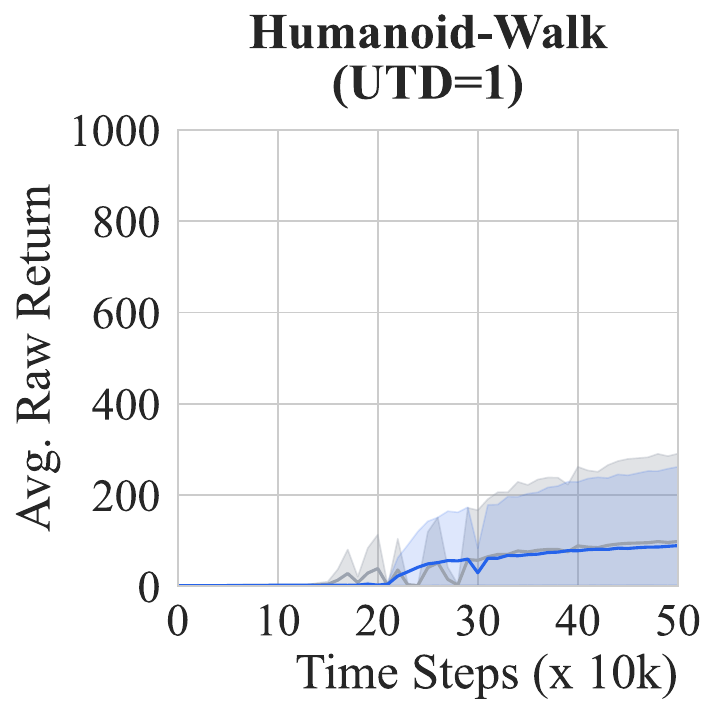}%
    \includegraphics[width=0.245\textwidth]{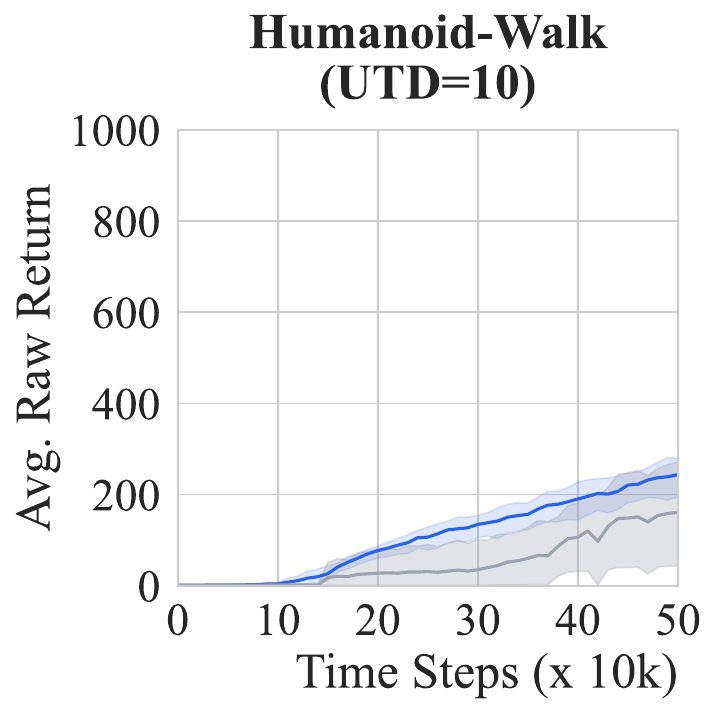}%
    \includegraphics[width=0.245\textwidth]{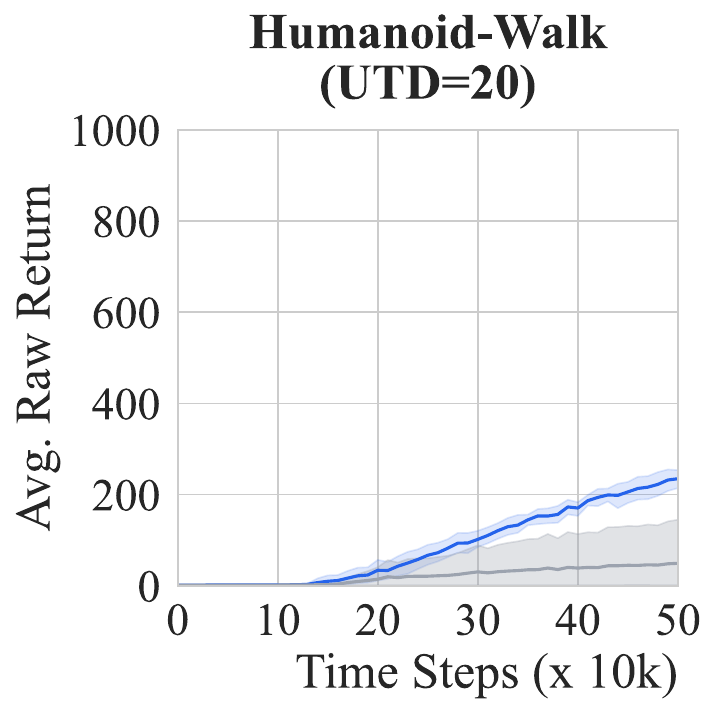}%
    \includegraphics[width=0.245\textwidth]{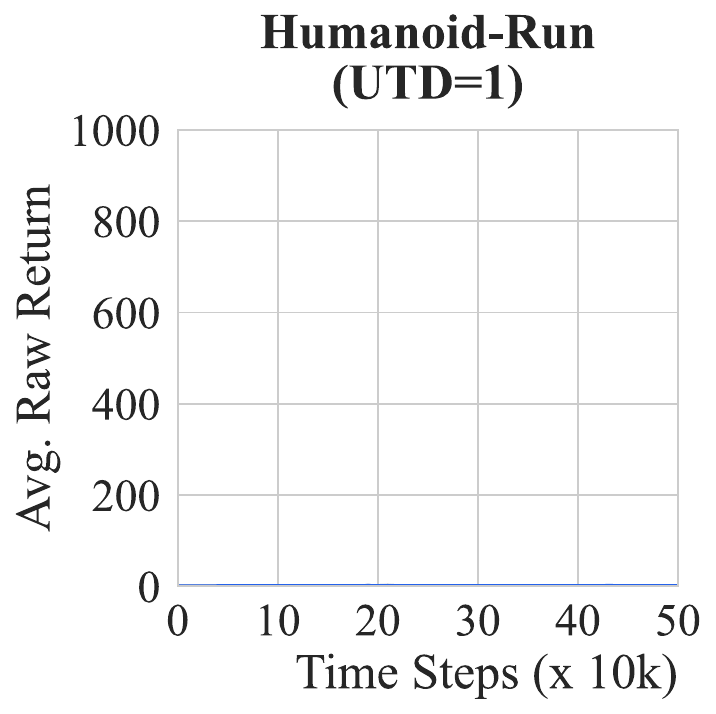}
    \label{fig:full_raw_min_phi_1}
\end{figure*}

\begin{figure*}[!ht]
    \noindent
    \includegraphics[width=0.245\textwidth]{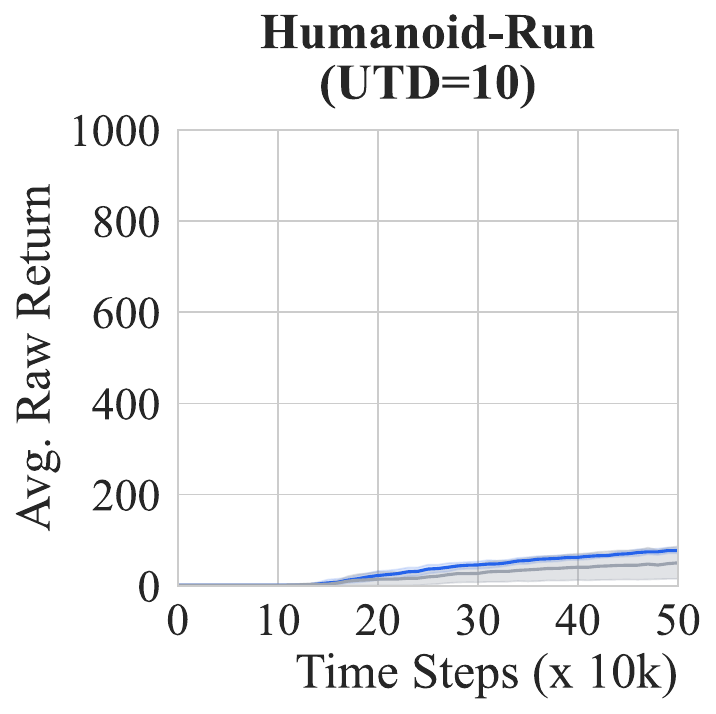}%
    \includegraphics[width=0.245\textwidth]{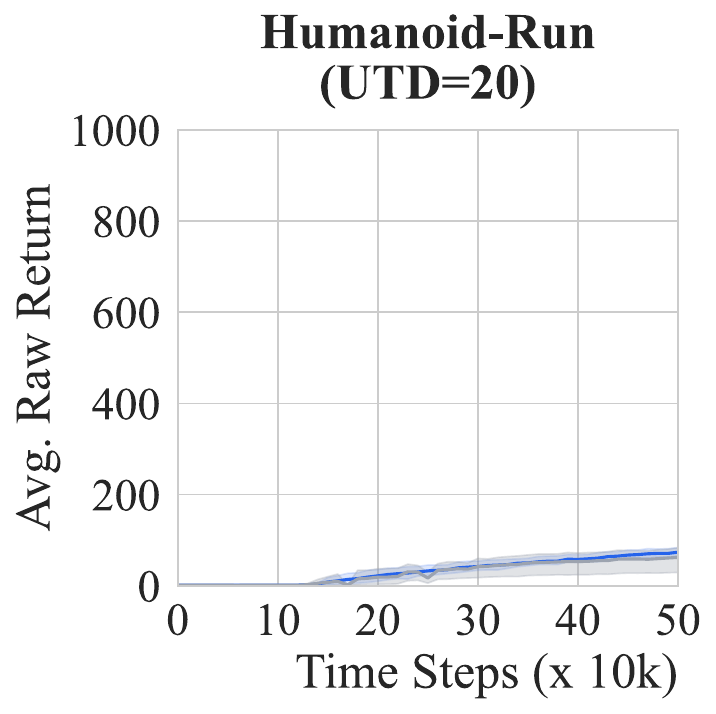}%
    \includegraphics[width=0.245\textwidth]{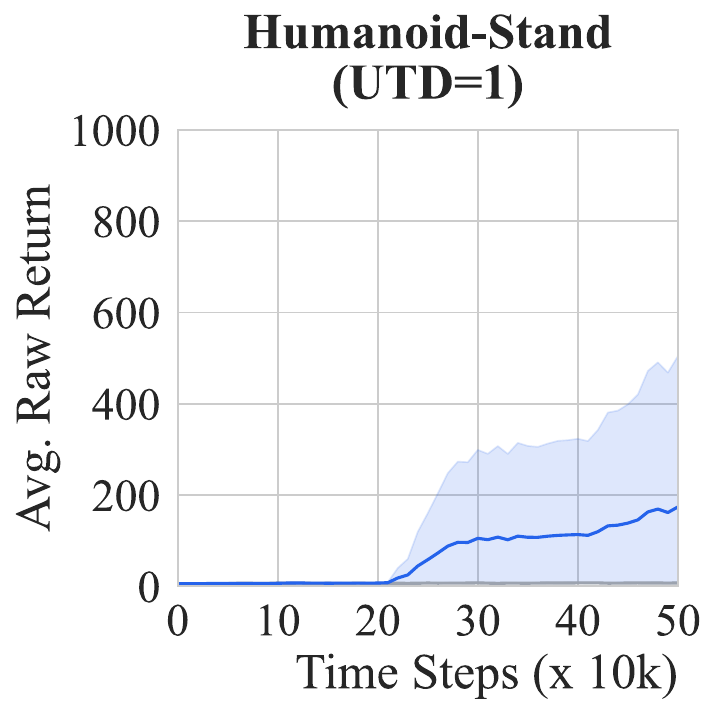}%
    \includegraphics[width=0.245\textwidth]{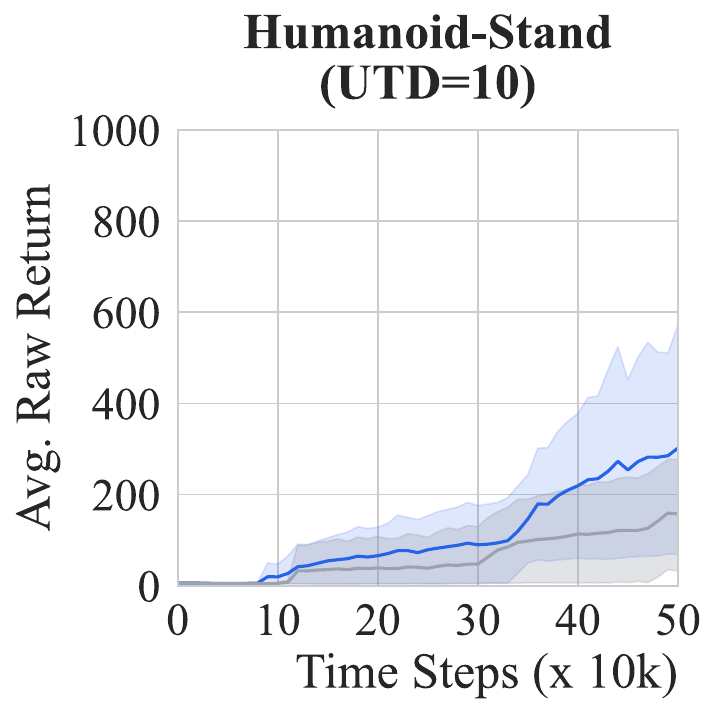}\\[0.5em]
    \includegraphics[width=0.245\textwidth]{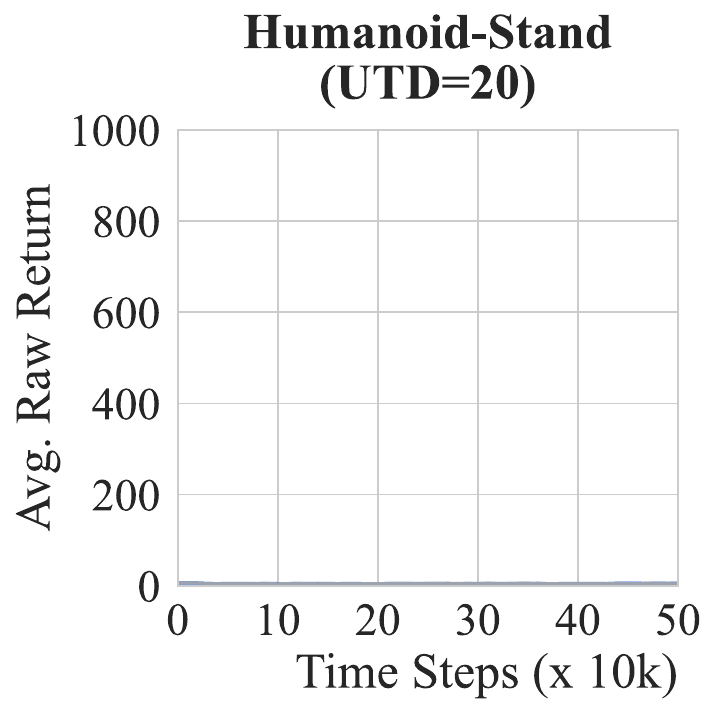}%
    \includegraphics[width=0.245\textwidth]{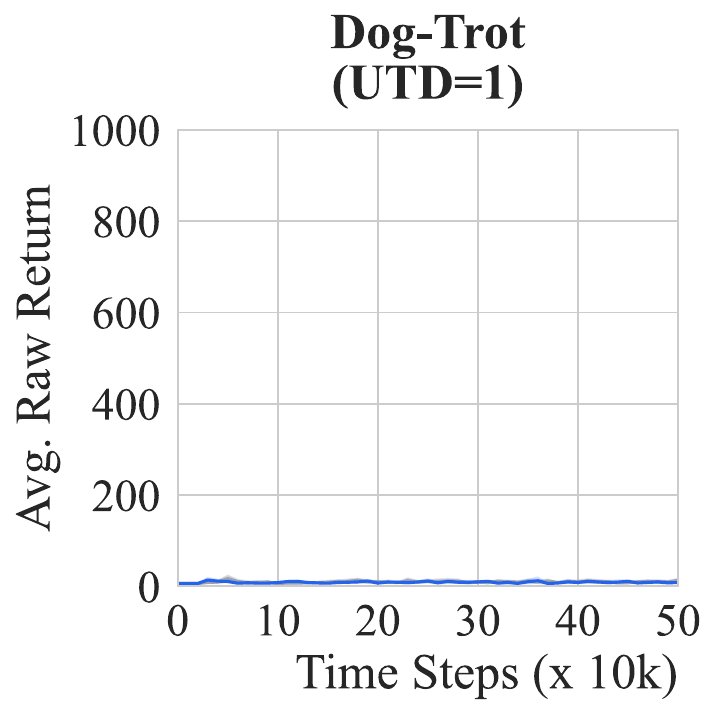}%
    \includegraphics[width=0.245\textwidth]{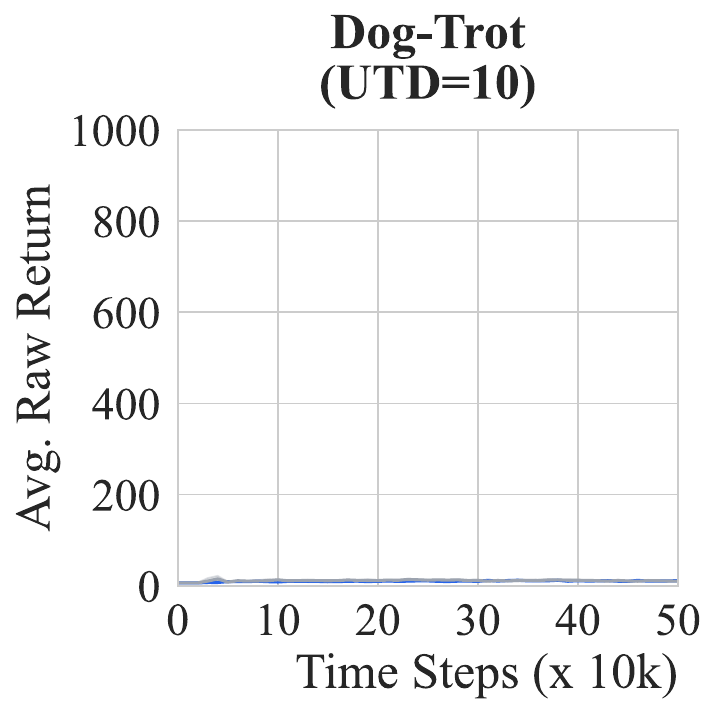}%
    \includegraphics[width=0.245\textwidth]{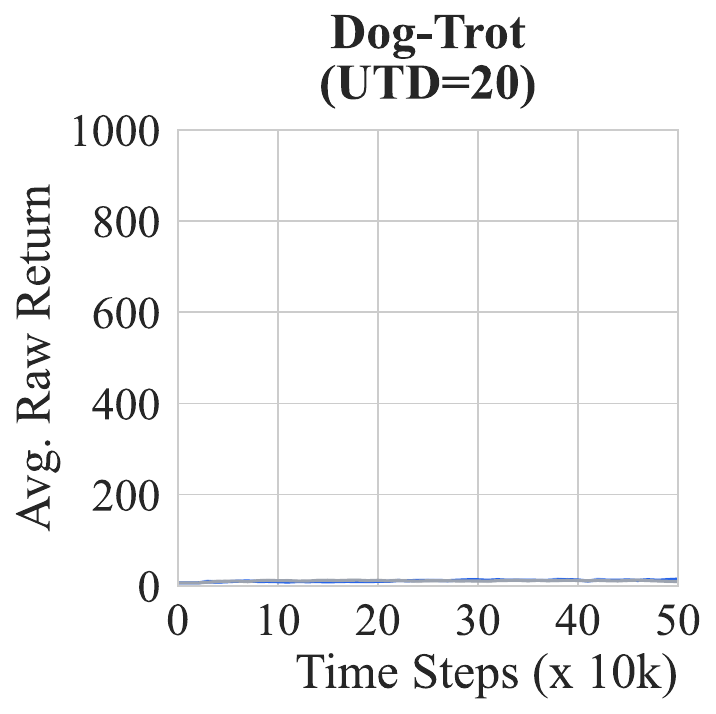}\\[0.5em]
    \includegraphics[width=0.245\textwidth]{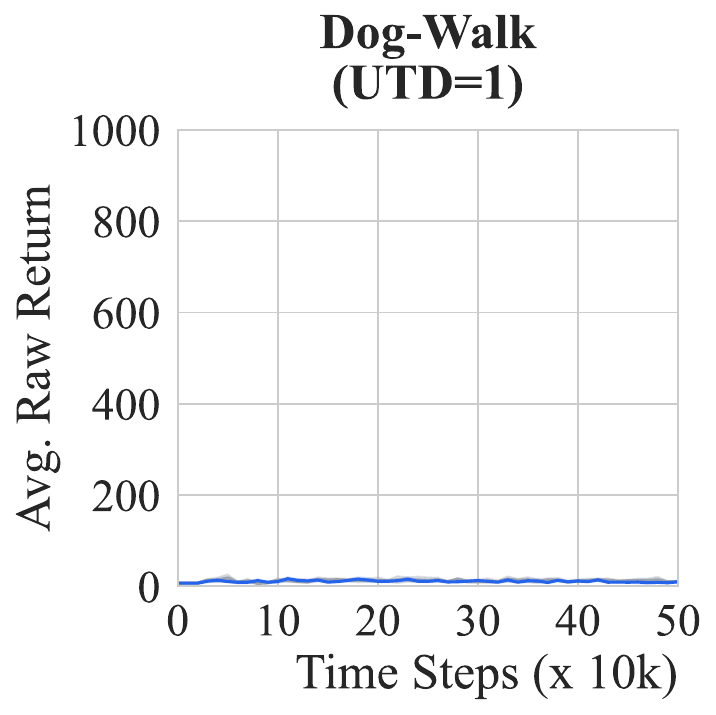}%
    \includegraphics[width=0.245\textwidth]{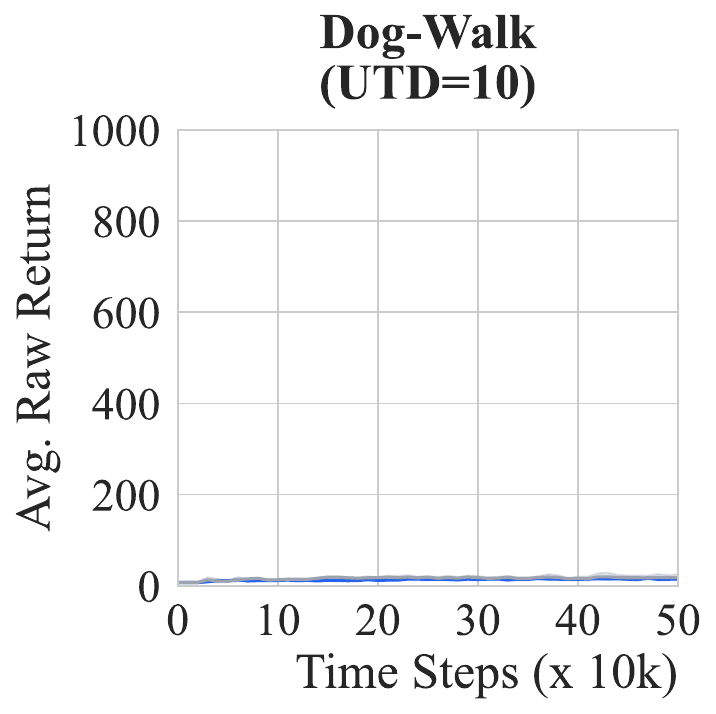}%
    \includegraphics[width=0.245\textwidth]{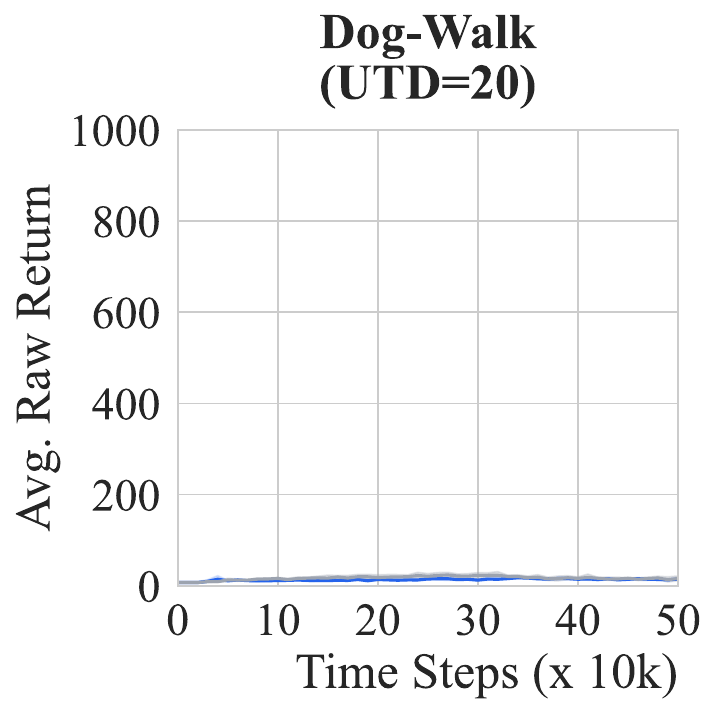}%
    \includegraphics[width=0.245\textwidth]{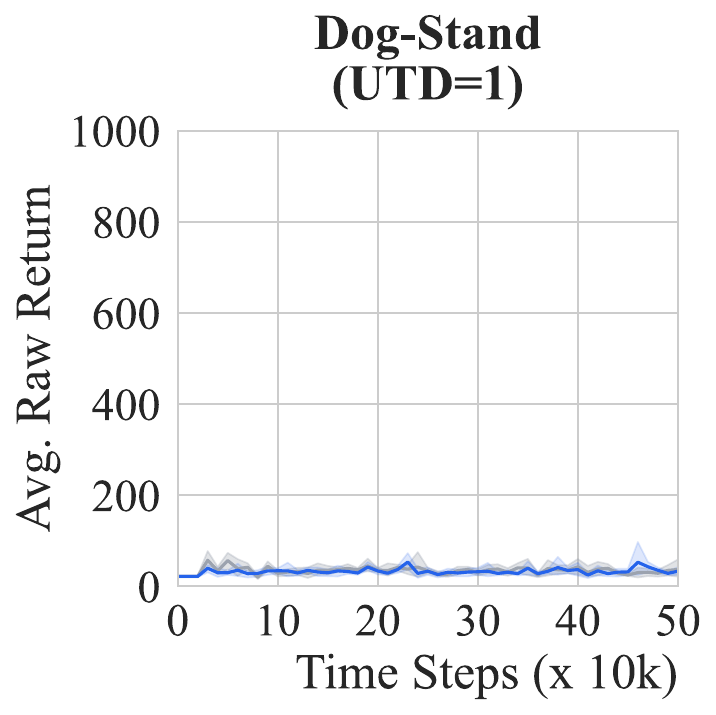}\\[0.5em]
    \includegraphics[width=0.245\textwidth]{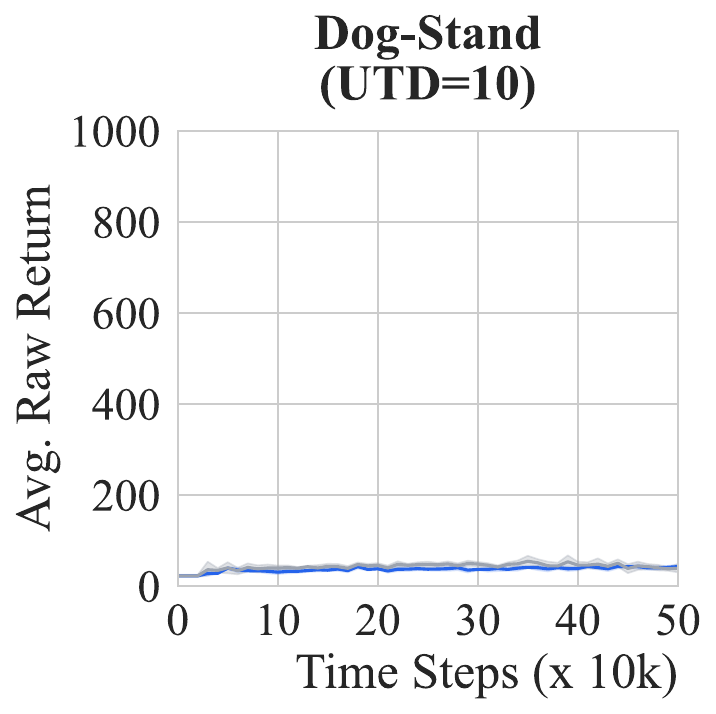}%
    \includegraphics[width=0.245\textwidth]{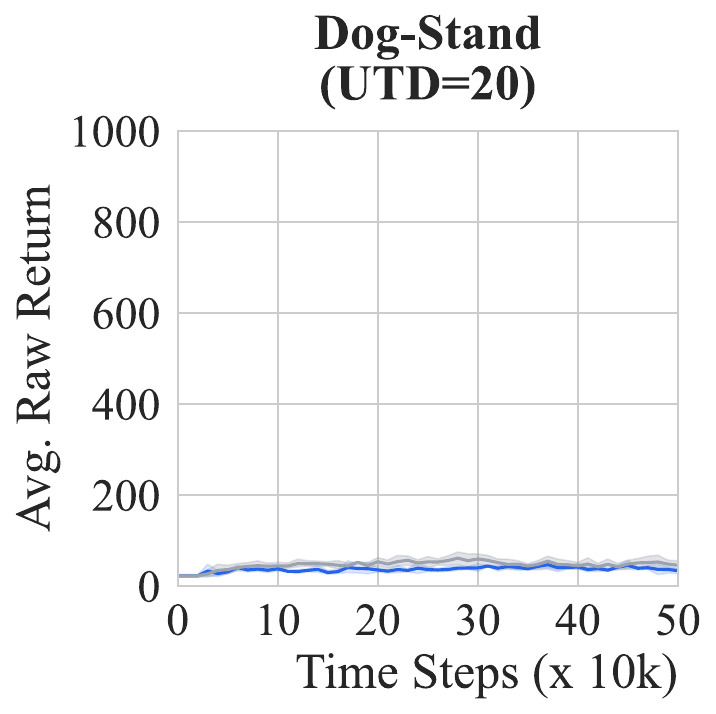}%
    \includegraphics[width=0.245\textwidth]{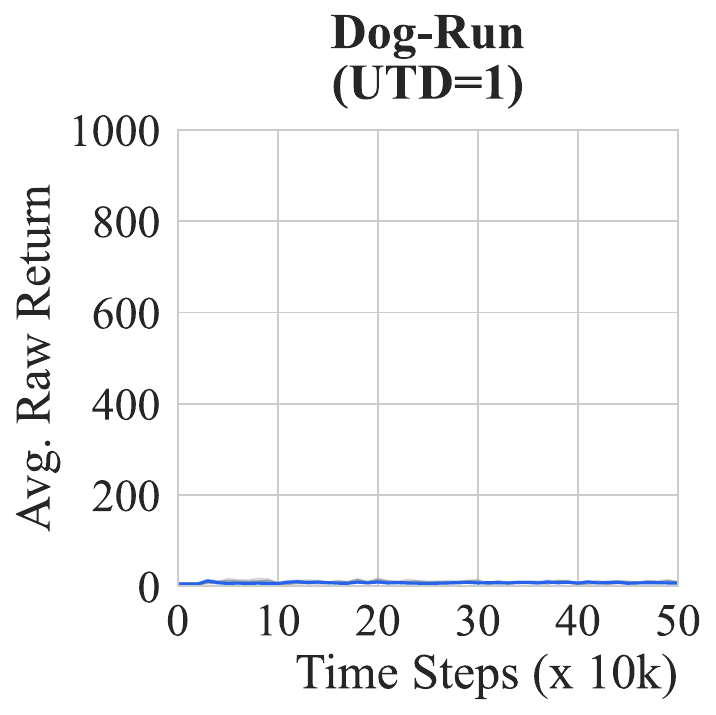}%
    \includegraphics[width=0.245\textwidth]{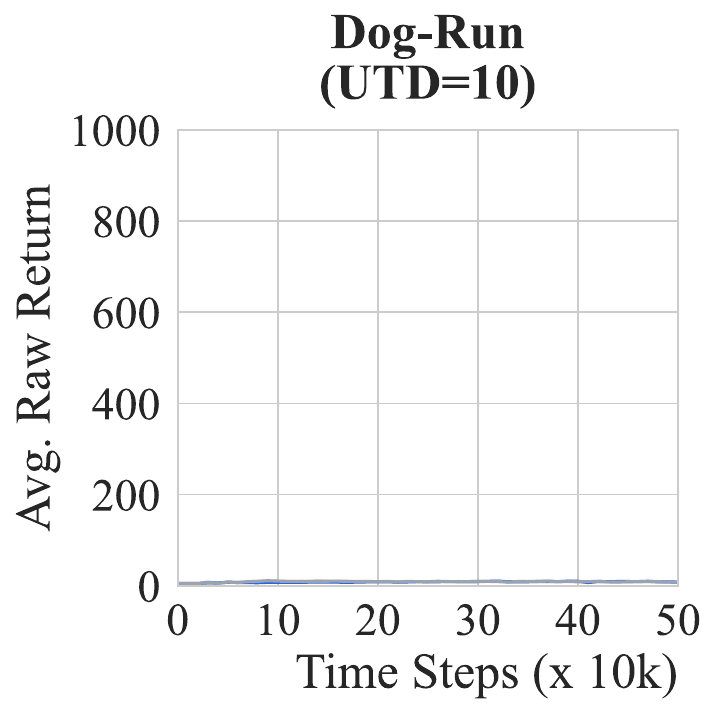}\\[0.5em]
    \includegraphics[width=0.245\textwidth]{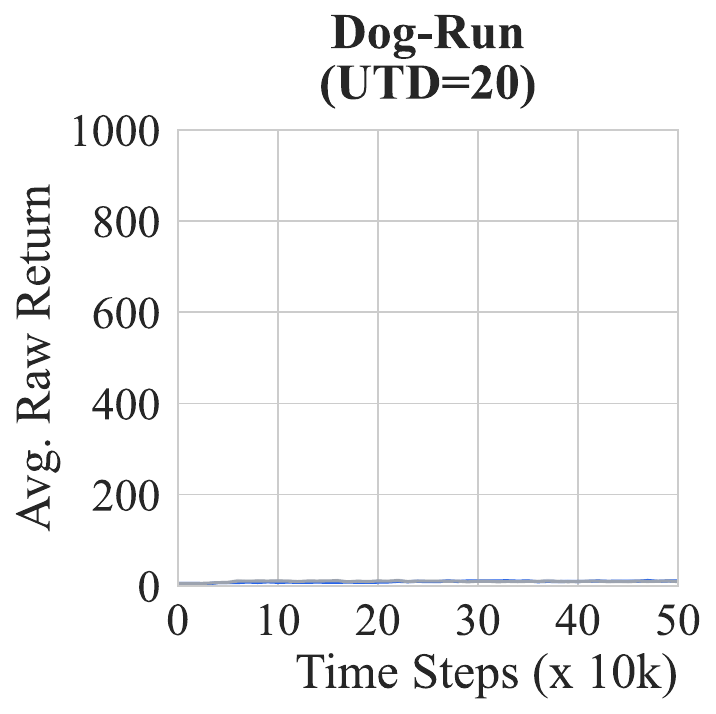}
    
    \caption{Learning curves for Minimalist $\phi$ baseline.}
    \label{fig:full_raw_min_phi_2}
\end{figure*}
    \input{tables/full_norm_min_phi}
    
\clearpage    
\subsection{TD7+Layer Normalization (LN) baseline}
    \input{tables/full_raw_td7ln}
\begin{figure*}[!ht]
    \begin{center}
    \textbf{TD7+LN Baseline}
    \end{center}
    \vspace{0.5em}
    \begin{center}
    \includegraphics[width=\textwidth]{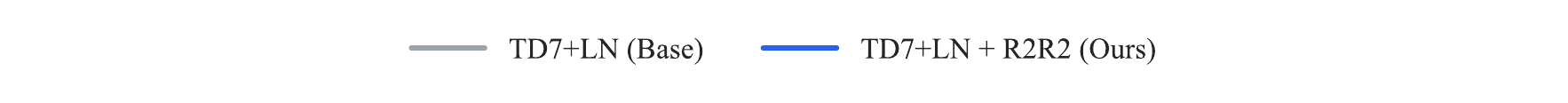}
    \end{center}
    \vspace{0.5em}
    \noindent
    \includegraphics[width=0.245\textwidth]{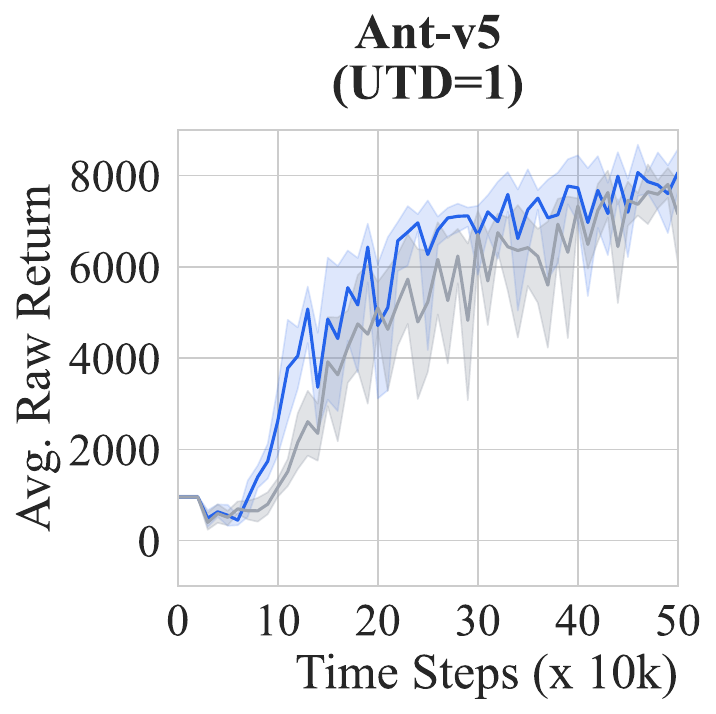}%
    \includegraphics[width=0.245\textwidth]{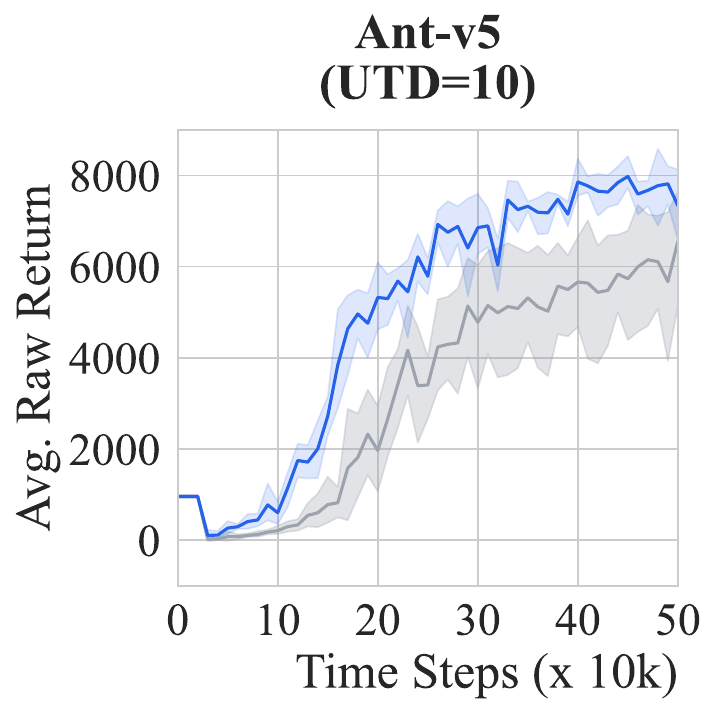}%
    \includegraphics[width=0.245\textwidth]{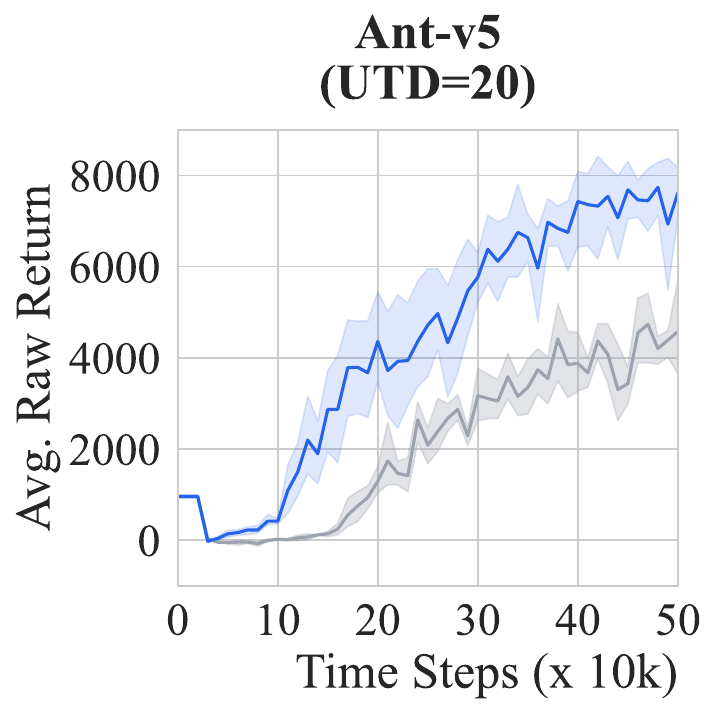}%
    \includegraphics[width=0.245\textwidth]{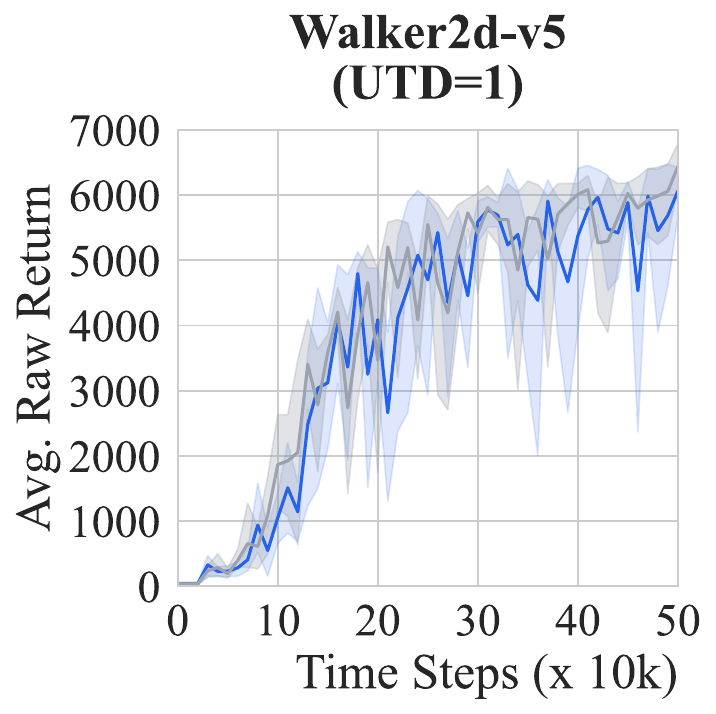}\\[0.5em]
    \includegraphics[width=0.245\textwidth]{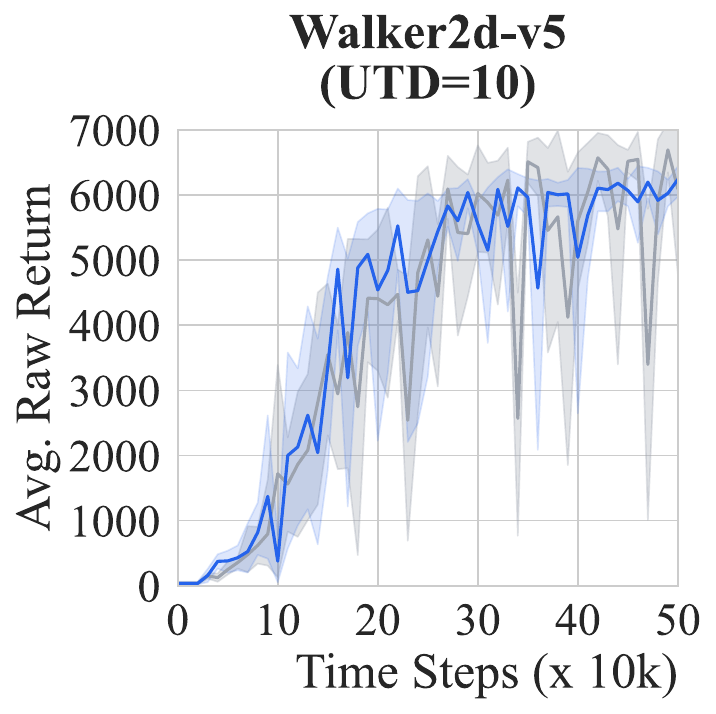}%
    \includegraphics[width=0.245\textwidth]{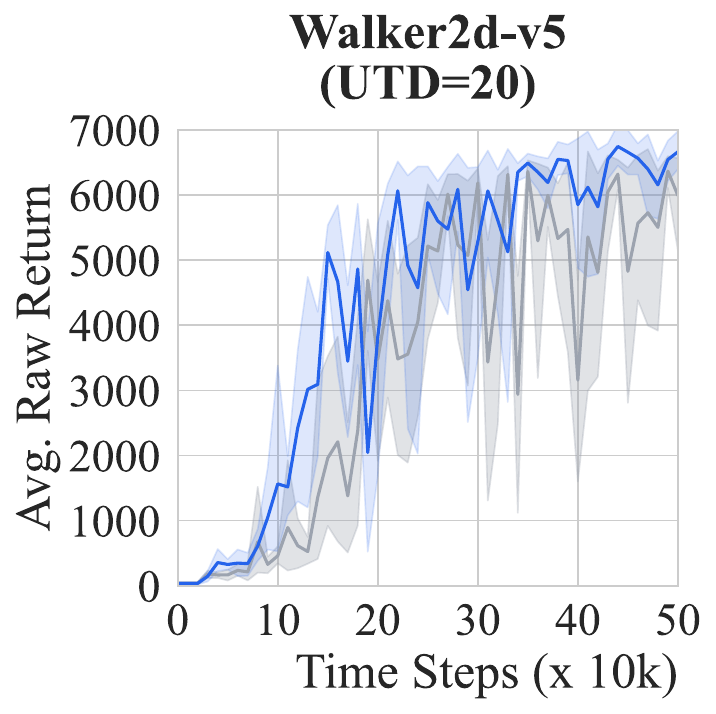}%
    \includegraphics[width=0.245\textwidth]{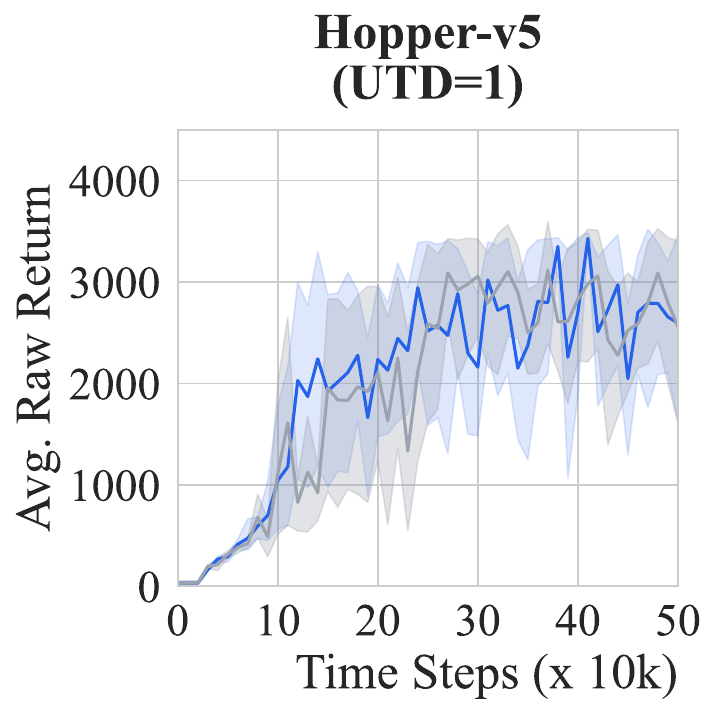}%
    \includegraphics[width=0.245\textwidth]{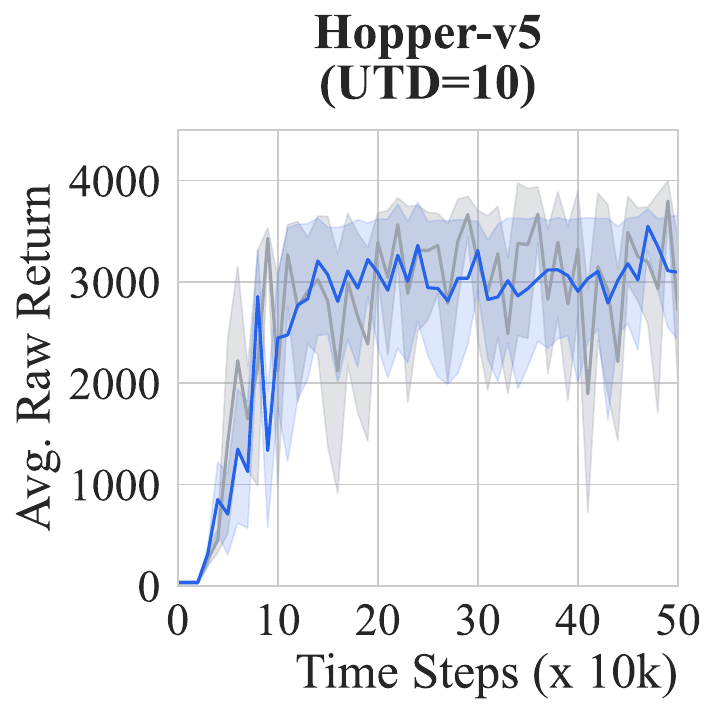}\\[0.5em]
    \includegraphics[width=0.245\textwidth]{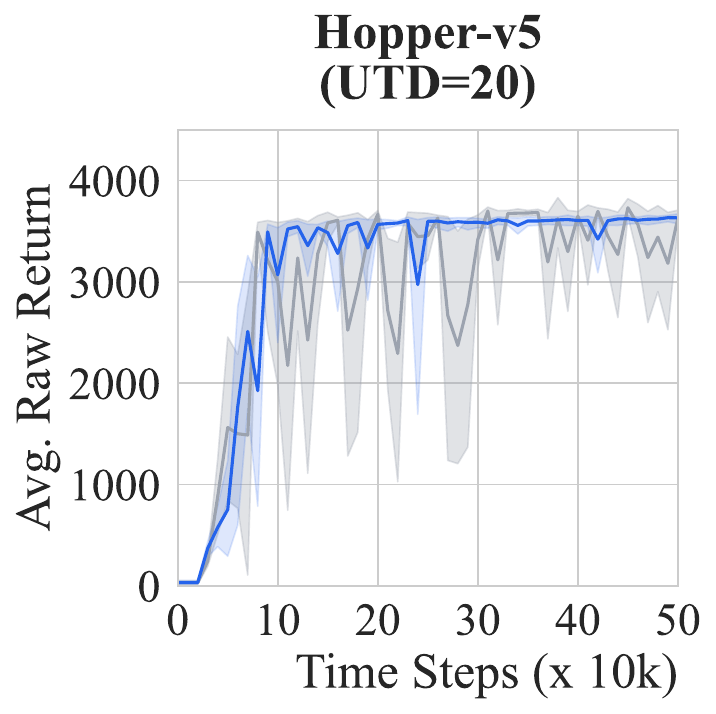}%
    \includegraphics[width=0.245\textwidth]{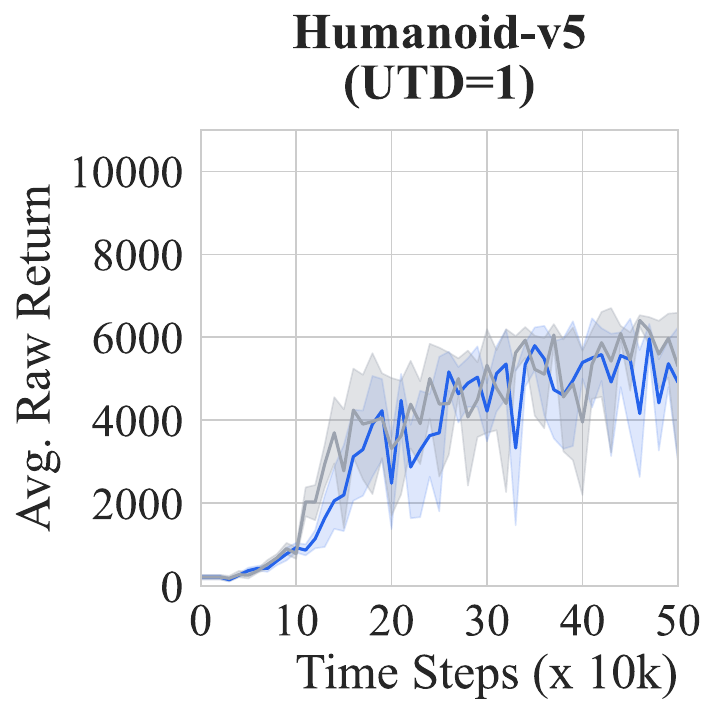}%
    \includegraphics[width=0.245\textwidth]{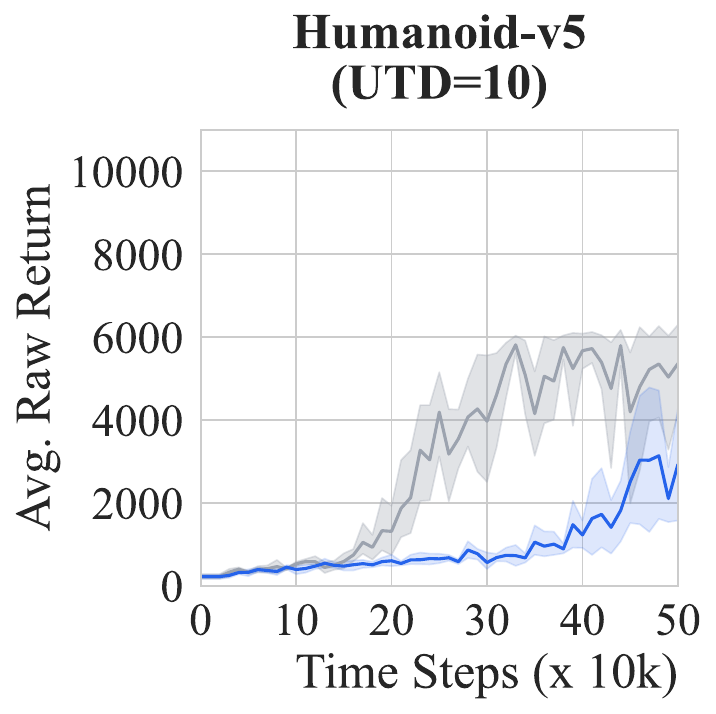}%
    \includegraphics[width=0.245\textwidth]{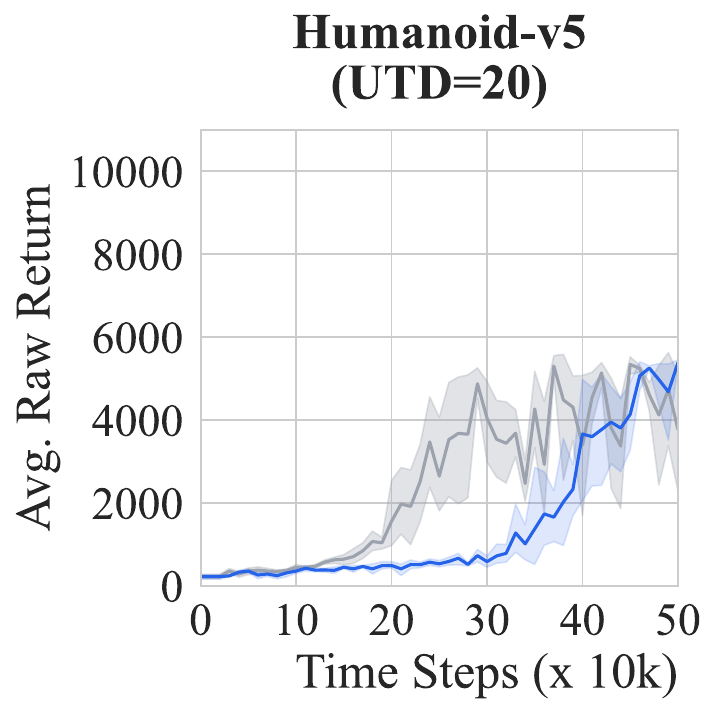}\\[0.5em]
    \includegraphics[width=0.245\textwidth]{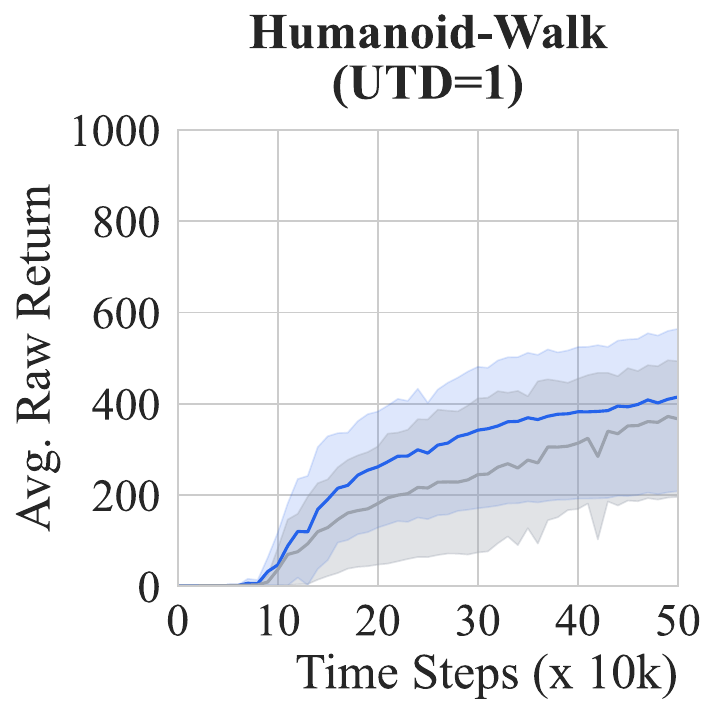}%
    \includegraphics[width=0.245\textwidth]{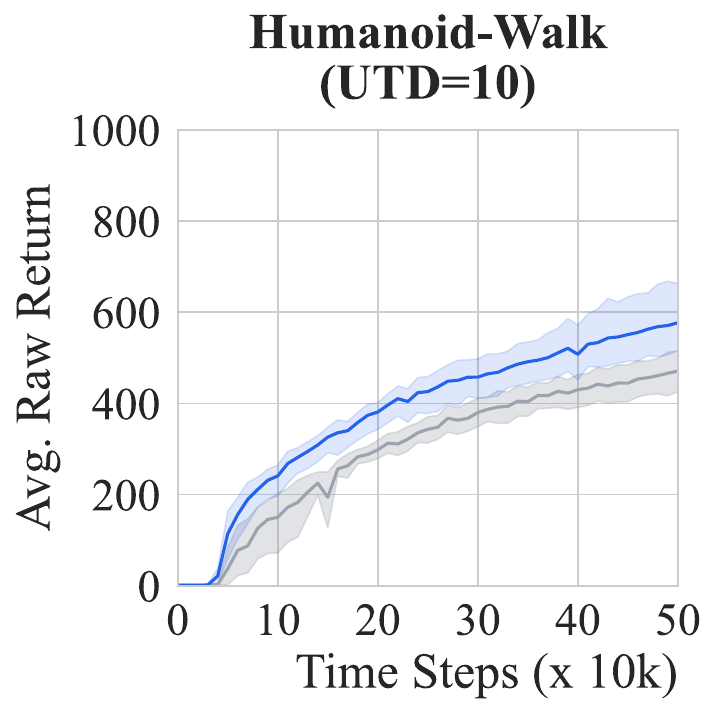}%
    \includegraphics[width=0.245\textwidth]{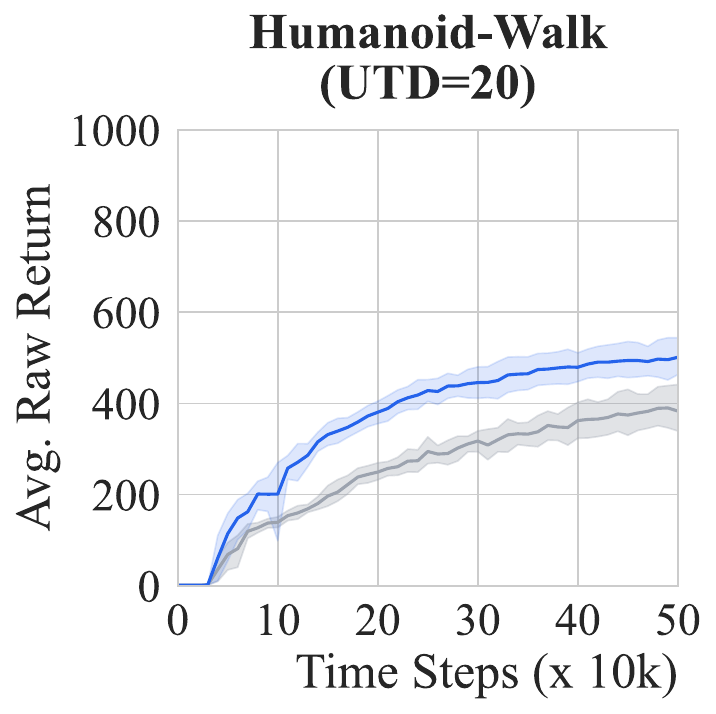}%
    \includegraphics[width=0.245\textwidth]{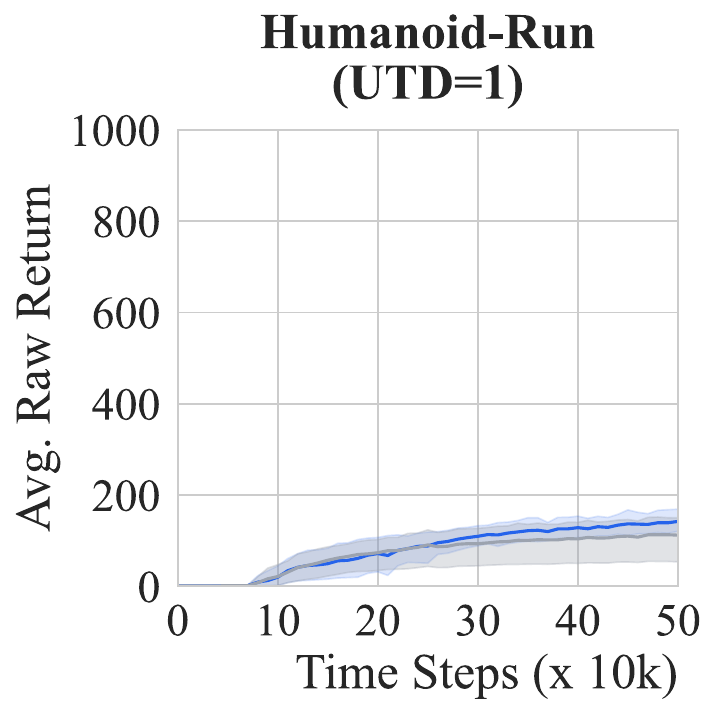}
    \label{fig:full_raw_td7ln_1}
\end{figure*}

\begin{figure*}[!ht]
    \noindent
    \includegraphics[width=0.245\textwidth]{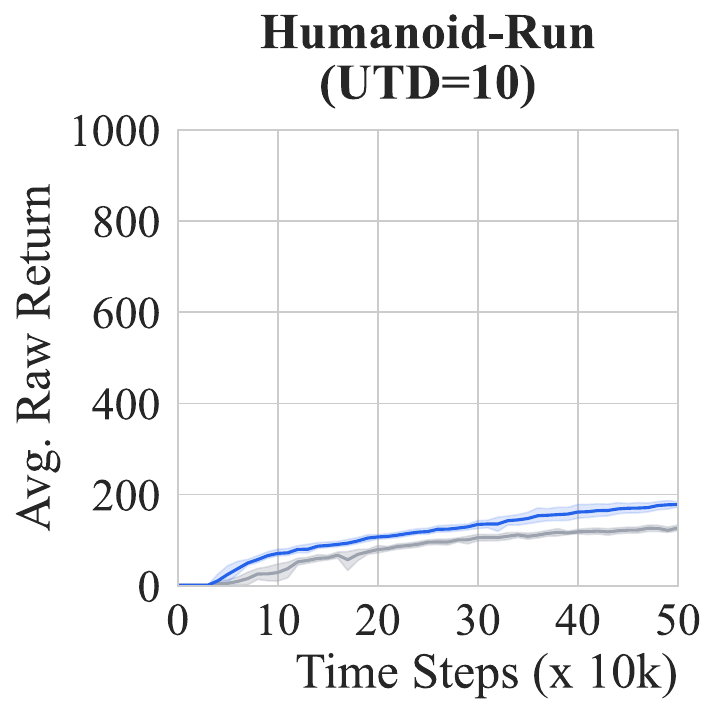}%
    \includegraphics[width=0.245\textwidth]{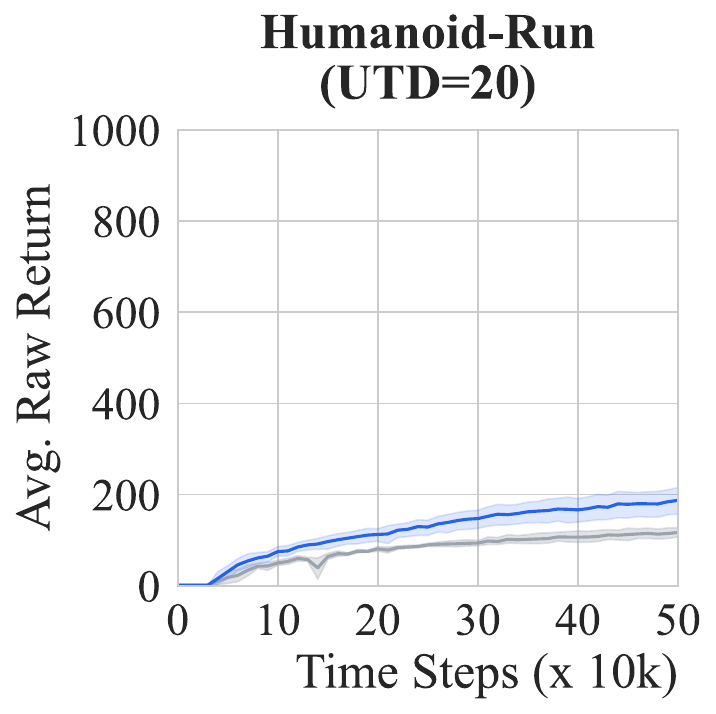}%
    \includegraphics[width=0.245\textwidth]{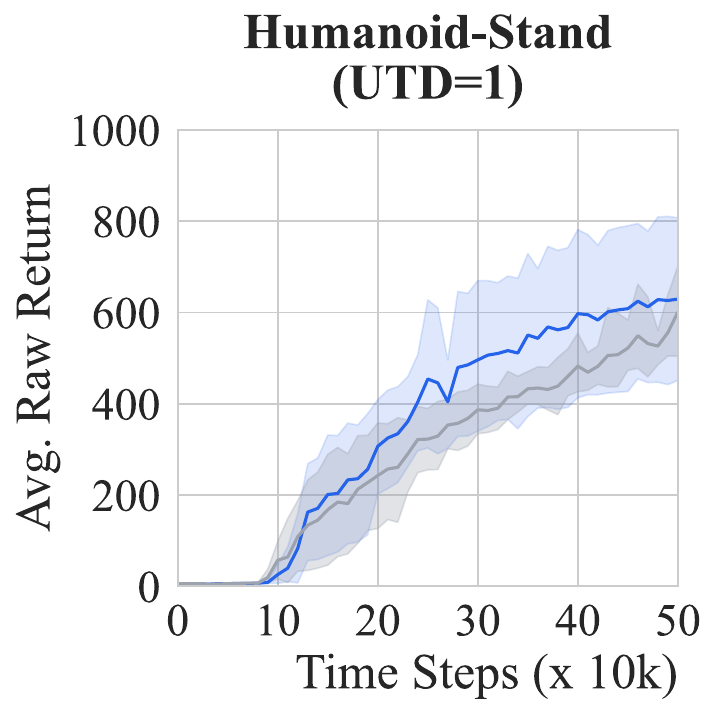}%
    \includegraphics[width=0.245\textwidth]{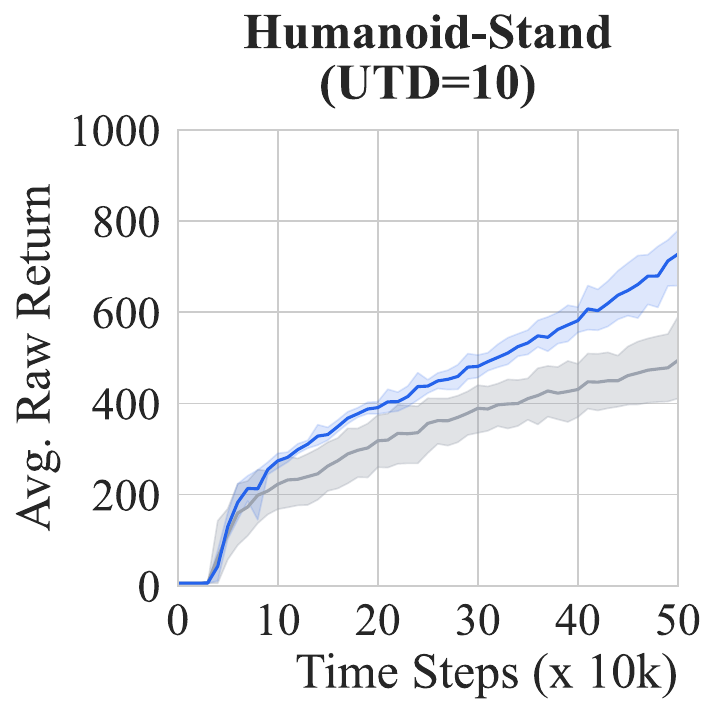}\\[0.5em]
    \includegraphics[width=0.245\textwidth]{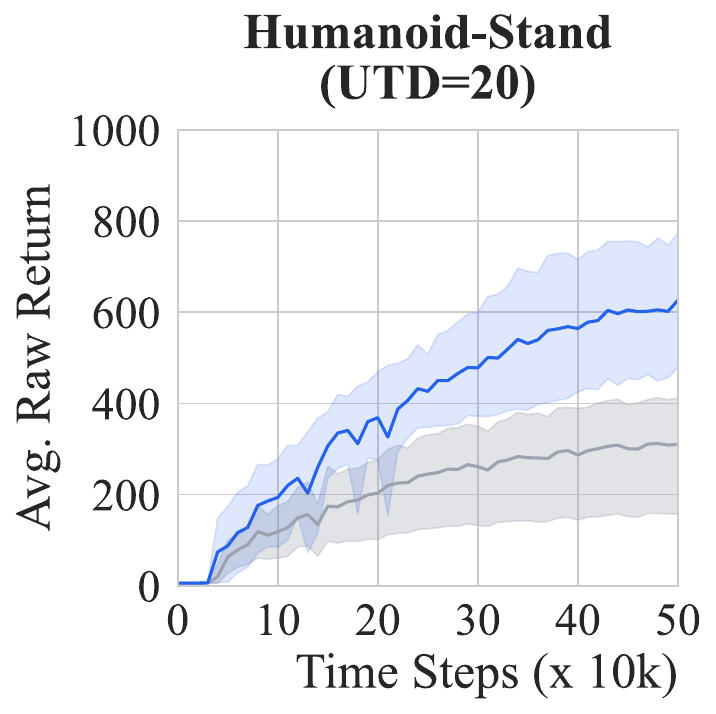}%
    \includegraphics[width=0.245\textwidth]{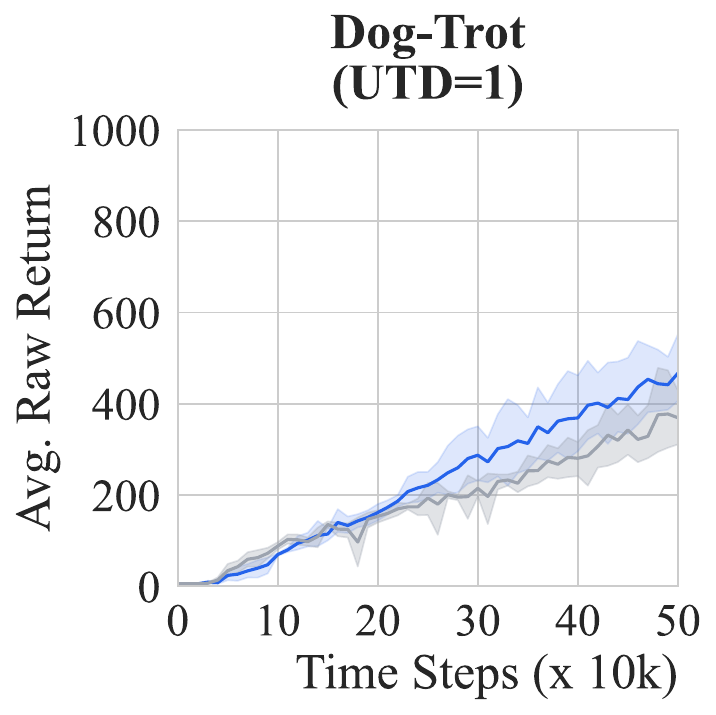}%
    \includegraphics[width=0.245\textwidth]{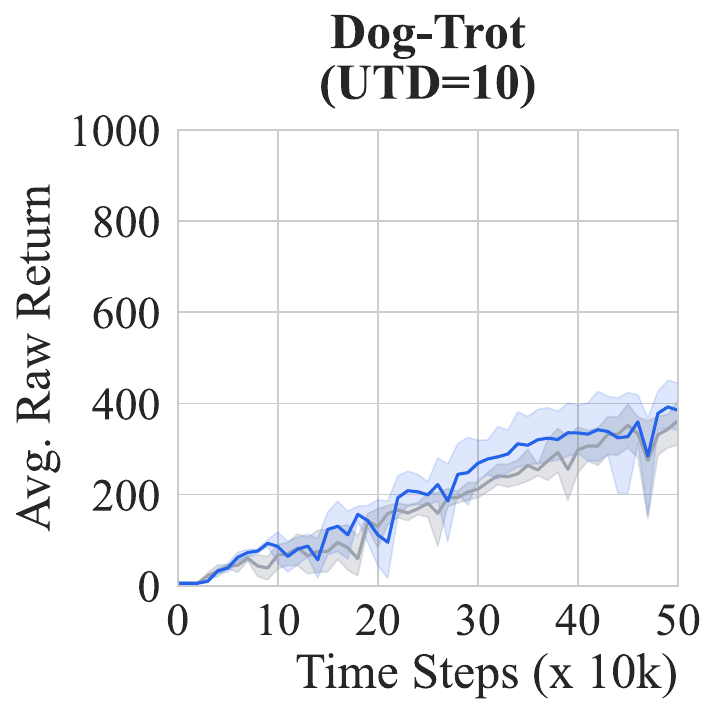}%
    \includegraphics[width=0.245\textwidth]{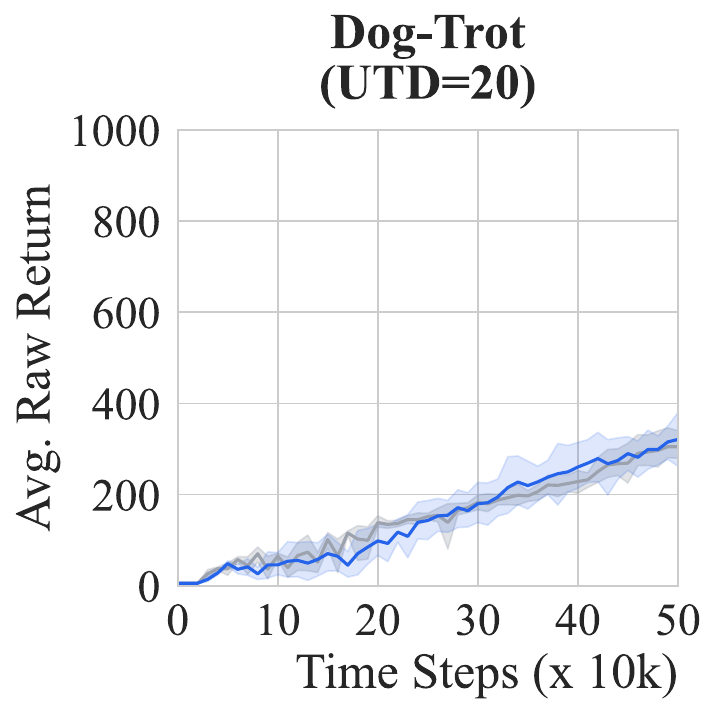}\\[0.5em]
    \includegraphics[width=0.245\textwidth]{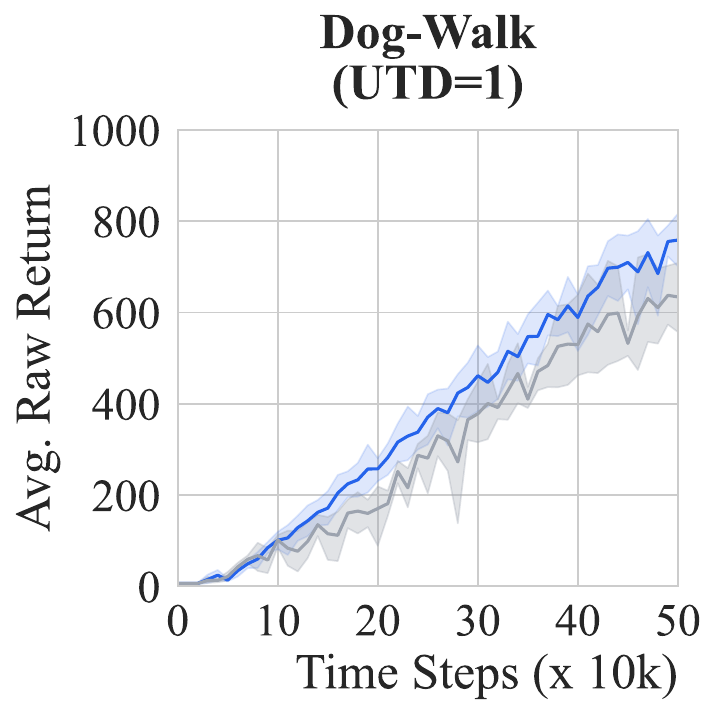}%
    \includegraphics[width=0.245\textwidth]{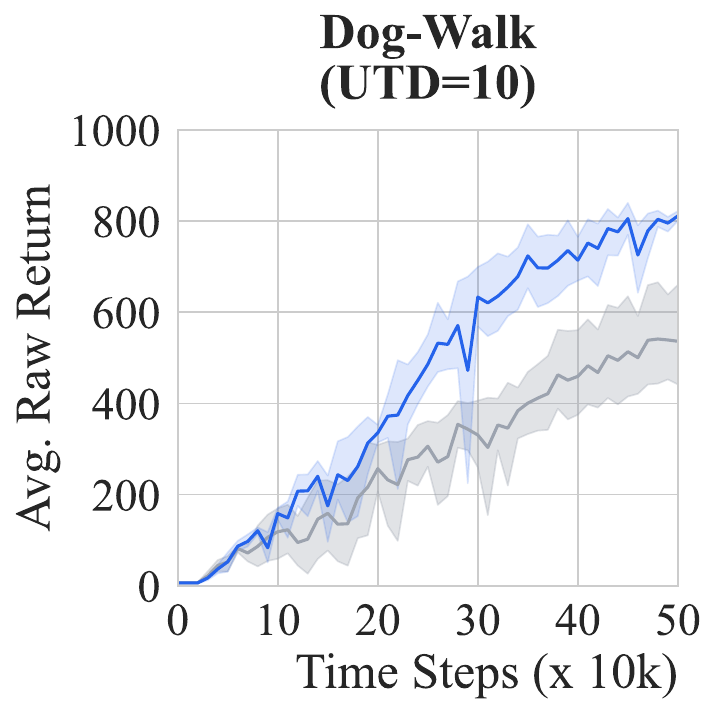}%
    \includegraphics[width=0.245\textwidth]{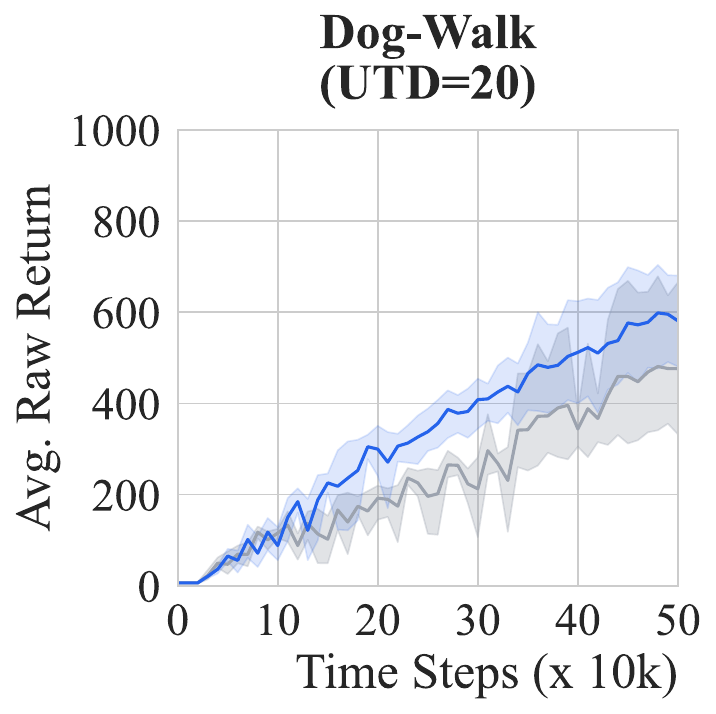}%
    \includegraphics[width=0.245\textwidth]{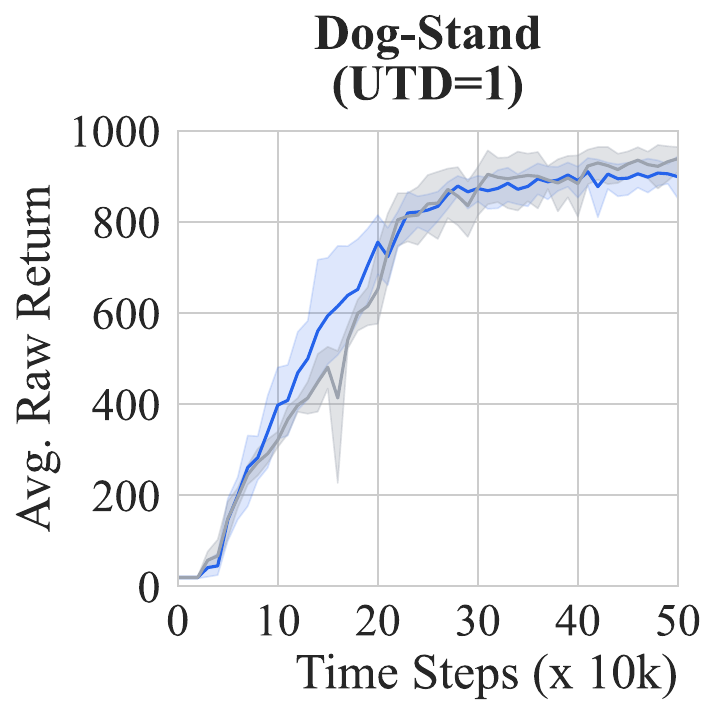}\\[0.5em]
    \includegraphics[width=0.245\textwidth]{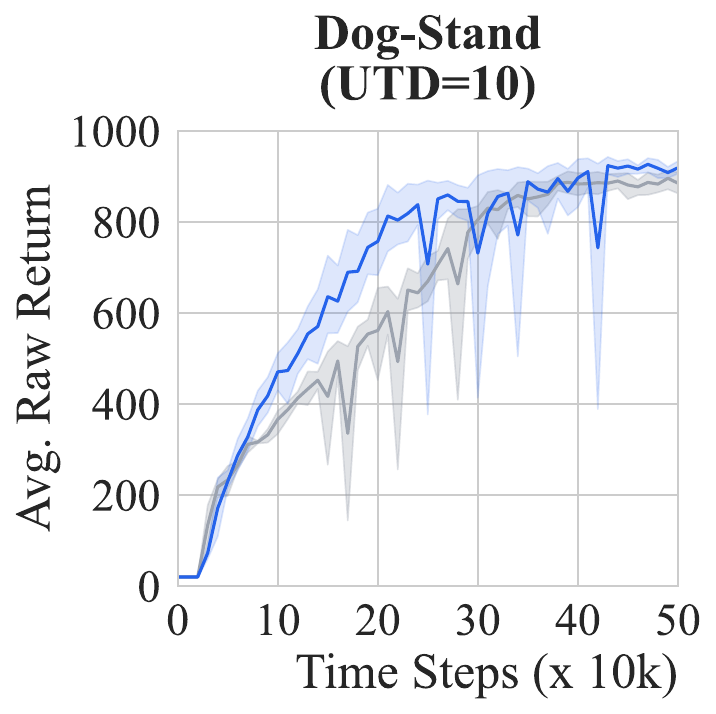}%
    \includegraphics[width=0.245\textwidth]{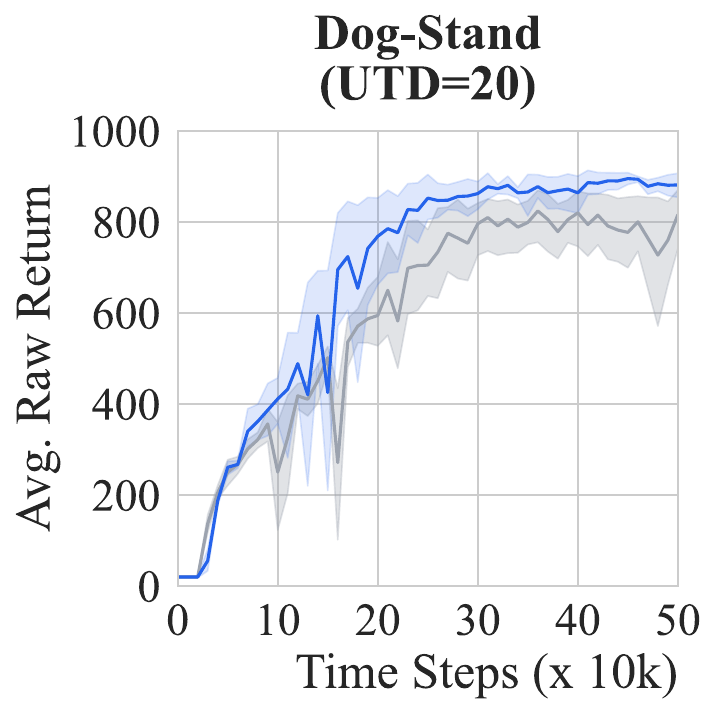}%
    \includegraphics[width=0.245\textwidth]{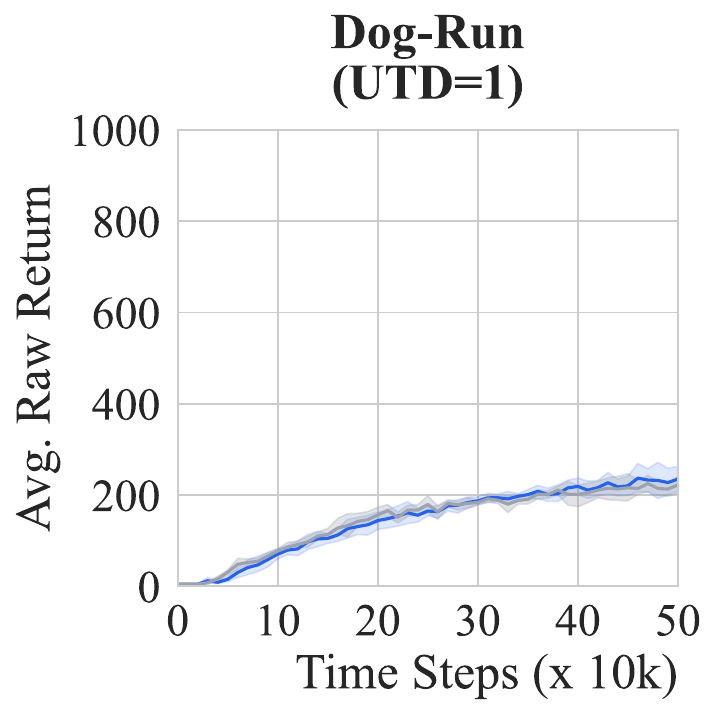}%
    \includegraphics[width=0.245\textwidth]{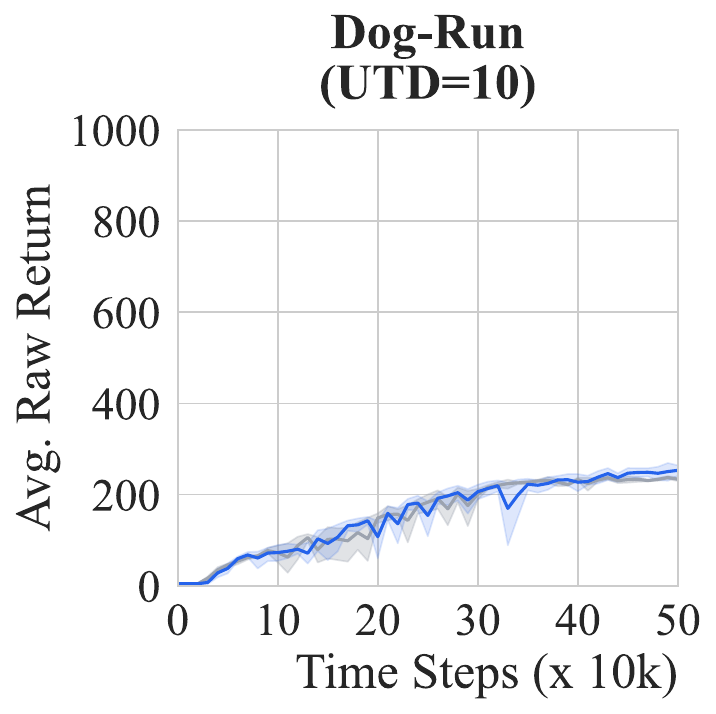}\\[0.5em]
    \includegraphics[width=0.245\textwidth]{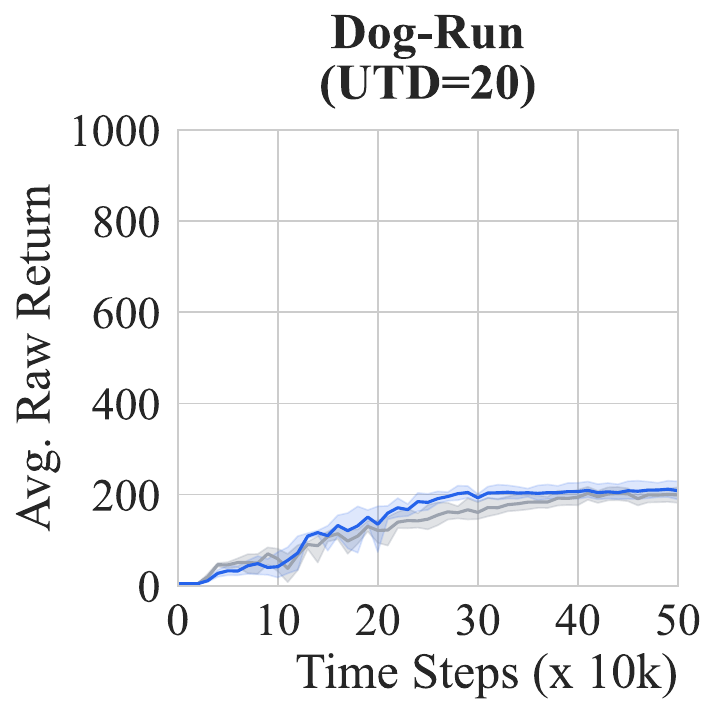}
    
    \caption{Learning curves for TD7+LN baseline.}
    \label{fig:full_raw_td7ln_2}
\end{figure*}

    \input{tables/full_norm_td7ln}
    
\clearpage    
\subsection{SimbaV2 baseline}
    \input{tables/full_raw_simba}
\begin{figure*}[!ht]
    \begin{center}
    \textbf{SimbaV2 Baseline}
    \end{center}
    \vspace{0.5em}
    \begin{center}
    \includegraphics[width=\textwidth]{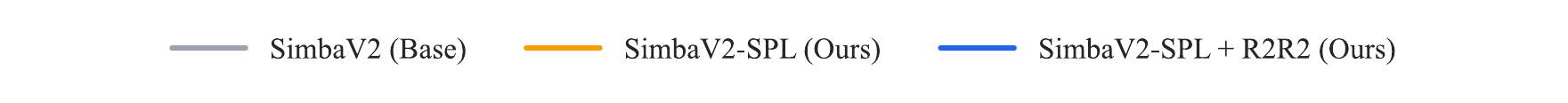}
    \end{center}
    \vspace{0.5em}
    \noindent
    \includegraphics[width=0.245\textwidth]{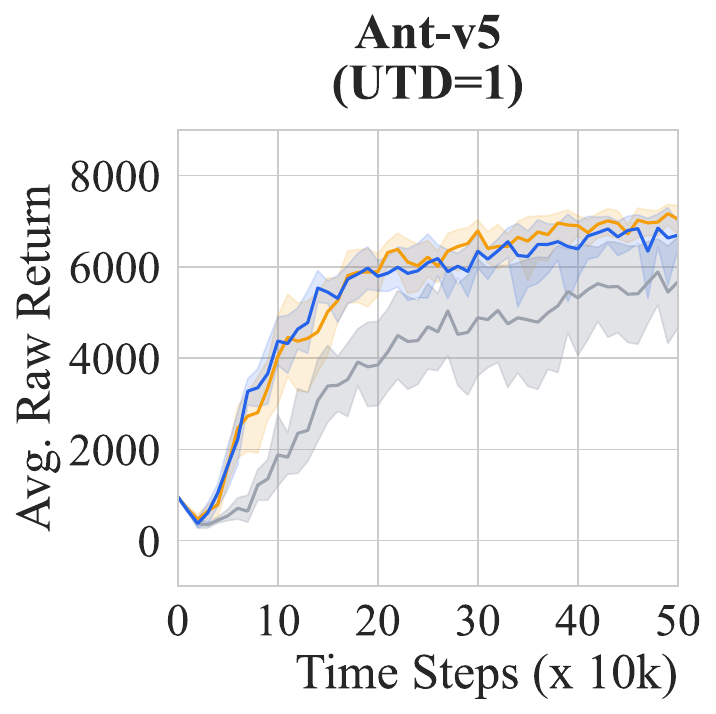}%
    \includegraphics[width=0.245\textwidth]{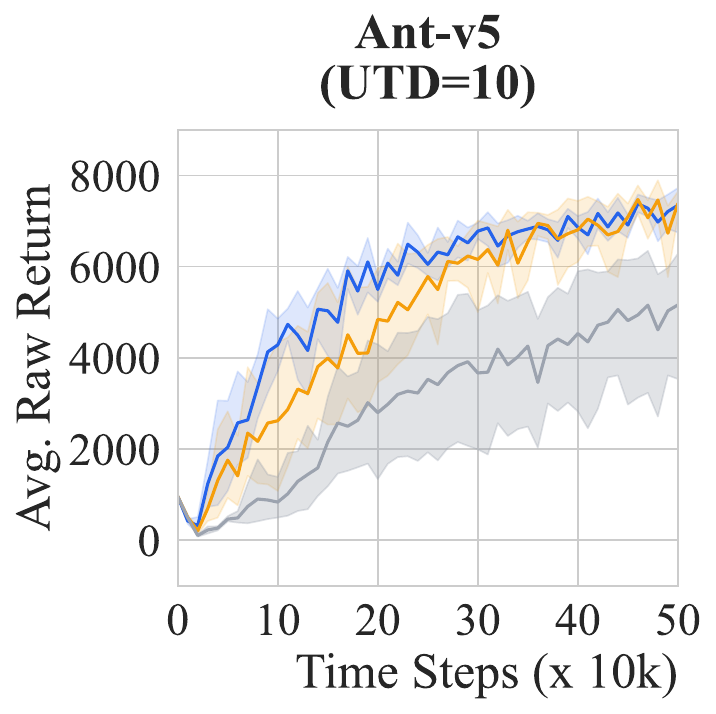}%
    \includegraphics[width=0.245\textwidth]{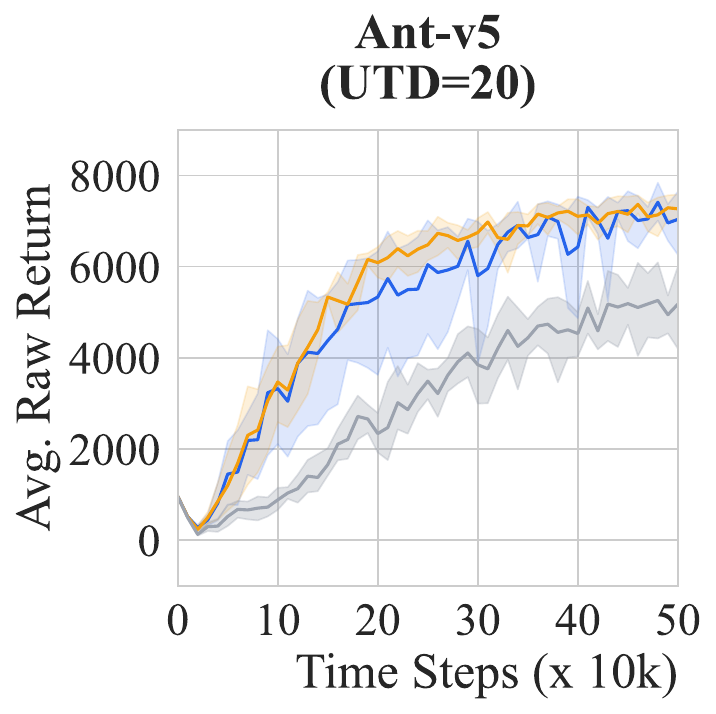}%
    \includegraphics[width=0.245\textwidth]{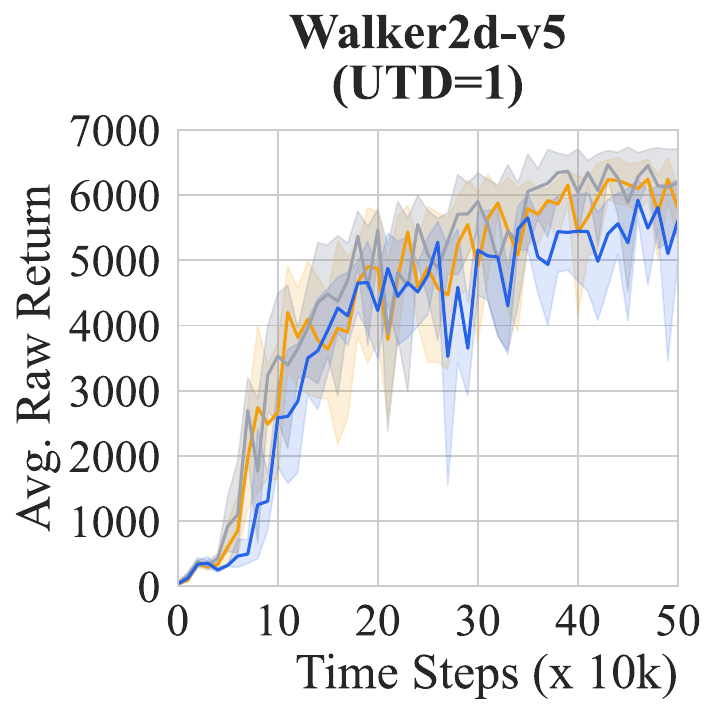}\\[0.5em]
    \includegraphics[width=0.245\textwidth]{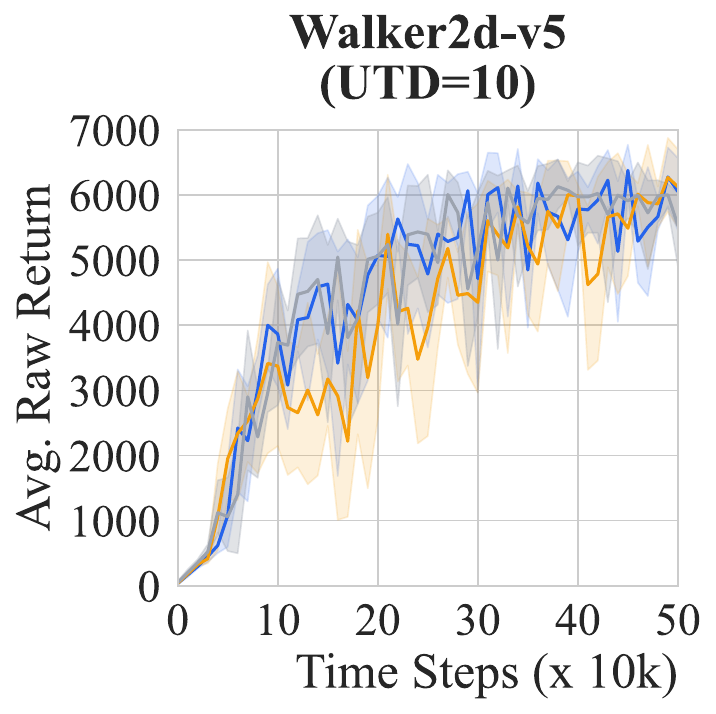}%
    \includegraphics[width=0.245\textwidth]{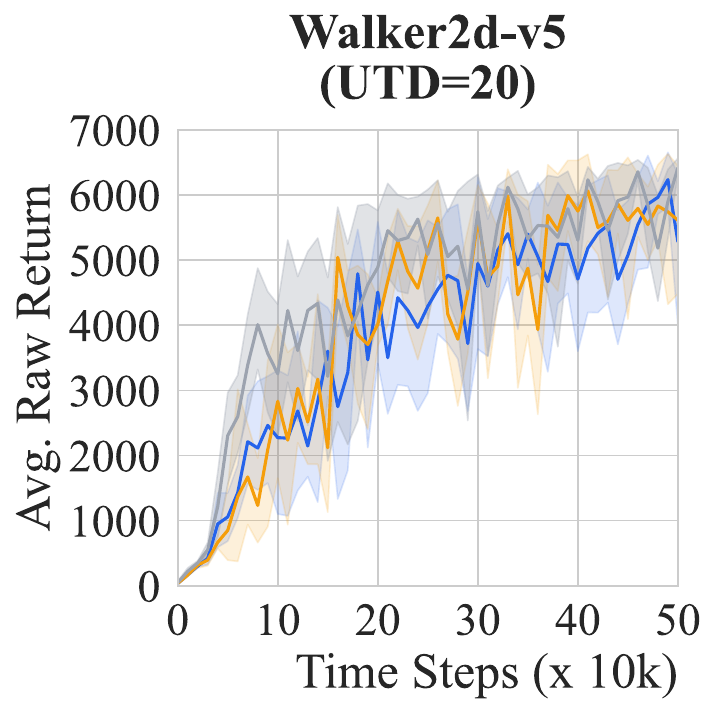}%
    \includegraphics[width=0.245\textwidth]{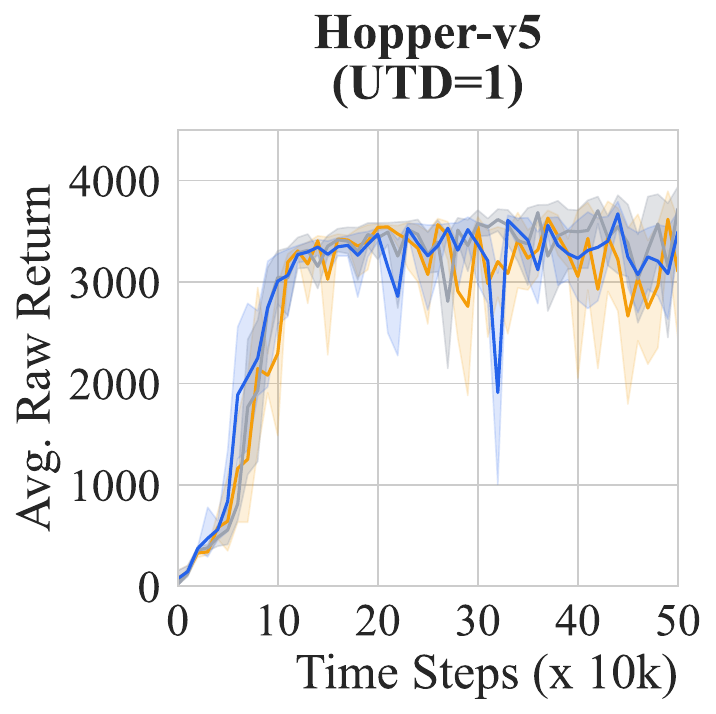}%
    \includegraphics[width=0.245\textwidth]{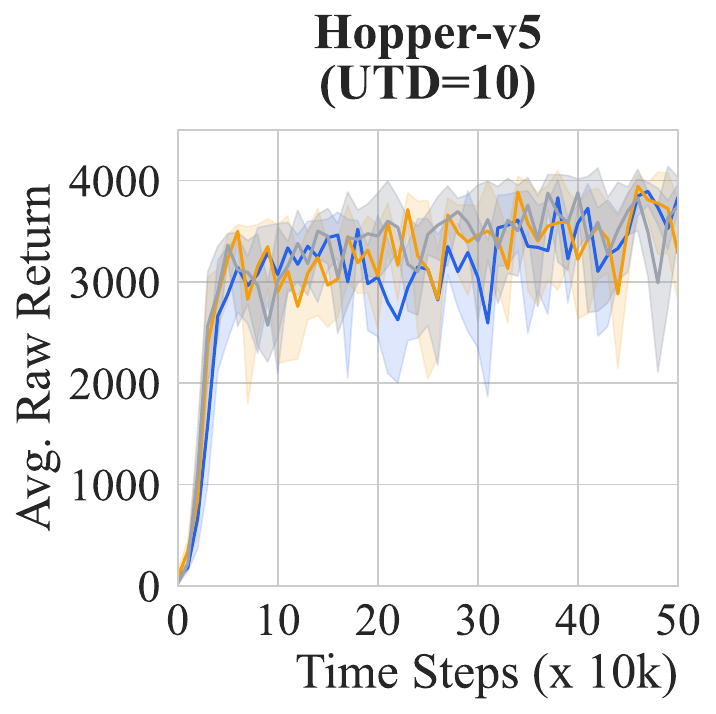}\\[0.5em]
    \includegraphics[width=0.245\textwidth]{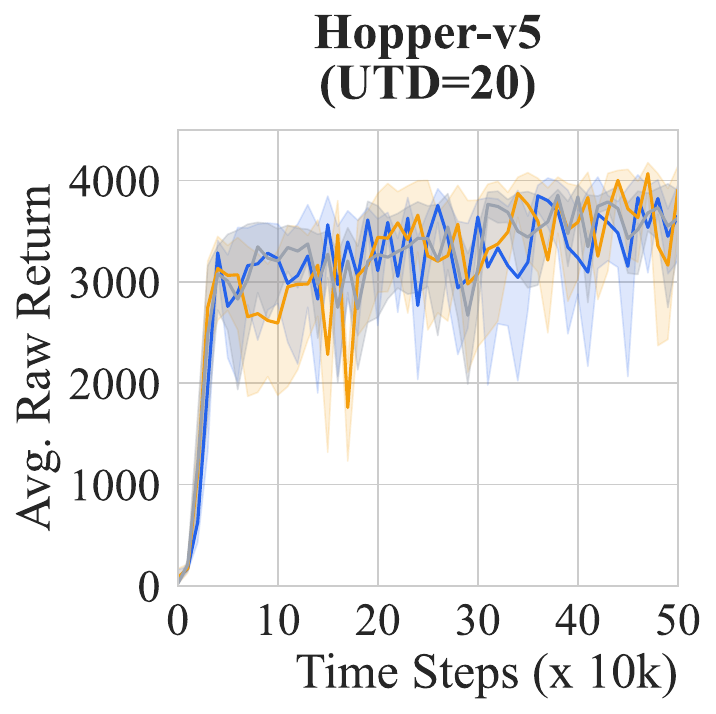}%
    \includegraphics[width=0.245\textwidth]{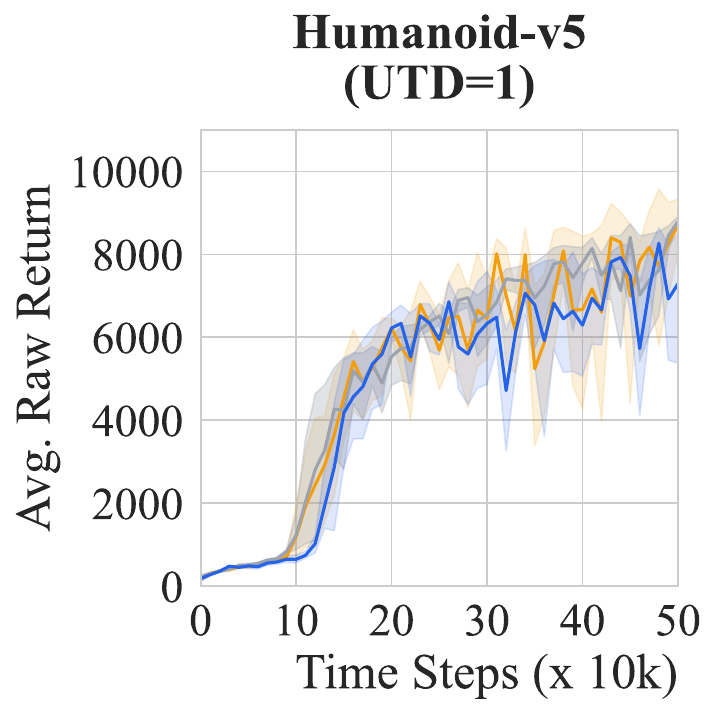}%
    \includegraphics[width=0.245\textwidth]{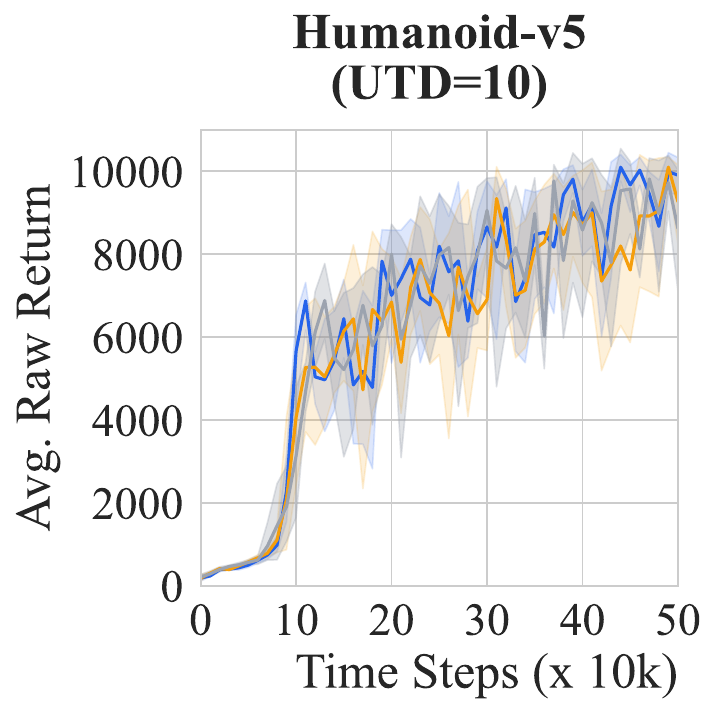}%
    \includegraphics[width=0.245\textwidth]{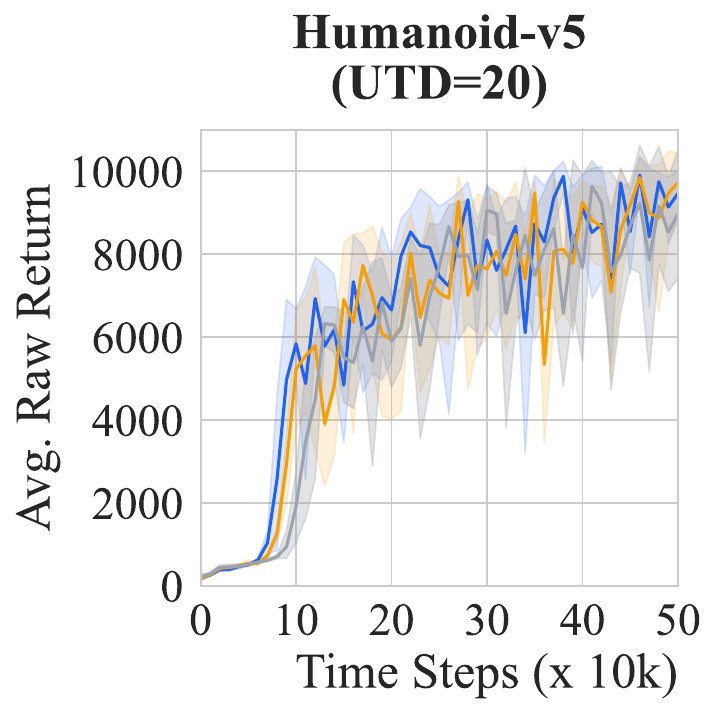}\\[0.5em]
    \includegraphics[width=0.245\textwidth]{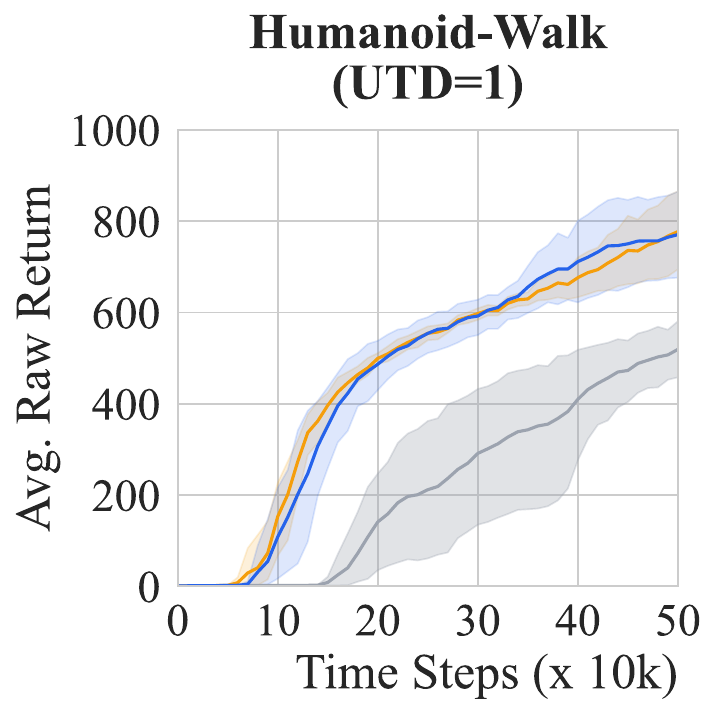}%
    \includegraphics[width=0.245\textwidth]{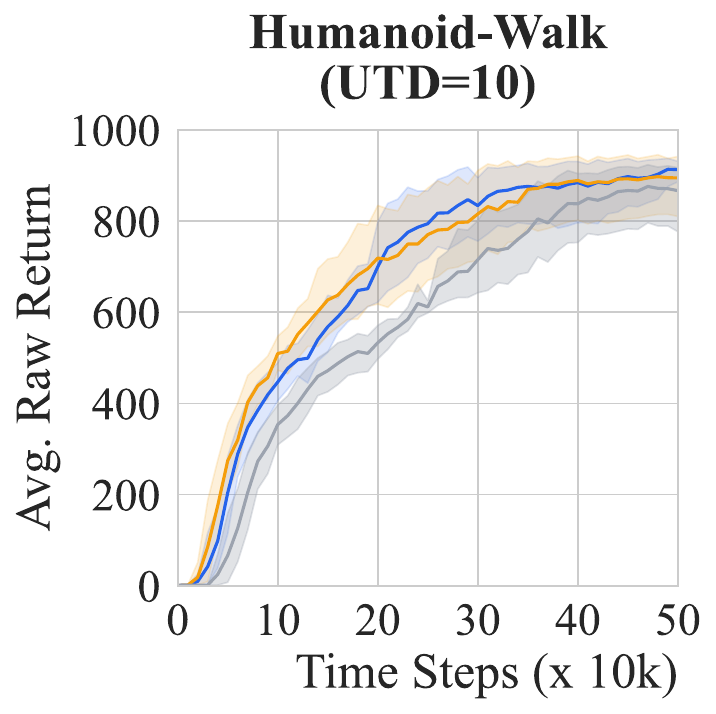}%
    \includegraphics[width=0.245\textwidth]{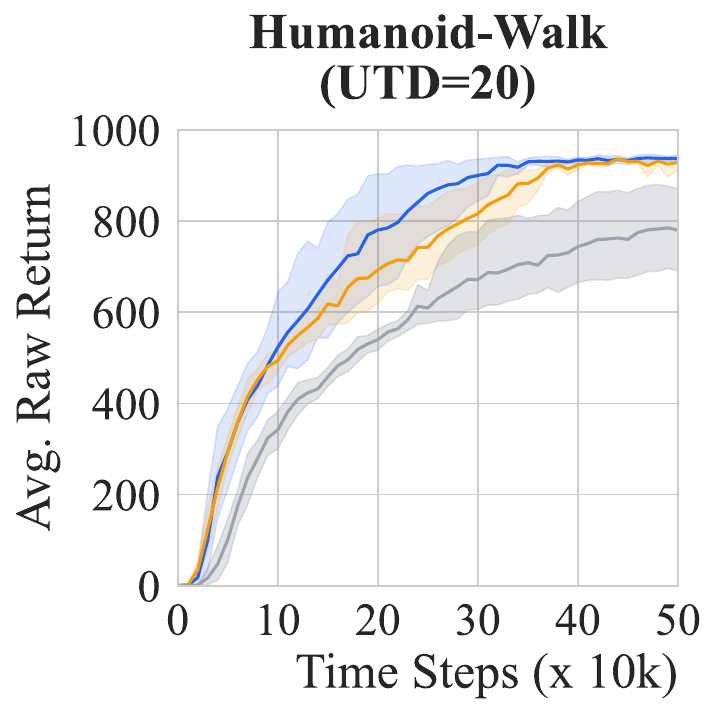}%
    \includegraphics[width=0.245\textwidth]{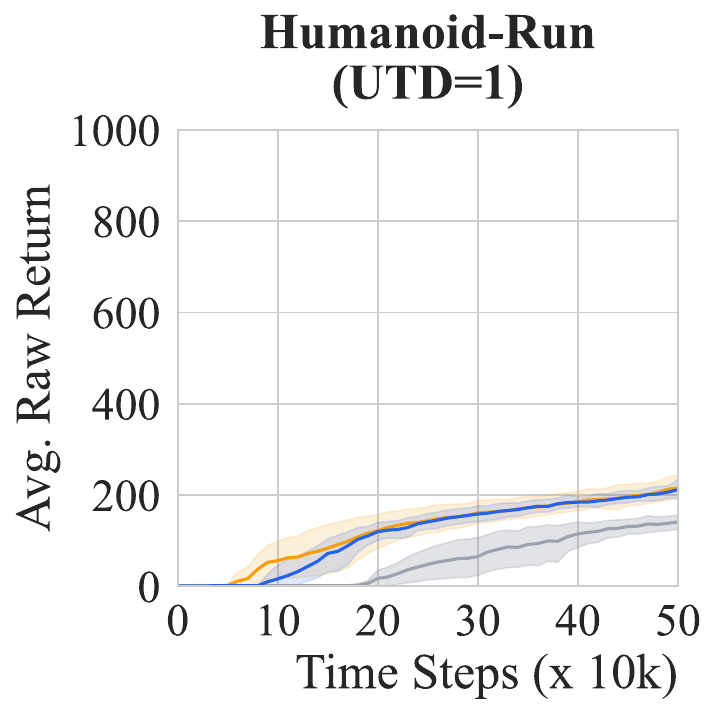}
    \label{fig:full_raw_simbav2_1}
\end{figure*}

\begin{figure*}[!ht]
    \noindent
    \includegraphics[width=0.245\textwidth]{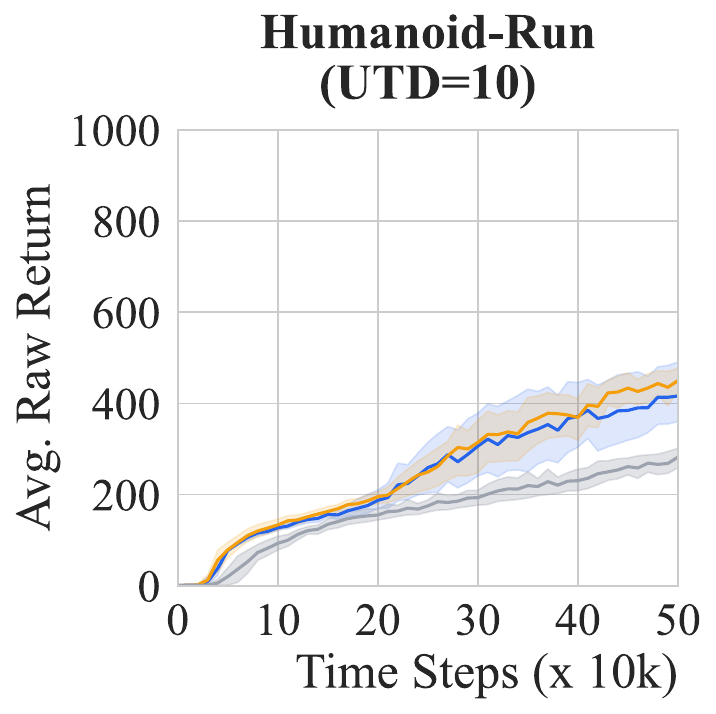}%
    \includegraphics[width=0.245\textwidth]{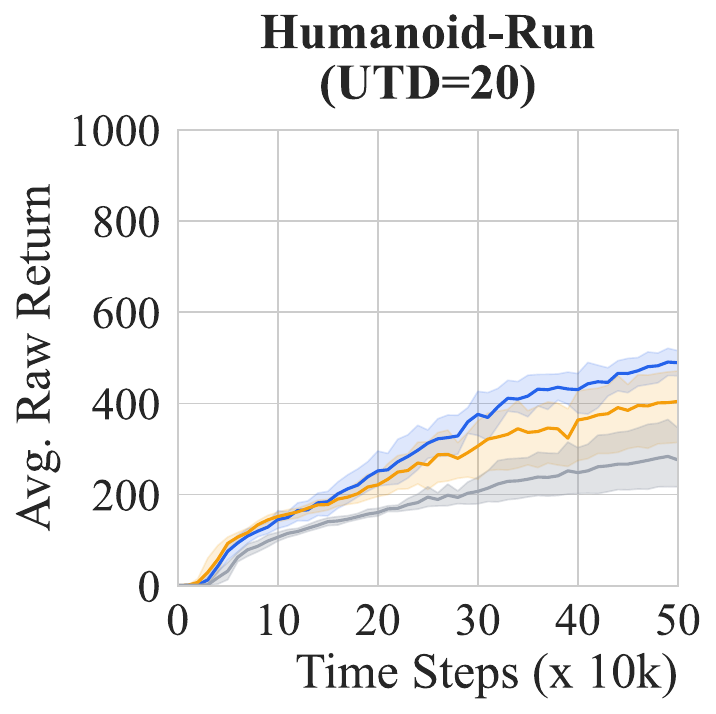}%
    \includegraphics[width=0.245\textwidth]{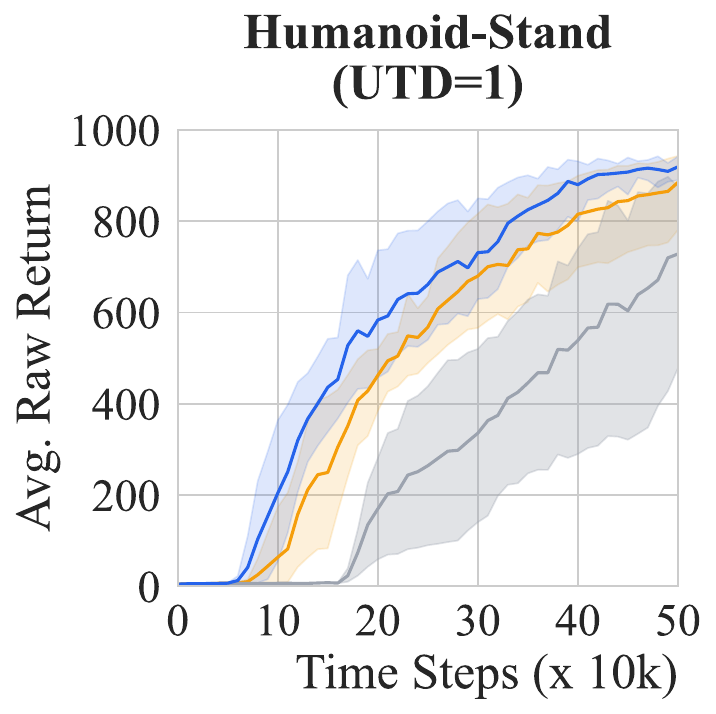}%
    \includegraphics[width=0.245\textwidth]{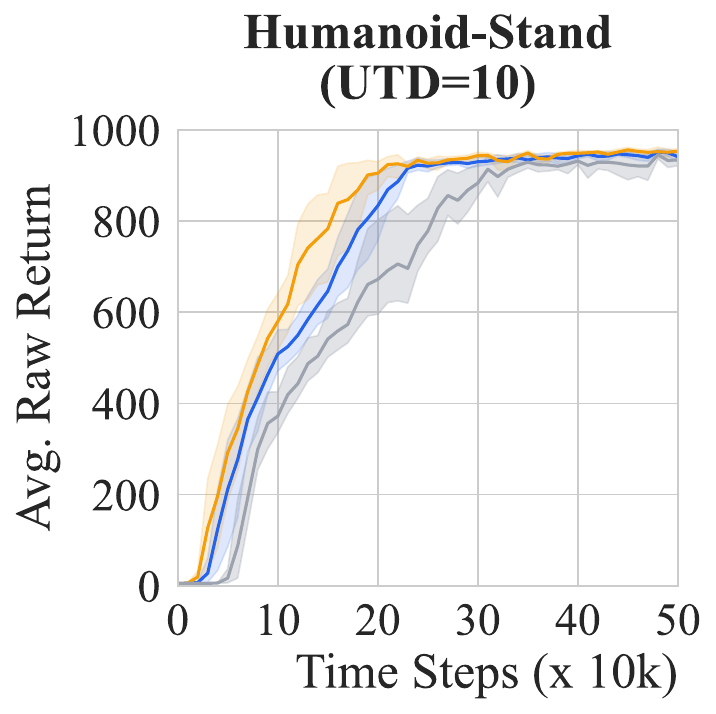}\\[0.5em]
    \includegraphics[width=0.245\textwidth]{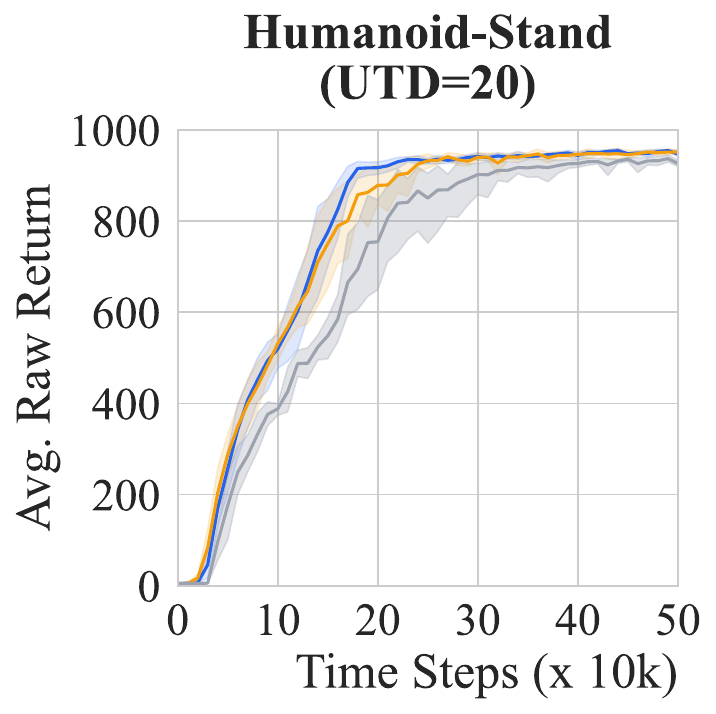}%
    \includegraphics[width=0.245\textwidth]{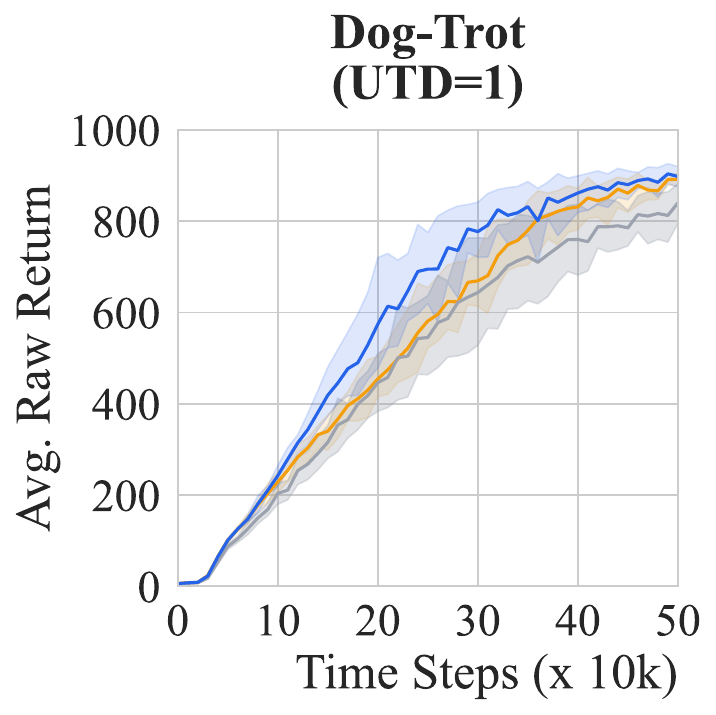}%
    \includegraphics[width=0.245\textwidth]{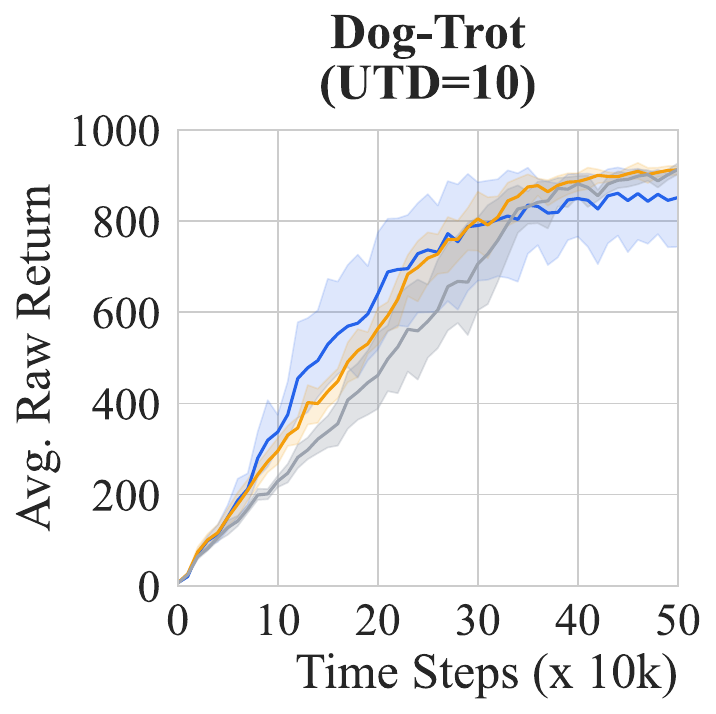}%
    \includegraphics[width=0.245\textwidth]{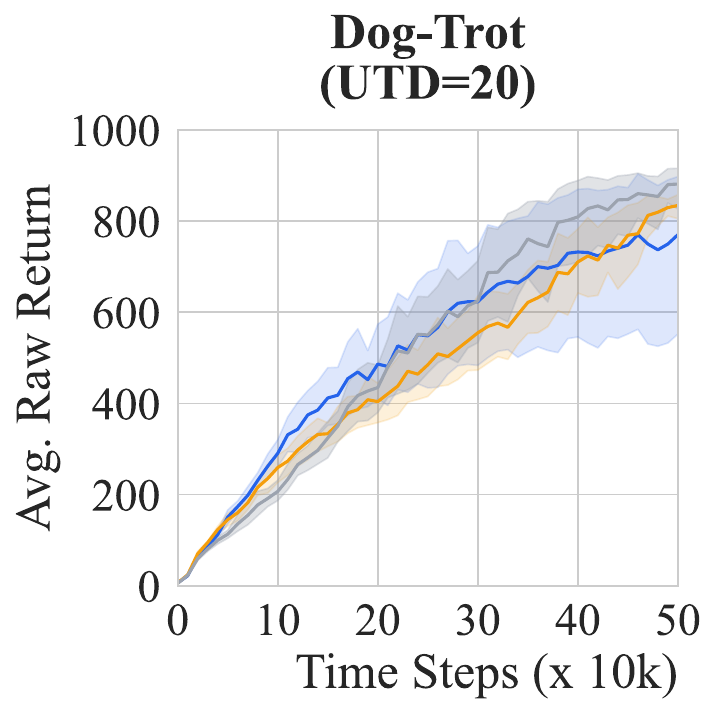}\\[0.5em]
    \includegraphics[width=0.245\textwidth]{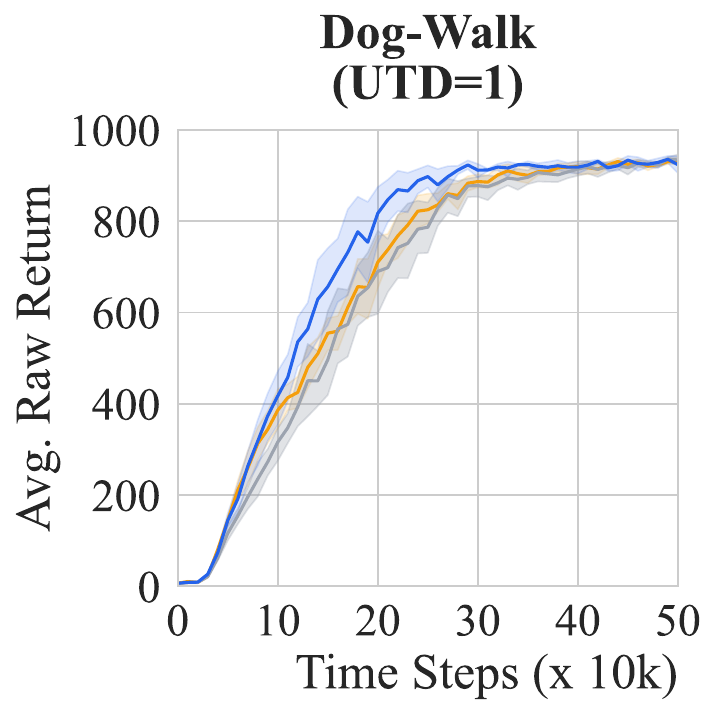}%
    \includegraphics[width=0.245\textwidth]{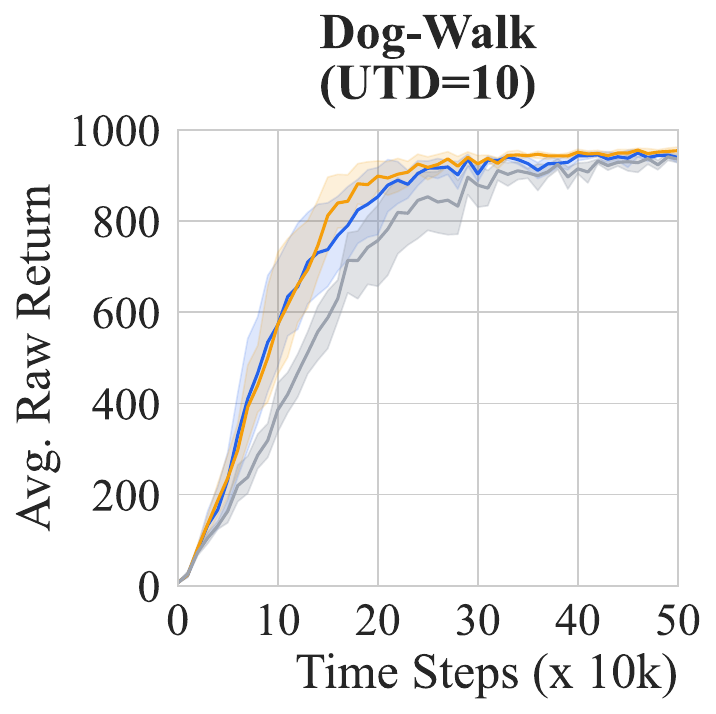}%
    \includegraphics[width=0.245\textwidth]{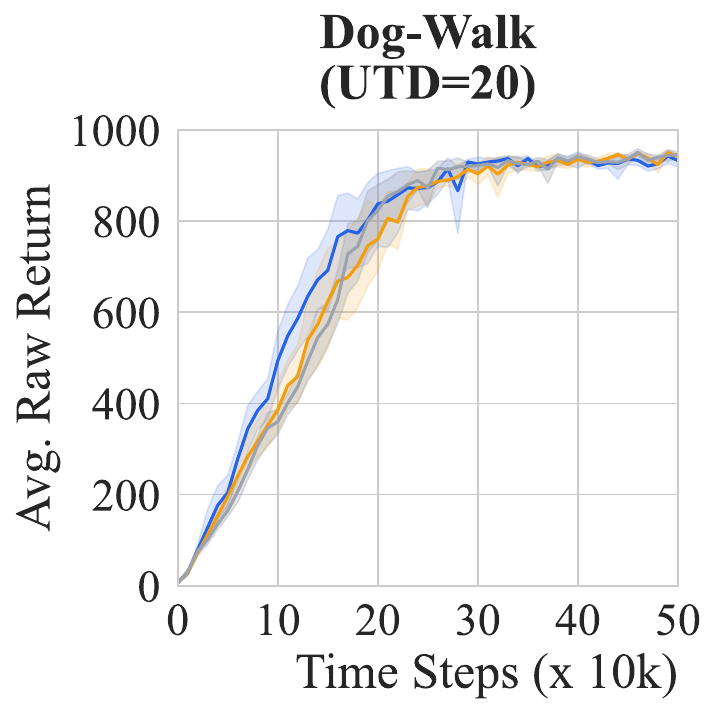}%
    \includegraphics[width=0.245\textwidth]{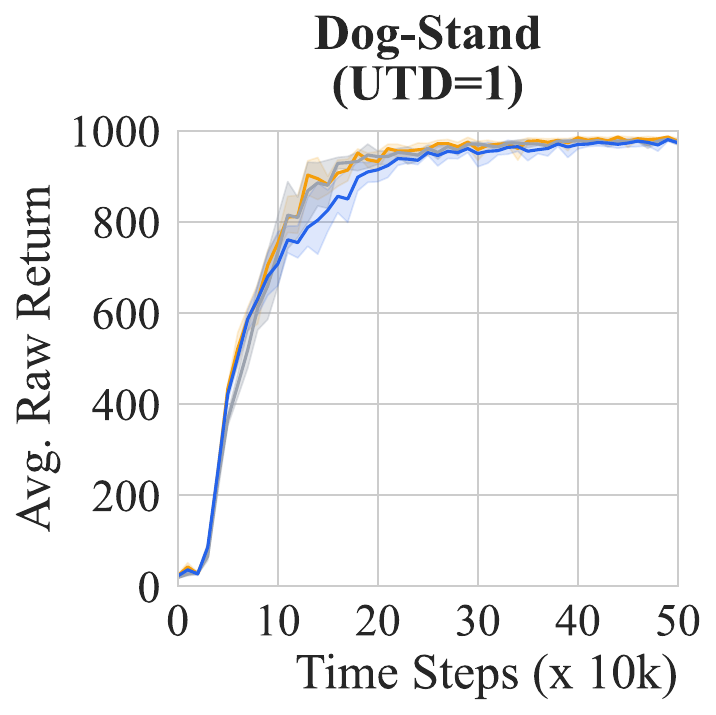}\\[0.5em]
    \includegraphics[width=0.245\textwidth]{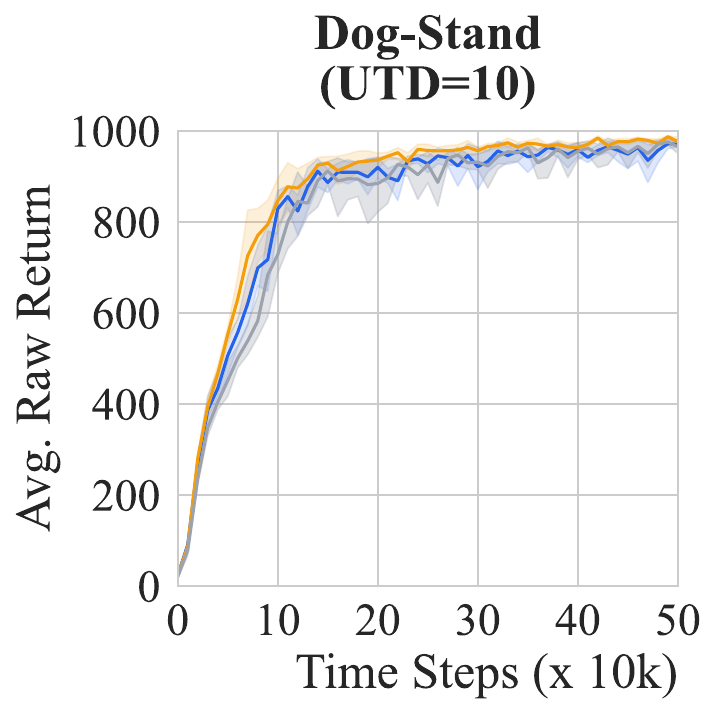}%
    \includegraphics[width=0.245\textwidth]{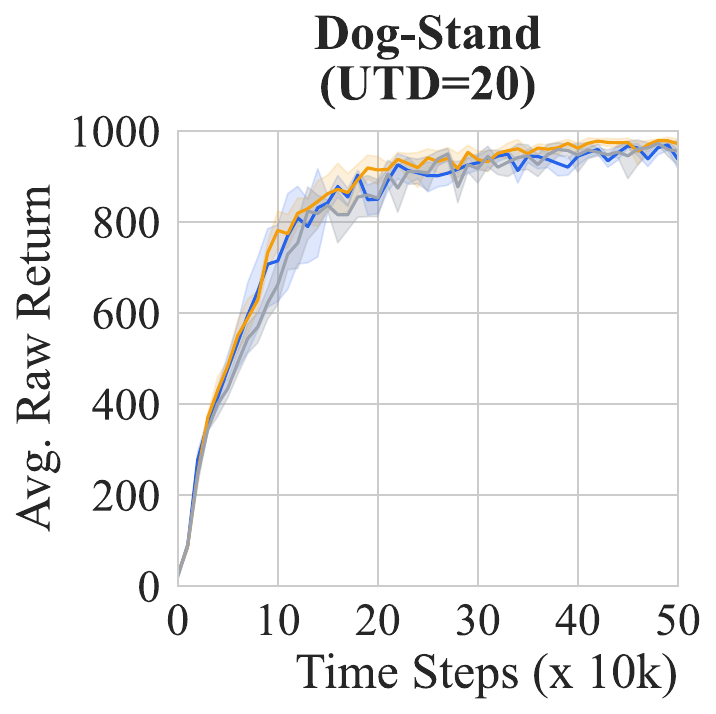}%
    \includegraphics[width=0.245\textwidth]{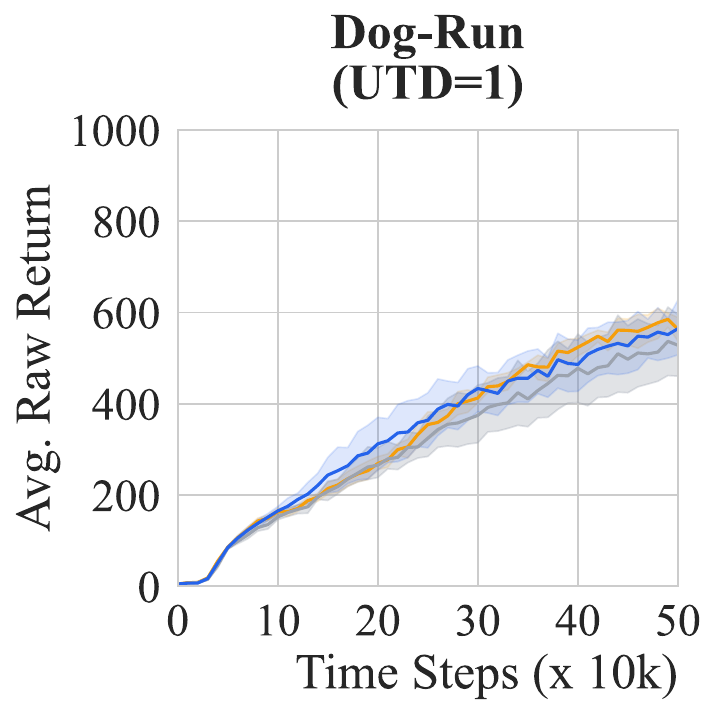}%
    \includegraphics[width=0.245\textwidth]{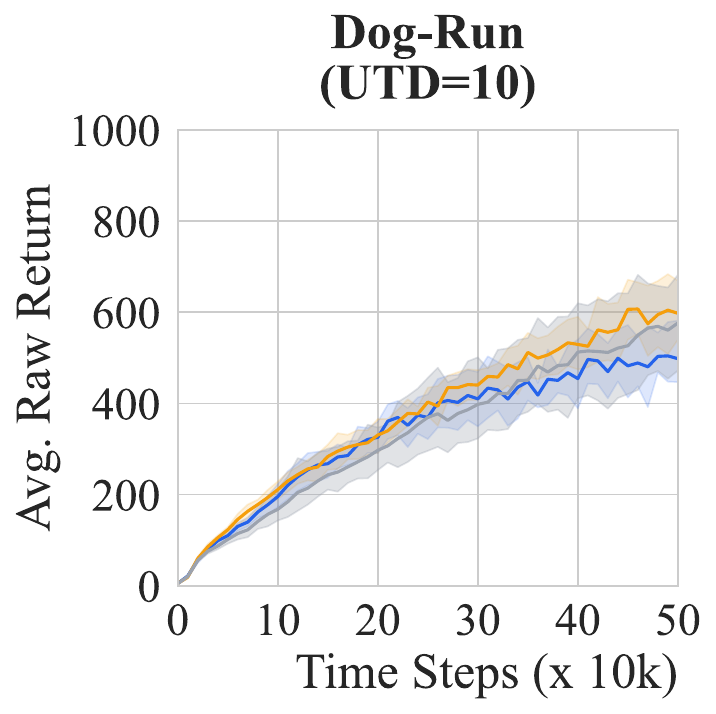}\\[0.5em]
    \includegraphics[width=0.245\textwidth]{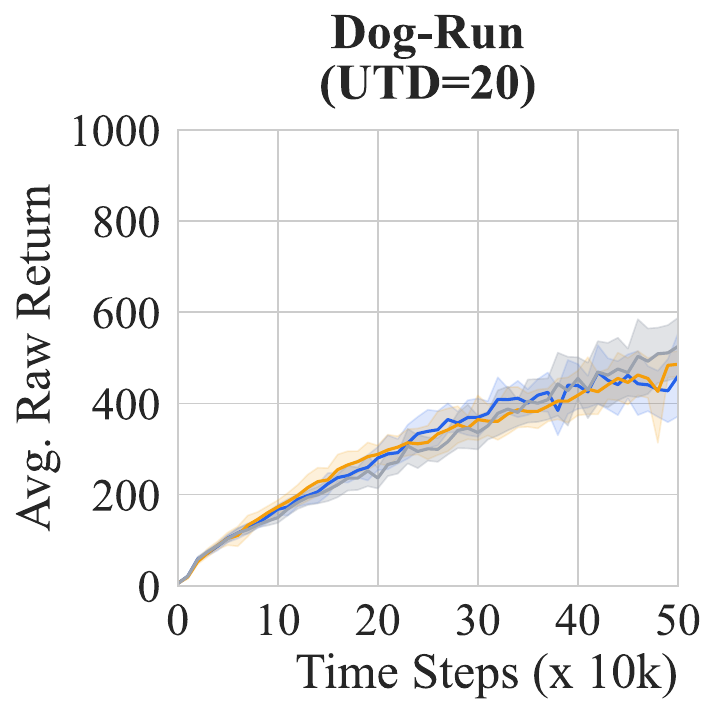}
    
    \caption{Learning curves for SimbaV2 baseline.}
    \label{fig:full_raw_simbav2_2}
\end{figure*}
    
    \input{tables/full_norm_simba}
    
\clearpage

%% file: tables/full_raw_td7.tex
\begin{table*}[!ht]
  \caption{Raw score performance comparison on TD7 (Base vs. Ours) at $\text{UTD}=1$, $\text{UTD}=10$, and $\text{UTD}=20$. We report mean $\pm$ std.}
  \label{tab:raw_score_td7}
  \begin{center}
    \begin{footnotesize}
      \begin{sc}
        \resizebox{\textwidth}{!}{
        \setlength{\tabcolsep}{3.5pt}
        \renewcommand{\arraystretch}{1.1} 
        \begin{tabular}{cl cc cc cc}
          \toprule
          & & \multicolumn{2}{c}{UTD = 1} & \multicolumn{2}{c}{UTD = 10} & \multicolumn{2}{c}{UTD = 20} \\
          \cmidrule(lr){3-4} \cmidrule(lr){5-6} \cmidrule(lr){7-8}
          & Environment & 
          \shortstack{TD7\\(Base)} & \shortstack{TD7\\(Ours)} & 
          \shortstack{TD7\\(Base)} & \shortstack{TD7\\(Ours)} & 
          \shortstack{TD7\\(Base)} & \shortstack{TD7\\(Ours)} \\
          \midrule
          \multirow{4}{*}{\rotatebox[origin=c]{90}{\shortstack[c]{Gym\\MuJoCo}}} 
          & Ant-v5       & 6897.9 {\scriptsize $\pm$ 441.0} & \textbf{7890.2 {\scriptsize $\pm$ 351.5}} & 7170.0 {\scriptsize $\pm$ 751.7} & \textbf{7981.4 {\scriptsize $\pm$ 607.7}} & 6863.3 {\scriptsize $\pm$ 1928.7} & \textbf{7623.0 {\scriptsize $\pm$ 516.7}} \\
          & Walker2d-v5  & 5335.4 {\scriptsize $\pm$ 691.3} & \textbf{5477.8 {\scriptsize $\pm$ 543.5}} & \textbf{5792.3 {\scriptsize $\pm$ 815.5}} & 5532.2 {\scriptsize $\pm$ 729.7} & 6034.4 {\scriptsize $\pm$ 413.5} & \textbf{6101.7 {\scriptsize $\pm$ 59.0}} \\
          & Hopper-v5    & 2776.4 {\scriptsize $\pm$ 427.5} & \textbf{2911.2 {\scriptsize $\pm$ 468.7}} & 3088.2 {\scriptsize $\pm$ 575.2} & \textbf{3238.5 {\scriptsize $\pm$ 429.9}} & 3281.9 {\scriptsize $\pm$ 340.9} & \textbf{3529.0 {\scriptsize $\pm$ 80.3}} \\
          & Humanoid-v5  & 5262.5 {\scriptsize $\pm$ 459.8} & \textbf{5787.6 {\scriptsize $\pm$ 458.3}} & 4753.1 {\scriptsize $\pm$ 508.6} & \textbf{4917.7 {\scriptsize $\pm$ 391.4}} & 4092.3 {\scriptsize $\pm$ 443.2} & \textbf{4463.7 {\scriptsize $\pm$ 188.8}} \\
          \midrule
          \multirow{7}{*}{\rotatebox[origin=c]{90}{DMC-Hard}} 
          & Humanoid-Walk     & \textbf{279.4 {\scriptsize $\pm$ 253.1}} & 118.0 {\scriptsize $\pm$ 162.2} & 527.6 {\scriptsize $\pm$ 31.3} & \textbf{587.5 {\scriptsize $\pm$ 48.7}} & 462.4 {\scriptsize $\pm$ 80.3} & \textbf{493.6 {\scriptsize $\pm$ 96.9}} \\
          & Humanoid-Run      & \textbf{74.6 {\scriptsize $\pm$ 65.7}} & 49.1 {\scriptsize $\pm$ 65.2} & \textbf{146.8 {\scriptsize $\pm$ 26.1}} & 132.9 {\scriptsize $\pm$ 75.9} & 143.5 {\scriptsize $\pm$ 10.8} & \textbf{162.5 {\scriptsize $\pm$ 33.4}} \\
          & Humanoid-Stand    & 411.3 {\scriptsize $\pm$ 382.6} & \textbf{763.2 {\scriptsize $\pm$ 160.2}} & 669.6 {\scriptsize $\pm$ 108.8} & \textbf{828.2 {\scriptsize $\pm$ 30.7}} & 375.6 {\scriptsize $\pm$ 247.2} & \textbf{640.3 {\scriptsize $\pm$ 170.7}} \\
          & Dog-Trot     & 375.5 {\scriptsize $\pm$ 74.4} & \textbf{453.1 {\scriptsize $\pm$ 139.3}} & 343.3 {\scriptsize $\pm$ 129.5} & \textbf{425.4 {\scriptsize $\pm$ 42.2}} & 206.7 {\scriptsize $\pm$ 40.1} & \textbf{322.6 {\scriptsize $\pm$ 160.3}} \\
          & Dog-Walk     & 699.2 {\scriptsize $\pm$ 103.4} & \textbf{808.7 {\scriptsize $\pm$ 45.6}} & 663.4 {\scriptsize $\pm$ 134.5} & \textbf{757.2 {\scriptsize $\pm$ 124.9}} & 282.5 {\scriptsize $\pm$ 128.2} & \textbf{610.2 {\scriptsize $\pm$ 82.4}} \\
          & Dog-Stand    & \textbf{902.8 {\scriptsize $\pm$ 23.4}} & 872.8 {\scriptsize $\pm$ 88.0} & 853.7 {\scriptsize $\pm$ 63.8} & \textbf{932.1 {\scriptsize $\pm$ 23.2}} & 816.1 {\scriptsize $\pm$ 44.7} & \textbf{872.3 {\scriptsize $\pm$ 50.8}} \\
          & Dog-Run      & 205.6 {\scriptsize $\pm$ 32.6} & \textbf{228.5 {\scriptsize $\pm$ 52.7}} & 201.7 {\scriptsize $\pm$ 23.6} & \textbf{265.4 {\scriptsize $\pm$ 35.1}} & 159.6 {\scriptsize $\pm$ 37.7} & \textbf{214.9 {\scriptsize $\pm$ 26.4}} \\
          \bottomrule
        \end{tabular}
        }
      \end{sc}
    \end{footnotesize}
  \end{center}
  \vskip -0.1in
\end{table*}

%% file: tables/full_norm_td7.tex
\begin{table*}[!ht]
  \caption{Performance comparison on TD7 (Base vs. Ours) at $\text{UTD}=1$, $\text{UTD}=10$, and $\text{UTD}=20$. We report normalized mean $\pm$ std. The final rows report the aggregate Mean and IQM with 95\% CIs.}
  \label{tab:utd_comparison_td7}
  \begin{center}
    \begin{footnotesize}
      \begin{sc}
        \setlength{\tabcolsep}{5.0pt}
        \renewcommand{\arraystretch}{1.1} 
        \begin{tabular}{cl cc cc cc}
          \toprule
          & & \multicolumn{2}{c}{UTD = 1} & \multicolumn{2}{c}{UTD = 10} & \multicolumn{2}{c}{UTD = 20} \\
          \cmidrule(lr){3-4} \cmidrule(lr){5-6} \cmidrule(lr){7-8}
          & Environment & 
          \shortstack{TD7\\(Base)} & \shortstack{TD7\\(Ours)} & 
          \shortstack{TD7\\(Base)} & \shortstack{TD7\\(Ours)} & 
          \shortstack{TD7\\(Base)} & \shortstack{TD7\\(Ours)} \\
          \midrule
          \multirow{4}{*}{\rotatebox[origin=c]{90}{\shortstack[c]{Gym\\MuJoCo}}} 
          & Ant-v5       & 1.00 {\scriptsize $\pm$ 0.06} & \textbf{1.14 {\scriptsize $\pm$ 0.05}} & 1.04 {\scriptsize $\pm$ 0.11} & \textbf{1.16 {\scriptsize $\pm$ 0.09}} & 1.00 {\scriptsize $\pm$ 0.28} & \textbf{1.11 {\scriptsize $\pm$ 0.08}} \\
          & Walker2d-v5  & 1.00 {\scriptsize $\pm$ 0.13} & \textbf{1.03 {\scriptsize $\pm$ 0.10}} & \textbf{1.09 {\scriptsize $\pm$ 0.15}} & 1.04 {\scriptsize $\pm$ 0.14} & 1.13 {\scriptsize $\pm$ 0.08} & \textbf{1.14 {\scriptsize $\pm$ 0.01}} \\
          & Hopper-v5    & 1.00 {\scriptsize $\pm$ 0.15} & \textbf{1.05 {\scriptsize $\pm$ 0.17}} & 1.11 {\scriptsize $\pm$ 0.21} & \textbf{1.17 {\scriptsize $\pm$ 0.16}} & 1.18 {\scriptsize $\pm$ 0.12} & \textbf{1.27 {\scriptsize $\pm$ 0.03}} \\
          & Humanoid-v5  & 1.00 {\scriptsize $\pm$ 0.09} & \textbf{1.10 {\scriptsize $\pm$ 0.09}} & 0.90 {\scriptsize $\pm$ 0.10} & \textbf{0.93 {\scriptsize $\pm$ 0.07}} & 0.78 {\scriptsize $\pm$ 0.08} & \textbf{0.85 {\scriptsize $\pm$ 0.04}} \\
          \midrule
          \multirow{7}{*}{\rotatebox[origin=c]{90}{DMC-Hard}} 
          & Humanoid-Walk     & \textbf{1.00 {\scriptsize $\pm$ 0.91}} & 0.42 {\scriptsize $\pm$ 0.58} & 1.89 {\scriptsize $\pm$ 0.11} & \textbf{2.10 {\scriptsize $\pm$ 0.17}} & 1.66 {\scriptsize $\pm$ 0.29} & \textbf{1.77 {\scriptsize $\pm$ 0.35}} \\
          & Humanoid-Run      & \textbf{1.00 {\scriptsize $\pm$ 0.88}} & 0.66 {\scriptsize $\pm$ 0.87} & \textbf{1.97 {\scriptsize $\pm$ 0.35}} & 1.78 {\scriptsize $\pm$ 1.02} & 1.92 {\scriptsize $\pm$ 0.15} & \textbf{2.18 {\scriptsize $\pm$ 0.45}} \\
          & Humanoid-Stand    & 1.00 {\scriptsize $\pm$ 0.93} & \textbf{1.86 {\scriptsize $\pm$ 0.39}} & 1.63 {\scriptsize $\pm$ 0.27} & \textbf{2.01 {\scriptsize $\pm$ 0.08}} & 0.91 {\scriptsize $\pm$ 0.60} & \textbf{1.56 {\scriptsize $\pm$ 0.42}} \\
          & Dog-Trot     & 1.00 {\scriptsize $\pm$ 0.20} & \textbf{1.21 {\scriptsize $\pm$ 0.37}} & 0.91 {\scriptsize $\pm$ 0.35} & \textbf{1.13 {\scriptsize $\pm$ 0.11}} & 0.55 {\scriptsize $\pm$ 0.11} & \textbf{0.86 {\scriptsize $\pm$ 0.43}} \\
          & Dog-Walk     & 1.00 {\scriptsize $\pm$ 0.15} & \textbf{1.16 {\scriptsize $\pm$ 0.07}} & 0.95 {\scriptsize $\pm$ 0.19} & \textbf{1.08 {\scriptsize $\pm$ 0.18}} & 0.40 {\scriptsize $\pm$ 0.18} & \textbf{0.87 {\scriptsize $\pm$ 0.12}} \\
          & Dog-Stand    & \textbf{1.00 {\scriptsize $\pm$ 0.03}} & 0.97 {\scriptsize $\pm$ 0.10} & 0.95 {\scriptsize $\pm$ 0.07} & \textbf{1.03 {\scriptsize $\pm$ 0.03}} & 0.90 {\scriptsize $\pm$ 0.05} & \textbf{0.97 {\scriptsize $\pm$ 0.06}} \\
          & Dog-Run      & 1.00 {\scriptsize $\pm$ 0.16} & \textbf{1.11 {\scriptsize $\pm$ 0.26}} & 0.98 {\scriptsize $\pm$ 0.12} & \textbf{1.29 {\scriptsize $\pm$ 0.17}} & 0.78 {\scriptsize $\pm$ 0.18} & \textbf{1.05 {\scriptsize $\pm$ 0.13}} \\
          \midrule
          \multicolumn{2}{c}{Mean}
          & \shortstack{1.00\\ \scriptsize (0.88, 1.12)} & \shortstack{\textbf{1.06}\\ \scriptsize (0.93, 1.18)} 
          & \shortstack{1.22\\ \scriptsize (1.11, 1.34)} & \shortstack{\textbf{1.34}\\ \scriptsize (1.21, 1.47)} 
          & \shortstack{1.02\\ \scriptsize (0.89, 1.15)} & \shortstack{\textbf{1.24}\\ \scriptsize (1.12, 1.37)} \\
          \addlinespace[0.2em]
          \multicolumn{2}{c}{IQM}
          & \shortstack{1.01\\ \scriptsize (0.96, 1.08)} & \shortstack{\textbf{1.09}\\ \scriptsize (1.02, 1.14)} 
          & \shortstack{1.11\\ \scriptsize (1.01, 1.27)} & \shortstack{\textbf{1.20}\\ \scriptsize (1.11, 1.40)} 
          & \shortstack{0.97\\ \scriptsize (0.85, 1.10)} & \shortstack{\textbf{1.13}\\ \scriptsize (1.05, 1.24)} \\
          \bottomrule
        \end{tabular}
      \end{sc}
    \end{footnotesize}
  \end{center}
  \vskip -0.1in
\end{table*}

%% file: tables/full_raw_min_phi.tex
\begin{table*}[!ht]
  \caption{Raw score performance comparison on Minimalist $\phi$ (Base vs. Ours) at $\text{UTD}=1$, $\text{UTD}=10$, and $\text{UTD}=20$. We report mean $\pm$ std.}
  \label{tab:raw_score_min_phi}
  \begin{center}
    \begin{footnotesize}
      \begin{sc}
        \resizebox{\textwidth}{!}{
        \setlength{\tabcolsep}{3.5pt}
        \renewcommand{\arraystretch}{1.1} 
        \begin{tabular}{cl cc cc cc}
          \toprule
          & & \multicolumn{2}{c}{UTD = 1} & \multicolumn{2}{c}{UTD = 10} & \multicolumn{2}{c}{UTD = 20} \\
          \cmidrule(lr){3-4} \cmidrule(lr){5-6} \cmidrule(lr){7-8}
          & Environment & 
          \shortstack{Minimalist $\phi$\\(Base)} & \shortstack{Minimalist $\phi$\\(Ours)} & 
          \shortstack{Minimalist $\phi$\\(Base)} & \shortstack{Minimalist $\phi$\\(Ours)} & 
          \shortstack{Minimalist $\phi$\\(Base)} & \shortstack{Minimalist $\phi$\\(Ours)} \\
          \midrule
          \multirow{4}{*}{\rotatebox[origin=c]{90}{\shortstack[c]{Gym\\MuJoCo}}} 
          & Ant-v5       & 4033.0 {\scriptsize $\pm$ 1031.8} & \textbf{4739.5 {\scriptsize $\pm$ 1083.8}} & -415.1 {\scriptsize $\pm$ 482.4} & \textbf{98.1 {\scriptsize $\pm$ 162.5}} & -88.5 {\scriptsize $\pm$ 101.2} & \textbf{-41.4 {\scriptsize $\pm$ 54.4}} \\
          & Walker2d-v5  & 3645.7 {\scriptsize $\pm$ 283.4} & \textbf{3851.2 {\scriptsize $\pm$ 370.6}} & \textbf{3237.5 {\scriptsize $\pm$ 473.8}} & 2927.8 {\scriptsize $\pm$ 912.3} & 1729.9 {\scriptsize $\pm$ 804.1} & \textbf{3429.7 {\scriptsize $\pm$ 816.5}} \\
          & Hopper-v5    & 2288.8 {\scriptsize $\pm$ 754.6} & \textbf{2859.1 {\scriptsize $\pm$ 123.2}} & \textbf{2312.6 {\scriptsize $\pm$ 866.0}} & 2128.7 {\scriptsize $\pm$ 282.1} & \textbf{2381.0 {\scriptsize $\pm$ 418.7}} & 1947.4 {\scriptsize $\pm$ 135.5} \\
          & Humanoid-v5  & \textbf{2975.6 {\scriptsize $\pm$ 715.1}} & 1596.0 {\scriptsize $\pm$ 456.5} & 559.5 {\scriptsize $\pm$ 150.7} & \textbf{1164.6 {\scriptsize $\pm$ 948.7}} & 430.5 {\scriptsize $\pm$ 173.5} & \textbf{1428.3 {\scriptsize $\pm$ 951.6}} \\
          \midrule
          \multirow{7}{*}{\rotatebox[origin=c]{90}{DMC-Hard}} 
          & Humanoid-Walk     & \textbf{93.0 {\scriptsize $\pm$ 203.7}} & 84.2 {\scriptsize $\pm$ 182.1} & 140.7 {\scriptsize $\pm$ 135.6} & \textbf{220.1 {\scriptsize $\pm$ 45.7}} & 44.5 {\scriptsize $\pm$ 96.1} & \textbf{210.2 {\scriptsize $\pm$ 27.4}} \\
          & Humanoid-Run      & 1.2 {\scriptsize $\pm$ 0.2} & \textbf{1.5 {\scriptsize $\pm$ 0.2}} & 45.3 {\scriptsize $\pm$ 41.2} & \textbf{71.1 {\scriptsize $\pm$ 7.9}} & 58.0 {\scriptsize $\pm$ 33.4} & \textbf{66.9 {\scriptsize $\pm$ 11.6}} \\
          & Humanoid-Stand    & 7.1 {\scriptsize $\pm$ 1.1} & \textbf{144.9 {\scriptsize $\pm$ 305.2}} & 129.3 {\scriptsize $\pm$ 152.1} & \textbf{266.8 {\scriptsize $\pm$ 291.4}} & 4.9 {\scriptsize $\pm$ 0.5} & \textbf{5.2 {\scriptsize $\pm$ 0.8}} \\
          & Dog-Trot     & \textbf{10.2 {\scriptsize $\pm$ 1.5}} & 9.1 {\scriptsize $\pm$ 1.6} & \textbf{10.9 {\scriptsize $\pm$ 0.6}} & 10.7 {\scriptsize $\pm$ 0.2} & 11.0 {\scriptsize $\pm$ 1.1} & \textbf{12.7 {\scriptsize $\pm$ 1.1}} \\
          & Dog-Walk     & \textbf{12.6 {\scriptsize $\pm$ 2.1}} & 9.7 {\scriptsize $\pm$ 0.8} & \textbf{18.1 {\scriptsize $\pm$ 7.3}} & 15.0 {\scriptsize $\pm$ 2.1} & \textbf{15.8 {\scriptsize $\pm$ 3.6}} & 14.3 {\scriptsize $\pm$ 2.0} \\
          & Dog-Stand    & 32.5 {\scriptsize $\pm$ 7.3} & \textbf{34.3 {\scriptsize $\pm$ 6.4}} & \textbf{42.5 {\scriptsize $\pm$ 6.7}} & 40.9 {\scriptsize $\pm$ 3.6} & \textbf{47.3 {\scriptsize $\pm$ 10.4}} & 37.8 {\scriptsize $\pm$ 2.0} \\
          & Dog-Run      & \textbf{8.6 {\scriptsize $\pm$ 1.7}} & 8.1 {\scriptsize $\pm$ 1.2} & 8.8 {\scriptsize $\pm$ 1.3} & \textbf{8.9 {\scriptsize $\pm$ 1.0}} & 9.8 {\scriptsize $\pm$ 1.1} & \textbf{10.3 {\scriptsize $\pm$ 1.3}} \\
          \bottomrule
        \end{tabular}
        }
      \end{sc}
    \end{footnotesize}
  \end{center}
  \vskip -0.1in
\end{table*}

%% file: tables/full_norm_min_phi.tex
\begin{table*}[!ht]
  \caption{Performance comparison on Minimalist $\phi$ (Base vs. Ours) at $\text{UTD}=1$, $\text{UTD}=10$, and $\text{UTD}=20$. We report normalized mean $\pm$ std. The final rows report the aggregate Mean and IQM with 95\% CIs.}
  \label{tab:utd_comparison_min_phi}
  \begin{center}
    \begin{footnotesize}
      \begin{sc}
        \resizebox{\textwidth}{!}{ 
        \setlength{\tabcolsep}{5.0pt} 
        \renewcommand{\arraystretch}{1.1} 
        \begin{tabular}{cl cc cc cc}
          \toprule
          & & \multicolumn{2}{c}{UTD = 1} & \multicolumn{2}{c}{UTD = 10} & \multicolumn{2}{c}{UTD = 20} \\
          \cmidrule(lr){3-4} \cmidrule(lr){5-6} \cmidrule(lr){7-8}
          & Environment & 
          \shortstack{Minimalist $\phi$\\(Base)} & \shortstack{Minimalist $\phi$\\(Ours)} & 
          \shortstack{Minimalist $\phi$\\(Base)} & \shortstack{Minimalist $\phi$\\(Ours)} & 
          \shortstack{Minimalist $\phi$\\(Base)} & \shortstack{Minimalist $\phi$\\(Ours)} \\
          \midrule
          \multirow{4}{*}{\rotatebox[origin=c]{90}{\shortstack[c]{Gym\\MuJoCo}}} 
          & Ant-v5       & 1.00 {\scriptsize $\pm$ 0.26} & \textbf{1.18 {\scriptsize $\pm$ 0.27}} & -0.10 {\scriptsize $\pm$ 0.12} & \textbf{0.02 {\scriptsize $\pm$ 0.04}} & -0.02 {\scriptsize $\pm$ 0.03} & \textbf{-0.01 {\scriptsize $\pm$ 0.01}} \\
          & Walker2d-v5  & 1.00 {\scriptsize $\pm$ 0.08} & \textbf{1.06 {\scriptsize $\pm$ 0.10}} & \textbf{0.89 {\scriptsize $\pm$ 0.13}} & 0.80 {\scriptsize $\pm$ 0.25} & 0.47 {\scriptsize $\pm$ 0.22} & \textbf{0.94 {\scriptsize $\pm$ 0.22}} \\
          & Hopper-v5    & 1.00 {\scriptsize $\pm$ 0.33} & \textbf{1.25 {\scriptsize $\pm$ 0.05}} & \textbf{1.01 {\scriptsize $\pm$ 0.38}} & 0.93 {\scriptsize $\pm$ 0.12} & \textbf{1.04 {\scriptsize $\pm$ 0.18}} & 0.85 {\scriptsize $\pm$ 0.06} \\
          & Humanoid-v5  & \textbf{1.00 {\scriptsize $\pm$ 0.24}} & 0.54 {\scriptsize $\pm$ 0.15} & 0.19 {\scriptsize $\pm$ 0.05} & \textbf{0.39 {\scriptsize $\pm$ 0.32}} & 0.15 {\scriptsize $\pm$ 0.06} & \textbf{0.48 {\scriptsize $\pm$ 0.32}} \\
          \midrule
          \multirow{7}{*}{\rotatebox[origin=c]{90}{DMC-Hard}} 
          & Humanoid-Walk    & \textbf{1.00 {\scriptsize $\pm$ 2.19}} & 0.91 {\scriptsize $\pm$ 1.96} & 1.51 {\scriptsize $\pm$ 1.46} & \textbf{2.37 {\scriptsize $\pm$ 0.49}} & 0.48 {\scriptsize $\pm$ 1.03} & \textbf{2.26 {\scriptsize $\pm$ 0.30}} \\
          & Humanoid-Run      & 1.00 {\scriptsize $\pm$ 0.20} & \textbf{1.27 {\scriptsize $\pm$ 0.13}} & 39.35 {\scriptsize $\pm$ 35.85} & \textbf{61.83 {\scriptsize $\pm$ 6.88}} & 50.38 {\scriptsize $\pm$ 29.00} & \textbf{58.16 {\scriptsize $\pm$ 10.09}} \\
          & Humanoid-Stand    & 1.00 {\scriptsize $\pm$ 0.16} & \textbf{20.27 {\scriptsize $\pm$ 42.69}} & 18.09 {\scriptsize $\pm$ 21.27} & \textbf{37.32 {\scriptsize $\pm$ 40.76}} & 0.69 {\scriptsize $\pm$ 0.07} & \textbf{0.73 {\scriptsize $\pm$ 0.11}} \\
          & Dog-Trot     & \textbf{1.00 {\scriptsize $\pm$ 0.14}} & 0.90 {\scriptsize $\pm$ 0.16} & \textbf{1.07 {\scriptsize $\pm$ 0.06}} & 1.05 {\scriptsize $\pm$ 0.02} & 1.08 {\scriptsize $\pm$ 0.11} & \textbf{1.24 {\scriptsize $\pm$ 0.11}} \\
          & Dog-Walk     & \textbf{1.00 {\scriptsize $\pm$ 0.17}} & 0.77 {\scriptsize $\pm$ 0.06} & \textbf{1.43 {\scriptsize $\pm$ 0.58}} & 1.19 {\scriptsize $\pm$ 0.16} & \textbf{1.26 {\scriptsize $\pm$ 0.28}} & 1.14 {\scriptsize $\pm$ 0.16} \\
          & Dog-Stand    & 1.00 {\scriptsize $\pm$ 0.22} & \textbf{1.06 {\scriptsize $\pm$ 0.20}} & \textbf{1.31 {\scriptsize $\pm$ 0.21}} & 1.26 {\scriptsize $\pm$ 0.11} & \textbf{1.46 {\scriptsize $\pm$ 0.32}} & 1.16 {\scriptsize $\pm$ 0.06} \\
          & Dog-Run      & \textbf{1.00 {\scriptsize $\pm$ 0.20}} & 0.94 {\scriptsize $\pm$ 0.14} & 1.03 {\scriptsize $\pm$ 0.15} & \textbf{1.04 {\scriptsize $\pm$ 0.12}} & 1.15 {\scriptsize $\pm$ 0.13} & \textbf{1.21 {\scriptsize $\pm$ 0.15}} \\
          \midrule
          \multicolumn{2}{c}{Mean}
          & \shortstack{1.00\\ \scriptsize (0.87, 1.18)} & \shortstack{\textbf{2.74}\\ \scriptsize (0.89, 6.25)} 
          & \shortstack{5.98\\ \scriptsize (2.08, 10.47)} & \shortstack{\textbf{9.84}\\ \scriptsize (4.16, 16.23)} 
          & \shortstack{5.28\\ \scriptsize (1.75, 9.98)} & \shortstack{\textbf{6.20}\\ \scriptsize (2.03, 11.04)} \\
          \addlinespace[0.2em]
          \multicolumn{2}{c}{IQM}
          & \shortstack{0.98\\ \scriptsize (0.91, 1.04)} & \shortstack{\textbf{1.01}\\ \scriptsize (0.92, 1.10)} 
          & \shortstack{1.03\\ \scriptsize (0.83, 1.22)} & \shortstack{\textbf{1.08}\\ \scriptsize (0.93, 2.44)} 
          & \shortstack{0.87\\ \scriptsize (0.65, 1.05)} & \shortstack{\textbf{1.05}\\ \scriptsize (0.93, 1.21)} \\
          \bottomrule
        \end{tabular}
        } 
      \end{sc}
    \end{footnotesize}
  \end{center}
  \vskip -0.1in
\end{table*}

%% file: tables/full_raw_td7ln.tex
\begin{table*}[!ht]
  \caption{Raw score performance comparison on TD7+LayerNorm~\cite{TD7, LayerNorm} (Base vs. Ours) at $\text{UTD}=1$, $\text{UTD}=10$, and $\text{UTD}=20$. We report mean $\pm$ std.}
  \label{tab:raw_score_td7_ln}
  \begin{center}
    \begin{footnotesize}
      \begin{sc}
        \resizebox{\textwidth}{!}{
        \setlength{\tabcolsep}{3.5pt}
        \renewcommand{\arraystretch}{1.1} 
        \begin{tabular}{cl cc cc cc}
          \toprule
          & & \multicolumn{2}{c}{UTD = 1} & \multicolumn{2}{c}{UTD = 10} & \multicolumn{2}{c}{UTD = 20} \\
          \cmidrule(lr){3-4} \cmidrule(lr){5-6} \cmidrule(lr){7-8}
          & Environment & 
          \shortstack{TD7+LN\\(Base)} & \shortstack{TD7+LN\\(Ours)} & 
          \shortstack{TD7+LN\\(Base)} & \shortstack{TD7+LN\\(Ours)} & 
          \shortstack{TD7+LN\\(Base)} & \shortstack{TD7+LN\\(Ours)} \\
          \midrule
          \multirow{4}{*}{\rotatebox[origin=c]{90}{\shortstack[c]{Gym\\MuJoCo}}} 
          & Ant-v5       & 7281.1 {\scriptsize $\pm$ 418.2} & \textbf{7638.2 {\scriptsize $\pm$ 421.5}} & 5860.4 {\scriptsize $\pm$ 1471.3} & \textbf{7707.6 {\scriptsize $\pm$ 433.4}} & 4130.1 {\scriptsize $\pm$ 344.6} & \textbf{7418.2 {\scriptsize $\pm$ 920.8}} \\
          & Walker2d-v5  & \textbf{5852.7 {\scriptsize $\pm$ 425.7}} & 5625.2 {\scriptsize $\pm$ 442.8} & 5963.6 {\scriptsize $\pm$ 597.0} & \textbf{6040.8 {\scriptsize $\pm$ 383.4}} & 5651.7 {\scriptsize $\pm$ 752.0} & \textbf{6421.1 {\scriptsize $\pm$ 424.4}} \\
          & Hopper-v5    & 2710.1 {\scriptsize $\pm$ 756.1} & \textbf{2722.0 {\scriptsize $\pm$ 483.6}} & 2955.1 {\scriptsize $\pm$ 403.8} & \textbf{3123.6 {\scriptsize $\pm$ 675.3}} & 3461.5 {\scriptsize $\pm$ 225.3} & \textbf{3598.4 {\scriptsize $\pm$ 56.9}} \\
          & Humanoid-v5  & \textbf{5763.7 {\scriptsize $\pm$ 562.4}} & 5185.9 {\scriptsize $\pm$ 522.2} & \textbf{5162.2 {\scriptsize $\pm$ 335.9}} & 2330.6 {\scriptsize $\pm$ 931.0} & \textbf{4467.6 {\scriptsize $\pm$ 543.2}} & 4460.6 {\scriptsize $\pm$ 547.5} \\
          \midrule
          \multirow{7}{*}{\rotatebox[origin=c]{90}{DMC-Hard}} 
          & Humanoid-Walk     & 344.5 {\scriptsize $\pm$ 206.3} & \textbf{397.1 {\scriptsize $\pm$ 229.1}} & 451.0 {\scriptsize $\pm$ 51.0} & \textbf{553.8 {\scriptsize $\pm$ 89.4}} & 377.4 {\scriptsize $\pm$ 53.8} & \textbf{493.4 {\scriptsize $\pm$ 50.0}} \\
          & Humanoid-Run      & 109.6 {\scriptsize $\pm$ 62.4} & \textbf{134.9 {\scriptsize $\pm$ 28.4}} & 122.2 {\scriptsize $\pm$ 8.2} & \textbf{170.4 {\scriptsize $\pm$ 13.3}} & 112.2 {\scriptsize $\pm$ 13.9} & \textbf{178.3 {\scriptsize $\pm$ 38.2}} \\
          & Humanoid-Stand    & 525.0 {\scriptsize $\pm$ 88.4} & \textbf{611.4 {\scriptsize $\pm$ 222.5}} & 464.1 {\scriptsize $\pm$ 86.5} & \textbf{657.4 {\scriptsize $\pm$ 67.8}} & 305.2 {\scriptsize $\pm$ 171.3} & \textbf{600.2 {\scriptsize $\pm$ 190.5}} \\
          & Dog-Trot     & 336.1 {\scriptsize $\pm$ 79.1} & \textbf{425.5 {\scriptsize $\pm$ 101.0}} & 327.6 {\scriptsize $\pm$ 47.2} & \textbf{346.5 {\scriptsize $\pm$ 63.6}} & 277.7 {\scriptsize $\pm$ 39.4} & \textbf{289.8 {\scriptsize $\pm$ 58.1}} \\
          & Dog-Walk     & 596.6 {\scriptsize $\pm$ 107.9} & \textbf{701.7 {\scriptsize $\pm$ 80.1}} & 512.0 {\scriptsize $\pm$ 125.8} & \textbf{777.3 {\scriptsize $\pm$ 47.1}} & 444.3 {\scriptsize $\pm$ 188.8} & \textbf{560.5 {\scriptsize $\pm$ 135.5}} \\
          & Dog-Stand    & \textbf{927.0 {\scriptsize $\pm$ 43.4}} & 899.7 {\scriptsize $\pm$ 41.6} & 885.0 {\scriptsize $\pm$ 24.6} & \textbf{900.2 {\scriptsize $\pm$ 37.1}} & 782.2 {\scriptsize $\pm$ 96.8} & \textbf{886.0 {\scriptsize $\pm$ 20.0}} \\
          & Dog-Run      & 215.0 {\scriptsize $\pm$ 23.1} & \textbf{225.7 {\scriptsize $\pm$ 33.2}} & 232.6 {\scriptsize $\pm$ 2.1} & \textbf{244.6 {\scriptsize $\pm$ 14.0}} & 199.4 {\scriptsize $\pm$ 17.7} & \textbf{208.1 {\scriptsize $\pm$ 20.8}} \\
          \bottomrule
        \end{tabular}
        }
      \end{sc}
    \end{footnotesize}
  \end{center}
  \vskip -0.1in
\end{table*}

%% file: tables/full_norm_td7ln.tex
\begin{table*}[!ht]
  \caption{Performance comparison on TD7 + LayerNorm (Base vs. Ours) at $\text{UTD}=1$, $\text{UTD}=10$, and $\text{UTD}=20$. We report normalized mean $\pm$ std. The final rows report the aggregate Mean and IQM with 95\% CIs.}
  \label{tab:utd_comparison_td7_ln}
  \begin{center}
    \begin{footnotesize}
      \begin{sc}
        \setlength{\tabcolsep}{5.0pt}
        \renewcommand{\arraystretch}{1.1} 
        \begin{tabular}{cl cc cc cc}
          \toprule
          & & \multicolumn{2}{c}{UTD = 1} & \multicolumn{2}{c}{UTD = 10} & \multicolumn{2}{c}{UTD = 20} \\
          \cmidrule(lr){3-4} \cmidrule(lr){5-6} \cmidrule(lr){7-8}
          & Environment & 
          \shortstack{TD7+LN\\(Base)} & \shortstack{TD7+LN\\(Ours)} & 
          \shortstack{TD7+LN\\(Base)} & \shortstack{TD7+LN\\(Ours)} & 
          \shortstack{TD7+LN\\(Base)} & \shortstack{TD7+LN\\(Ours)} \\
          \midrule
          \multirow{4}{*}{\rotatebox[origin=c]{90}{\shortstack[c]{Gym\\MuJoCo}}} 
          & Ant-v5       & 1.00 {\scriptsize $\pm$ 0.06} & \textbf{1.05 {\scriptsize $\pm$ 0.06}} & 0.81 {\scriptsize $\pm$ 0.20} & \textbf{1.06 {\scriptsize $\pm$ 0.06}} & 0.57 {\scriptsize $\pm$ 0.05} & \textbf{1.02 {\scriptsize $\pm$ 0.13}} \\
          & Walker2d-v5  & \textbf{1.00 {\scriptsize $\pm$ 0.07}} & 0.96 {\scriptsize $\pm$ 0.08} & 1.02 {\scriptsize $\pm$ 0.10} & \textbf{1.03 {\scriptsize $\pm$ 0.07}} & 0.97 {\scriptsize $\pm$ 0.13} & \textbf{1.10 {\scriptsize $\pm$ 0.07}} \\
          & Hopper-v5    & \textbf{1.00 {\scriptsize $\pm$ 0.28}} & \textbf{1.00 {\scriptsize $\pm$ 0.18}} & 1.09 {\scriptsize $\pm$ 0.15} & \textbf{1.15 {\scriptsize $\pm$ 0.25}} & 1.28 {\scriptsize $\pm$ 0.08} & \textbf{1.33 {\scriptsize $\pm$ 0.02}} \\
          & Humanoid-v5  & \textbf{1.00 {\scriptsize $\pm$ 0.10}} & 0.90 {\scriptsize $\pm$ 0.09} & \textbf{0.90 {\scriptsize $\pm$ 0.06}} & 0.40 {\scriptsize $\pm$ 0.16} & \textbf{0.78 {\scriptsize $\pm$ 0.09}} & 0.77 {\scriptsize $\pm$ 0.10} \\
          \midrule
          \multirow{7}{*}{\rotatebox[origin=c]{90}{DMC-Hard}} 
          & Humanoid-Walk     & 1.00 {\scriptsize $\pm$ 0.60} & \textbf{1.15 {\scriptsize $\pm$ 0.67}} & 1.31 {\scriptsize $\pm$ 0.15} & \textbf{1.61 {\scriptsize $\pm$ 0.26}} & 1.10 {\scriptsize $\pm$ 0.16} & \textbf{1.43 {\scriptsize $\pm$ 0.15}} \\
          & Humanoid-Run      & 1.00 {\scriptsize $\pm$ 0.57} & \textbf{1.23 {\scriptsize $\pm$ 0.26}} & 1.12 {\scriptsize $\pm$ 0.08} & \textbf{1.55 {\scriptsize $\pm$ 0.12}} & 1.02 {\scriptsize $\pm$ 0.13} & \textbf{1.63 {\scriptsize $\pm$ 0.35}} \\
          & Humanoid-Stand    & 1.00 {\scriptsize $\pm$ 0.17} & \textbf{1.16 {\scriptsize $\pm$ 0.42}} & 0.88 {\scriptsize $\pm$ 0.17} & \textbf{1.25 {\scriptsize $\pm$ 0.13}} & 0.58 {\scriptsize $\pm$ 0.33} & \textbf{1.14 {\scriptsize $\pm$ 0.36}} \\
          & Dog-Trot     & 1.00 {\scriptsize $\pm$ 0.24} & \textbf{1.27 {\scriptsize $\pm$ 0.30}} & 0.98 {\scriptsize $\pm$ 0.14} & \textbf{1.03 {\scriptsize $\pm$ 0.19}} & 0.83 {\scriptsize $\pm$ 0.12} & \textbf{0.86 {\scriptsize $\pm$ 0.17}} \\
          & Dog-Walk     & 1.00 {\scriptsize $\pm$ 0.18} & \textbf{1.18 {\scriptsize $\pm$ 0.13}} & 0.86 {\scriptsize $\pm$ 0.21} & \textbf{1.30 {\scriptsize $\pm$ 0.08}} & 0.75 {\scriptsize $\pm$ 0.32} & \textbf{0.94 {\scriptsize $\pm$ 0.23}} \\
          & Dog-Stand    & \textbf{1.00 {\scriptsize $\pm$ 0.05}} & 0.97 {\scriptsize $\pm$ 0.05} & 0.96 {\scriptsize $\pm$ 0.03} & \textbf{0.97 {\scriptsize $\pm$ 0.04}} & 0.84 {\scriptsize $\pm$ 0.10} & \textbf{0.96 {\scriptsize $\pm$ 0.02}} \\
          & Dog-Run      & 1.00 {\scriptsize $\pm$ 0.11} & \textbf{1.05 {\scriptsize $\pm$ 0.15}} & 1.08 {\scriptsize $\pm$ 0.01} & \textbf{1.14 {\scriptsize $\pm$ 0.07}} & 0.93 {\scriptsize $\pm$ 0.08} & \textbf{0.97 {\scriptsize $\pm$ 0.10}} \\
          \midrule
          \multicolumn{2}{c}{Mean}
          & \shortstack{1.00\\ \scriptsize (0.93, 1.07)} & \shortstack{\textbf{1.08}\\ \scriptsize (1.01, 1.16)} 
          & \shortstack{1.00\\ \scriptsize (0.95, 1.05)} & \shortstack{\textbf{1.14}\\ \scriptsize (1.05, 1.23)} 
          & \shortstack{0.88\\ \scriptsize (0.80, 0.94)} & \shortstack{\textbf{1.10}\\ \scriptsize (1.03, 1.19)} \\
          \addlinespace[0.2em]
          \multicolumn{2}{c}{IQM}
          & \shortstack{1.01\\ \scriptsize (0.97, 1.06)} & \shortstack{\textbf{1.07}\\ \scriptsize (1.00, 1.13)} 
          & \shortstack{1.00\\ \scriptsize (0.96, 1.05)} & \shortstack{\textbf{1.15}\\ \scriptsize (1.07, 1.23)} 
          & \shortstack{0.87\\ \scriptsize (0.80, 0.94)} & \shortstack{\textbf{1.06}\\ \scriptsize (0.98, 1.15)} \\
          \bottomrule
        \end{tabular}
      \end{sc}
    \end{footnotesize}
  \end{center}
  \vskip -0.1in
\end{table*}

%% file: tables/full_raw_simba.tex
\begin{table*}[!ht]
  \caption{Raw score performance comparison on SimbaV2~\cite{SimbaV2}, SimbaV2-SPL, and +\ours at $\text{UTD}=1$, $\text{UTD}=10$, and $\text{UTD}=20$. We report mean $\pm$ std.}
  \label{tab:raw_score_simba}
  \begin{center}
    \begin{footnotesize}
      \begin{sc}
        \resizebox{\textwidth}{!}{
        \setlength{\tabcolsep}{2.5pt}
        \renewcommand{\arraystretch}{1.1} 
        \begin{tabular}{cl ccc ccc ccc}
          \toprule
          & & \multicolumn{3}{c}{UTD = 1} & \multicolumn{3}{c}{UTD = 10} & \multicolumn{3}{c}{UTD = 20} \\
          \cmidrule(lr){3-5} \cmidrule(lr){6-8} \cmidrule(lr){9-11}
          & Environment & 
          \shortstack{SimbaV2\\(Base)} & \shortstack{SimbaV2-SPL\\(Ours)} & \shortstack{+\ours\\(Ours)} & 
          \shortstack{SimbaV2\\(Base)} & \shortstack{SimbaV2-SPL\\(Ours)} & \shortstack{+\ours\\(Ours)} & 
          \shortstack{SimbaV2\\(Base)} & \shortstack{SimbaV2-SPL\\(Ours)} & \shortstack{+\ours\\(Ours)} \\
          \midrule
          \multirow{4}{*}{\rotatebox[origin=c]{90}{\shortstack[c]{Gym\\MuJoCo}}} 
          & Ant-v5       & 5572.5 {\scriptsize $\pm$ 1212.2} & \textbf{6953.3 {\scriptsize $\pm$ 277.7}} & 6705.1 {\scriptsize $\pm$ 558.0} & 4861.8 {\scriptsize $\pm$ 1800.4} & 7063.1 {\scriptsize $\pm$ 450.4} & \textbf{7101.3 {\scriptsize $\pm$ 349.6}} & 5082.6 {\scriptsize $\pm$ 848.1} & \textbf{7175.4 {\scriptsize $\pm$ 293.3}} & 7083.0 {\scriptsize $\pm$ 340.0} \\
          & Walker2d-v5  & \textbf{6221.2 {\scriptsize $\pm$ 560.2}} & 6036.3 {\scriptsize $\pm$ 442.0} & 5459.0 {\scriptsize $\pm$ 540.7} & \textbf{5884.4 {\scriptsize $\pm$ 365.0}} & 5642.7 {\scriptsize $\pm$ 499.3} & 5823.1 {\scriptsize $\pm$ 376.5} & \textbf{5916.9 {\scriptsize $\pm$ 330.1}} & 5715.7 {\scriptsize $\pm$ 514.0} & 5479.6 {\scriptsize $\pm$ 922.7} \\
          & Hopper-v5    & \textbf{3437.2 {\scriptsize $\pm$ 245.1}} & 3115.8 {\scriptsize $\pm$ 396.0} & 3308.0 {\scriptsize $\pm$ 226.1} & 3517.0 {\scriptsize $\pm$ 525.4} & 3534.1 {\scriptsize $\pm$ 303.7} & \textbf{3573.1 {\scriptsize $\pm$ 167.2}} & 3614.0 {\scriptsize $\pm$ 229.4} & \textbf{3663.2 {\scriptsize $\pm$ 283.9}} & 3527.2 {\scriptsize $\pm$ 256.6} \\
          & Humanoid-v5  & \textbf{7824.2 {\scriptsize $\pm$ 187.2}} & 7816.3 {\scriptsize $\pm$ 839.3} & 7220.1 {\scriptsize $\pm$ 529.9} & 9016.5 {\scriptsize $\pm$ 550.5} & 8616.6 {\scriptsize $\pm$ 957.9} & \textbf{9361.2 {\scriptsize $\pm$ 372.0}} & 8688.9 {\scriptsize $\pm$ 727.6} & 8925.9 {\scriptsize $\pm$ 318.9} & \textbf{8940.5 {\scriptsize $\pm$ 344.2}} \\
          \midrule
          \multirow{7}{*}{\rotatebox[origin=c]{90}{DMC-Hard}} 
          & Humanoid-Walk     & 478.6 {\scriptsize $\pm$ 98.2} & 732.8 {\scriptsize $\pm$ 92.2} & \textbf{750.1 {\scriptsize $\pm$ 123.9}} & 863.1 {\scriptsize $\pm$ 92.2} & 890.8 {\scriptsize $\pm$ 95.2} & \textbf{895.2 {\scriptsize $\pm$ 68.9}} & 769.6 {\scriptsize $\pm$ 122.0} & 928.4 {\scriptsize $\pm$ 10.4} & \textbf{935.9 {\scriptsize $\pm$ 9.2}} \\
          & Humanoid-Run      & 130.5 {\scriptsize $\pm$ 24.6} & \textbf{199.4 {\scriptsize $\pm$ 29.4}} & 196.5 {\scriptsize $\pm$ 16.3} & 259.0 {\scriptsize $\pm$ 25.2} & \textbf{426.0 {\scriptsize $\pm$ 42.3}} & 391.5 {\scriptsize $\pm$ 92.5} & 269.5 {\scriptsize $\pm$ 89.0} & 389.1 {\scriptsize $\pm$ 93.4} & \textbf{468.2 {\scriptsize $\pm$ 44.4}} \\
          & Humanoid-Stand    & 638.4 {\scriptsize $\pm$ 341.5} & 848.9 {\scriptsize $\pm$ 124.0} & \textbf{908.1 {\scriptsize $\pm$ 38.6}} & 928.3 {\scriptsize $\pm$ 22.1} & \textbf{951.5 {\scriptsize $\pm$ 6.3}} & 944.9 {\scriptsize $\pm$ 8.7} & 930.4 {\scriptsize $\pm$ 6.8} & 948.8 {\scriptsize $\pm$ 7.6} & \textbf{950.6 {\scriptsize $\pm$ 4.8}} \\
          & Dog-Trot     & 800.3 {\scriptsize $\pm$ 63.8} & 867.7 {\scriptsize $\pm$ 34.3} & \textbf{884.8 {\scriptsize $\pm$ 36.4}} & 889.7 {\scriptsize $\pm$ 18.4} & \textbf{903.4 {\scriptsize $\pm$ 13.9}} & 849.4 {\scriptsize $\pm$ 111.5} & \textbf{851.3 {\scriptsize $\pm$ 75.1}} & 776.3 {\scriptsize $\pm$ 64.9} & 745.4 {\scriptsize $\pm$ 225.9} \\
          & Dog-Walk     & 923.9 {\scriptsize $\pm$ 9.8} & 924.7 {\scriptsize $\pm$ 8.2} & \textbf{926.6 {\scriptsize $\pm$ 6.5}} & 928.6 {\scriptsize $\pm$ 8.2} & \textbf{950.1 {\scriptsize $\pm$ 5.4}} & 942.1 {\scriptsize $\pm$ 3.8} & 937.5 {\scriptsize $\pm$ 6.0} & \textbf{938.3 {\scriptsize $\pm$ 4.3}} & 930.2 {\scriptsize $\pm$ 8.3} \\
          & Dog-Stand    & 976.2 {\scriptsize $\pm$ 2.2} & \textbf{980.9 {\scriptsize $\pm$ 4.2}} & 973.5 {\scriptsize $\pm$ 5.3} & 961.3 {\scriptsize $\pm$ 19.0} & \textbf{976.9 {\scriptsize $\pm$ 5.3}} & 956.4 {\scriptsize $\pm$ 14.5} & 956.3 {\scriptsize $\pm$ 12.1} & \textbf{972.3 {\scriptsize $\pm$ 9.6}} & 953.2 {\scriptsize $\pm$ 9.1} \\
          & Dog-Run      & 503.5 {\scriptsize $\pm$ 88.4} & \textbf{559.5 {\scriptsize $\pm$ 28.0}} & 538.1 {\scriptsize $\pm$ 68.1} & 541.1 {\scriptsize $\pm$ 126.0} & \textbf{579.1 {\scriptsize $\pm$ 77.6}} & 491.7 {\scriptsize $\pm$ 79.2} & \textbf{484.5 {\scriptsize $\pm$ 84.7}} & 451.3 {\scriptsize $\pm$ 53.8} & 445.0 {\scriptsize $\pm$ 68.4} \\
          \bottomrule
        \end{tabular}
        }
      \end{sc}
    \end{footnotesize}
  \end{center}
  \vskip -0.1in
\end{table*}

%% file: tables/full_norm_simba.tex
\begin{table*}[!ht]
  \caption{Performance comparison on SimbaV2~\cite{SimbaV2}, SimbaV2-SPL, and +\ours at $\text{UTD}=1$, $\text{UTD}=10$, and $\text{UTD}=20$. We report normalized mean $\pm$ std. The final rows report the aggregate Mean and IQM with 95\% CIs.}
  \label{tab:utd_comparison_simba}
  \begin{center}
    \begin{footnotesize}
      \begin{sc}
        \resizebox{\textwidth}{!}{
        \setlength{\tabcolsep}{2.0pt}
        \renewcommand{\arraystretch}{1.1} 
        \begin{tabular}{cl ccc ccc ccc}
          \toprule
          & & \multicolumn{3}{c}{UTD = 1} & \multicolumn{3}{c}{UTD = 10} & \multicolumn{3}{c}{UTD = 20} \\
          \cmidrule(lr){3-5} \cmidrule(lr){6-8} \cmidrule(lr){9-11}
          & Environment & 
          \shortstack{SimbaV2\\(Base)} & \shortstack{SimbaV2-SPL\\(Ours)} & \shortstack{+\ours\\(Ours)} & 
          \shortstack{SimbaV2\\(Base)} & \shortstack{SimbaV2-SPL\\(Ours)} & \shortstack{+\ours\\(Ours)} & 
          \shortstack{SimbaV2\\(Base)} & \shortstack{SimbaV2-SPL\\(Ours)} & \shortstack{+\ours\\(Ours)} \\
          \midrule
          \multirow{4}{*}{\rotatebox[origin=c]{90}{\shortstack[c]{Gym\\MuJoCo}}} 
          & Ant-v5       & 1.00 {\scriptsize $\pm$ 0.22} & \textbf{1.25 {\scriptsize $\pm$ 0.05}} & 1.20 {\scriptsize $\pm$ 0.10} & 0.87 {\scriptsize $\pm$ 0.32} & \textbf{1.27 {\scriptsize $\pm$ 0.08}} & \textbf{1.27 {\scriptsize $\pm$ 0.06}} & 0.91 {\scriptsize $\pm$ 0.15} & \textbf{1.29 {\scriptsize $\pm$ 0.05}} & 1.27 {\scriptsize $\pm$ 0.06} \\
          & Walker2d-v5  & \textbf{1.00 {\scriptsize $\pm$ 0.09}} & 0.97 {\scriptsize $\pm$ 0.07} & 0.88 {\scriptsize $\pm$ 0.09} & \textbf{0.95 {\scriptsize $\pm$ 0.06}} & 0.91 {\scriptsize $\pm$ 0.08} & 0.94 {\scriptsize $\pm$ 0.06} & \textbf{0.95 {\scriptsize $\pm$ 0.05}} & 0.92 {\scriptsize $\pm$ 0.08} & 0.88 {\scriptsize $\pm$ 0.15} \\
          & Hopper-v5    & \textbf{1.00 {\scriptsize $\pm$ 0.07}} & 0.91 {\scriptsize $\pm$ 0.12} & 0.96 {\scriptsize $\pm$ 0.07} & 1.02 {\scriptsize $\pm$ 0.15} & 1.03 {\scriptsize $\pm$ 0.09} & \textbf{1.04 {\scriptsize $\pm$ 0.05}} & 1.05 {\scriptsize $\pm$ 0.07} & \textbf{1.07 {\scriptsize $\pm$ 0.08}} & 1.03 {\scriptsize $\pm$ 0.08} \\
          & Humanoid-v5  & \textbf{1.00 {\scriptsize $\pm$ 0.02}} & \textbf{1.00 {\scriptsize $\pm$ 0.11}} & 0.92 {\scriptsize $\pm$ 0.07} & 1.15 {\scriptsize $\pm$ 0.07} & 1.10 {\scriptsize $\pm$ 0.12} & \textbf{1.20 {\scriptsize $\pm$ 0.05}} & 1.11 {\scriptsize $\pm$ 0.09} & \textbf{1.14 {\scriptsize $\pm$ 0.04}} & \textbf{1.14 {\scriptsize $\pm$ 0.04}} \\
          \midrule
          \multirow{7}{*}{\rotatebox[origin=c]{90}{DMC-Hard}} 
          & Humanoid-Walk     & 1.00 {\scriptsize $\pm$ 0.21} & 1.53 {\scriptsize $\pm$ 0.19} & \textbf{1.57 {\scriptsize $\pm$ 0.26}} & 1.80 {\scriptsize $\pm$ 0.19} & 1.86 {\scriptsize $\pm$ 0.20} & \textbf{1.87 {\scriptsize $\pm$ 0.14}} & 1.61 {\scriptsize $\pm$ 0.26} & 1.94 {\scriptsize $\pm$ 0.02} & \textbf{1.96 {\scriptsize $\pm$ 0.02}} \\
          & Humanoid-Run      & 1.00 {\scriptsize $\pm$ 0.19} & \textbf{1.53 {\scriptsize $\pm$ 0.23}} & 1.51 {\scriptsize $\pm$ 0.13} & 1.99 {\scriptsize $\pm$ 0.19} & \textbf{3.26 {\scriptsize $\pm$ 0.32}} & 3.00 {\scriptsize $\pm$ 0.71} & 2.07 {\scriptsize $\pm$ 0.68} & 2.98 {\scriptsize $\pm$ 0.72} & \textbf{3.59 {\scriptsize $\pm$ 0.34}} \\
          & Humanoid-Stand    & 1.00 {\scriptsize $\pm$ 0.54} & 1.33 {\scriptsize $\pm$ 0.19} & \textbf{1.42 {\scriptsize $\pm$ 0.06}} & 1.45 {\scriptsize $\pm$ 0.04} & \textbf{1.49 {\scriptsize $\pm$ 0.01}} & 1.48 {\scriptsize $\pm$ 0.01} & 1.46 {\scriptsize $\pm$ 0.01} & \textbf{1.49 {\scriptsize $\pm$ 0.01}} & \textbf{1.49 {\scriptsize $\pm$ 0.01}} \\
          & Dog-Trot     & 1.00 {\scriptsize $\pm$ 0.08} & 1.08 {\scriptsize $\pm$ 0.04} & \textbf{1.11 {\scriptsize $\pm$ 0.05}} & 1.11 {\scriptsize $\pm$ 0.02} & \textbf{1.13 {\scriptsize $\pm$ 0.02}} & 1.06 {\scriptsize $\pm$ 0.14} & \textbf{1.06 {\scriptsize $\pm$ 0.09}} & 0.97 {\scriptsize $\pm$ 0.08} & 0.93 {\scriptsize $\pm$ 0.28} \\
          & Dog-Walk     & \textbf{1.00 {\scriptsize $\pm$ 0.01}} & \textbf{1.00 {\scriptsize $\pm$ 0.01}} & \textbf{1.00 {\scriptsize $\pm$ 0.01}} & 1.01 {\scriptsize $\pm$ 0.01} & \textbf{1.03 {\scriptsize $\pm$ 0.01}} & 1.02 {\scriptsize $\pm$ 0.00} & \textbf{1.02 {\scriptsize $\pm$ 0.01}} & \textbf{1.02 {\scriptsize $\pm$ 0.01}} & 1.01 {\scriptsize $\pm$ 0.01} \\ 
          & Dog-Stand    & 1.00 {\scriptsize $\pm$ 0.00} & \textbf{1.01 {\scriptsize $\pm$ 0.00}} & 1.00 {\scriptsize $\pm$ 0.01} & 0.99 {\scriptsize $\pm$ 0.02} & \textbf{1.00 {\scriptsize $\pm$ 0.01}} & 0.98 {\scriptsize $\pm$ 0.02} & 0.98 {\scriptsize $\pm$ 0.01} & \textbf{1.00 {\scriptsize $\pm$ 0.01}} & 0.98 {\scriptsize $\pm$ 0.01} \\ 
          & Dog-Run      & 1.00 {\scriptsize $\pm$ 0.18} & \textbf{1.11 {\scriptsize $\pm$ 0.06}} & 1.07 {\scriptsize $\pm$ 0.14} & 1.08 {\scriptsize $\pm$ 0.25} & \textbf{1.15 {\scriptsize $\pm$ 0.15}} & 0.98 {\scriptsize $\pm$ 0.16} & \textbf{0.96 {\scriptsize $\pm$ 0.17}} & 0.90 {\scriptsize $\pm$ 0.11} & 0.88 {\scriptsize $\pm$ 0.14} \\
          \midrule
          \multicolumn{2}{c}{Mean}
          & \shortstack{1.00\\ \scriptsize (0.95, 1.04)} & \shortstack{\textbf{1.16}\\ \scriptsize (1.09, 1.22)} & \shortstack{1.15\\ \scriptsize (1.08, 1.22)} 
          & \shortstack{1.22\\ \scriptsize (1.12, 1.33)} & \shortstack{\textbf{1.38}\\ \scriptsize (1.22, 1.58)} & \shortstack{1.35\\ \scriptsize (1.20, 1.53)} 
          & \shortstack{1.20\\ \scriptsize (1.10, 1.32)} & \shortstack{1.34\\ \scriptsize (1.18, 1.52)} & \shortstack{\textbf{1.38}\\ \scriptsize (1.19, 1.61)} \\
          \addlinespace[0.2em] 
          \multicolumn{2}{c}{IQM}
          & \shortstack{1.01\\ \scriptsize (0.98, 1.04)} & \shortstack{\textbf{1.10}\\ \scriptsize (1.04, 1.18)} & \shortstack{1.09\\ \scriptsize (1.03, 1.18)} 
          & \shortstack{1.12\\ \scriptsize (1.05, 1.23)} & \shortstack{\textbf{1.16}\\ \scriptsize (1.09, 1.29)} & \shortstack{1.15\\ \scriptsize (1.07, 1.27)} 
          & \shortstack{1.09\\ \scriptsize (1.03, 1.19)} & \shortstack{\textbf{1.13}\\ \scriptsize (1.05, 1.26)} & \shortstack{1.12\\ \scriptsize (1.04, 1.26)} \\
          \bottomrule
        \end{tabular}
        }
      \end{sc}
    \end{footnotesize}
  \end{center}
  \vskip -0.1in
\end{table*}